\newcommand{\cA}{\mathcal A}
\newcommand{\cC}{\mathcal C}
\newcommand{\cO}{\mathcal O}
\newcommand{\cP}{\mathcal P}
\newcommand{\cQ}{\mathcal Q}
\newcommand{\cS}{\mathcal S}
\newcommand{\cX}{\mathcal X}
\newcommand{\chP}{\mathcal{\hat{P}}}
\newcommand{\hP}{\widehat{\mathcal P}} 
\newcommand{\E}{\mathbb{E}}
\newcommand{\aug}{\mathrm{aug}}
\newtheorem{lemma}{Lemma}
\newtheorem{rmk}{Remark}
\newtheorem{example}{Example}
\newtheorem{definition}{Definition}
\newtheorem{theorem}{Theorem}
\newtheorem{assum}{Assumption}
\begin{document}

%

%

\twocolumn[

\aistatstitle{Provably Efficient Sample Complexity  for Robust CMDP}

\aistatsauthor{ Sourav Ganguly \And Arnob Ghosh }

\aistatsaddress{ New Jersey Institute of Technology } ]
\begin{abstract}
\vspace{-0.09in}
We study the problem of learning policies that maximize cumulative reward while satisfying safety constraints, even when the real environment differs from a simulator or nominal model. We focus on robust constrained Markov decision processes (RCMDPs), where the agent must maximize reward while ensuring cumulative utility exceeds a threshold under the worst-case dynamics within an uncertainty set.
While recent works have established finite-time iteration complexity guarantees for RCMDPs using policy optimization, their sample complexity guarantees remain largely unexplored. In this paper, we first show that Markovian policies may fail to be optimal even under rectangular uncertainty sets unlike the {\em unconstrained} robust MDP. To address this, we introduce an augmented state space that incorporates the remaining utility budget into the state representation. Building on this formulation, we propose a novel Robust constrained Value iteration (RCVI) algorithm with a sample complexity of
  $\mathcal{\tilde{O}}(|S||A|H^5/\epsilon^2)$  achieving at most $\epsilon$ violation using a generative model where $|S|$ and $|A|$ denote the sizes of the state and action spaces, respectively, and $H$ is the episode length. To the best of our knowledge, this is the {\em first sample complexity guarantee} for RCMDP. Empirical results further validate the effectiveness of our approach.  
\end{abstract}
\vspace{-0.12in}
\section{Introduction}
\vspace{-0.09in}
Constrained Markov Decision Processes (CMDPs) provide a principled framework for handling feasibility concerns in sequential decision-making, where the agent seeks to maximize expected reward while ensuring that the expected constraint cost (or, utility) remains within a predefined safety boundary~\cite{altman1998constrained} (cf. (\ref{eq:cmdp})).  Thus, CMDPs have been widely applied to restrict agents from violating safety limits~\cite{qiu2020upper,padakandla2022data}. However, in many practical scenarios, algorithms are trained on simulators that do not perfectly match the real environment. As a result, policies that satisfy CMDP constraints in simulation may violate them when deployed in reality.

To address this issue, recent works~\cite{Epigraph,ganguly_neurips} have studied robust CMDPs (RCMDPs), where the goal is to maximize the worst-case reward while ensuring that the worst-case utility remains above a threshold. RCMDPs are significantly more challenging than standard CMDPs because strong duality fails \cite{ma2025rectified,wang2022robust}, rendering classical primal–dual approaches which achieve $O(1/\epsilon^2)$ sample complexity guarantee for CMDP \cite{vaswani2022near} using a generative model, inapplicable. 

Existing results~\cite{Epigraph,ganguly_neurips} establish an iteration complexity of $\tilde{\mathcal{O}}(1/\epsilon^4)$, but they implicitly require evaluating the worst-case value function in each policy update, leading to at least $\tilde{\mathcal{O}}(1/\epsilon^8)$ sample complexity. A recent work \cite{ghosh2024sample} achieves $\tilde{\mathcal{O}}(1/\epsilon^2)$ sample complexity but relies on access to a policy optimization oracle, which is generally impractical. {\em More importantly, all these existing works rely on Markovian policy which we show that can be sub-optimal.}  We are interested in the following question:

\begin{center}
{\em Can we achieve $\tilde{\mathcal{O}}(1/\epsilon^2)$ sample complexity for RCMDPs using  a generative model without relying on a policy optimization oracle?}
\end{center}

We address this question by studying the following episodic robust CMDP problem:
\begin{align}\label{eq:cmdp_robust}
\max_{\pi} ; \min_{P\in \mathcal{P}} V_{r,1}^{\pi,P}(x) \quad \text{subject to } \min_{P\in \mathcal{P}} V_{g,1}^{\pi,P}(x)\geq b,
\end{align}
where $V_{r,1}^{\pi,P}(x)$ and $V_{g,1}^{\pi,P}(x)$ denote the expected cumulative reward and utility, respectively, starting from step $h=1$ and state $x$ under transition model $P$, and $\mathcal{P}$ is the uncertainty set (see (\ref{eq:uncertainty})).

\begin{definition}\label{def:sample_complexity}
We seek a policy $\hat{\pi}$ such that after $N_{\text{tot}}$ samples, with high probability,
\begin{align}
& \mathrm{Sub\text{-}Opt}(\hat{\pi}) := \min_{P}V_r^{\pi^,P}(x)-\min_PV^{\hat{\pi},P}_r(x) \leq \epsilon, \nonumber\\
& \mathrm{Violation}(\hat{\pi}) := (b-\min_P V_g^{\hat{\pi},P}(x)) \leq \epsilon,
\end{align}
\end{definition}
where $\pi^*$ is the optimal policy for (\ref{eq:cmdp_robust}). In contrast to unconstrained settings, here both sub-optimality and violation must be controlled.

\textbf{Our Contributions}: 
\begin{itemize}[leftmargin=*]
\item  We show that Markovian policies can be sub-optimal (Lemma~\ref{lem:sub_optimality}) for RCMDPs even under rectangular uncertainty sets, unlike in the unconstrained robust MDP setting. This is the {\em first result} (and contrasts the existing works) showing that the Markovian policies may not achieve optimality unlike the non-robust CMDP scenario.

\item  We propose augmenting the state with the remaining utility budget and introduce a Robust Constrained Value Iteration (RCVI) method. RCVI optimizes the estimated reward value function subject to utility constraints in the augmented space, and reduces to solving a  linear programming problem at every step.

\item  We prove that RCVI achieves a sample complexity of $\mathcal{O}(|S||A|H^5/\epsilon^2)$
where $|S|$ and $|A|$ are the state and action cardinalities, and $H$ is the horizon length for popular choices of uncertainty sets TV-distance, $\chi^2$ distance, and KL-divergence.  This is the first sample complexity guarantee for RCMDPs without requiring an oracle, and it matches the best-known guarantees for unconstrained robust MDPs.

\item  A vast set of experiments demonstrate the practical effectiveness of our approach compared to existing approaches for RCMDP.
\end{itemize}
\vspace{-0.1in}
\subsection{Other Related Works}
\vspace{-0.07in}
\textbf{CMDP:} The convex nature of the state-action occupancy measure ensures the existence of a zero duality gap between the primal and dual problem for CMDP, making them well-suited for solution via primal-dual methods~\cite{altman1998constrained,paternain2022safe,stooke2020responsive,liang2018accelerated,tessler2018reward,yu2019convergent,zheng2020constrained,efroni2020exploration,auer2008near}. The convergence bounds and rates of convergence for these methods have been extensively studied in~\cite{ding2020natural,li2024faster,liu2021policy,ying2022dual,wei2022triple,ghosh2022provably}.
Beyond primal-dual methods, LP-based and model-based approaches have been explored to solve the primal problem directly~\cite{achiam2017constrained,efroni2020exploration, chow2018lyapunov, dalal2018safe, xu2021crpo,yang2020projection}. 
However, the above approaches cannot be extended to the RCMDP case. 

\textbf{Robust MDP:} For robust (unconstrained) MDPs (introduced in \cite{iyengar2005robust}),  recent studies obtain the sample complexity guarantee using robust dynamic programming approach~\cite{panaganti2022sample,yang2022toward,shi2023curious,clavier2023towards,zhou2021finite}.  Model-free approaches are also studied ~\cite{shi2023curious, wang2023finite, wang2023policy,wang2021online, wang2023model,wang2023achieving, liang2023single, liu2022distributionally}. However, extending these methods to Robust Constrained MDPs (RCMDPs) presents additional challenges. The introduction of constraint functions complicates the optimization process as one needs to consider the worst value function both for the objective and the constraint. 

\textbf{RCMDP:} Unlike non-robust CMDPs, there is limited research available on robust environments. In \cite{wang2022robust,ma2025rectified}, it was shown that the optimization function for RCMDPs is not convex, making it difficult to solve the Lagrangian formulation, unlike in standard CMDPs. Some studies have attempted to address this challenge using a primal-dual approach \cite{mankowitz2020robust, wang2022robust} without any iteration complexity guarantee. 
\cite{zhangdistributionally} proposed a primal-dual approach to solve RCMDP under the strong duality by restricting to the categorical randomized policy class. However, they did not provide any iteration complexity guarantee.
As we discussed, \cite{Epigraph,ganguly_neurips,ma2025rectified} only consider iteration complexity and does not provide sample complexity guarantee. Moreover, all the above works consider Markovian policies only. 
\vspace{-0.1in}
\section{Problem Formulation}
\vspace{-0.08in}
\textbf{Constrained Markov Decision Problem}: A  constrained Markov Decision Process (CMDP) is characterized by the tuple $\{S,A,R,G,P,H\}$ where $S$ is the state-space, $A$ is the action-space; $R=\{r_h(s,a)\}$ and $G=\{g_h(s,a)\}$ are respectively the collection of rewards and utility for state-action pair $(s,a)$ at step $h\in [H]$. $H$ is the number of steps in an episode. $P_h$ denotes the transition probability $P_{h,s,a}(s^{\prime})=P_h(s^{\prime}|s,a)$ at step $h$. Without loss of generality, we assume that $r$, and $g$ are {\em deterministic}, and $|r(x,a)|\leq 1$, and $|g(x,a)|\leq 1$.   In a CMDP \cite{efroni2020exploration,ghosh2022provably,ding2021provably,wei2022triple} setup one seeks to solve the following optimization problem. Our approach can be readily extended to the scenario where $r$ and $g$ are stochastic, and the distribution of $g$ is known. 
\begin{align}\label{eq:cmdp}
\max_{\pi} V_{r,1}^{\pi,P}(x)\quad \text{subject to }V_{g,1}^{\pi,P}(x)\geq b
\end{align}
where $V_{r,t}^{\pi,P}(x)=\mathbb{E}_{\pi,P}[\sum_{h=t}^{H}r_h(x_h,a_h)|x_t=x]$ and $V_{g,t}^{\pi,P}(x)=\mathbb{E}_{\pi,P}[\sum_{h=t}^Hg_h(x_h,a_h)|x_t=x]$ are the expected discounted cumulative reward and the expected discounted cumulative utility respectively following the policy $\pi$ starting from time $t\in [H]$. We also denote $V_{j,t}^{\pi,P}(x)=\mathbb{E}_{\pi,P}[\sum_{h=t}^{H}j_h(x_h,a_h)|x_t=x]$ for $j=r,g$.  The optimization problem in (\ref{eq:cmdp}) denotes that we want to maximize the cumulative reward subject to the constraint that expected cumulative utility is above a certain threshold. 

\begin{example}
 Consider the setup where the agent wants to maximize the reward while being at the safe state. In this case, the utility is $g(x)=1$ if $x$ is safe and $0$ otherwise. This problem can be cast as a CMDP.
\end{example}
\textbf{Robust CMDP}: We often use a simulator  to train our policy  before implementing in the real-life. However, the simulator setup and the real-life environment are often different, hence, we need a robust policy so that the policy can perform reasonably well in the real-life setup. In particular, we seek to solve the robust CMDP problem described in (\ref{eq:cmdp_robust}). $\rho>0$, and is known.

In (\ref{eq:cmdp_robust}), $\mathcal{P}$ denotes the set of all transition probabilities. In particular, different transition probability defines different set of randomness inherent in the true environment.  The problem in (\ref{eq:cmdp_robust}) defines that we seek to maximize the worst case expected cumulative reward subject to the constraint that the worst case cumulative utility is above the threshold $b$. {\em The objective of the robust CMDP formulation is that constraints are satisfied even if there are mismatch between training and evaluation the constraint is satisfied while maximizing the reward among the worst of all the transition probability models.} Such robustness guarantee is important for implementing RL algorithms in practice. Consider the example we described above, there, the solution in (\ref{eq:cmdp_robust}) ensures that  the policy will still be safe even if there is a mismatch. 

Note that our analysis and approach can be easily applicable to the setting where $\max_P V_g^{\pi,P}\leq b$ as well where $g$ denotes the cost instead of utility at time-step $h$, and we are interested in the constraint such that the worst-case cost is below a certain threshold $b$. 
{\em For notational simplicity, we interchangably denote ${V}^{\pi}_{j}(x)=\min_{P\in \mathcal{P}}V^{\pi,P}_{j}(x)$ for $j=r,g$, and all $h$.} Note that the worst case model $P$ indeed depends on the policy which brings additional challenge.

\textbf{Uncertainty Set on models}: Similar to the one considered in the unconstrained episodic MDP setup \cite{xu2023improved}, we consider a set of transition probability models within a ball centered around the nominal model $P^0_{h,s,a}$ $\forall (h,s,a)\in  [H]\times S\times A$. We consider the uncertainty set $\mathcal{P}=\bigotimes_{(h,s,a)\in [H]\times  S\times A}\mathcal{P}_{h,s,a}$ such that 
\begin{align}\label{eq:uncertainty}
\mathcal{P}_{h,s,a}=\{P\in \Delta(S): D(P,P^0_{h,s,a})\leq \rho\}
\end{align}
where $D$ is the distance metric between two probability measures, and $\rho$ is the radius of the uncertainty set. This uncertainty set satisfies the $(s,a)$-rectangularity assumption \cite{iyengar2005robust,panaganti2022sample}. Our analysis can be extended trivially to $s$-rectangularity assumption as well \cite{yang2019sample}. Without rectangualarity assumption, even for unconstrained robust MDP, obtaining optimal policy is NP-hard problem \cite{wiesemann2013robust}. {\em We do not assume that that we know the nominal model $P^0$, and thus we do not know the uncertainty set of transition kernels.} We consider the following distance metrics:

\begin{enumerate}[leftmargin=*]
\item \textbf{Total Variation uncertainty set}: Let $\mathcal{P}^{TV}=\bigotimes_{(h,s,a)\in [H]\times S\times A}\mathcal{P}^{TV}_{h,s,a}$ be the uncertainty set defined in (\ref{eq:uncertainty}) with total variation distance \cite{xu2023improved}
\begin{align}
D_{TV}(P,P^0_{h,s,a})=(1/2)||P-P^0_{h,s,a}||_{1}
\end{align}
\item \textbf{Chi-squared uncertainty set}: Let $\mathcal{P}^{\chi}=\bigotimes_{(h,s,a)\in [H]\times S\times A}\mathcal{P}^{\chi}_{s,a}$  be the uncertainty set defined in (\ref{eq:uncertainty}) with chi-squared distance \cite{xu2023improved}
\begin{align}
D_{\chi}(P,P^0_{h,s,a})=\sum_{s^{\prime}}\dfrac{(P(s^{\prime})-P_{h,s,a}^0(s^{\prime}))^2}{P_{h,s,a}^0(s^{\prime})}
\end{align}
\item \textbf{KL-uncertainty set}: Let $\mathcal{P}^{KL}=\bigotimes_{(h,s,a)\in  [H]\times S\times A}\mathcal{P}^{KL}_{h,s,a}$  be the uncertainty set defined in (\ref{eq:uncertainty}) with KL-divergence metric \cite{kullback1951information}
\begin{align}
D_{KL}(P,P_{h,s,a}^0)=\sum_{s^{\prime}}P(s^{\prime})\log\left(\dfrac{P(s^{\prime})}{P^0_{h,s,a}(s^{\prime})}\right)
\end{align}
\end{enumerate}

\textbf{Generative Model}:
We do not know the uncertainty set, rather, we assume that we have access to a generative model or a simulator where the agent submits a query $(h,s,a)\in [H]\times S \times A$, and receives $s^{\prime}\sim P^0_{h,s,a}(\cdot)$, $r_h(s,a)$, and $g_h(s,a)$ for given $h$. Accessing the generative model or simulator is a common assumption even for unconstrained robust MDP \cite{xu2023improved,panaganti2022sample,yang2019sample}, and constrained MDP \cite{vaswani2022near}. {\em In fact, finding the sample complexity guarantee without the simulator is still an open question even for the unconstrained robust MDP}.

\textbf{Learning Goal}: Since we do not know the uncertainty set, we cannot obtain an optimal policy from the beginning. Rather, the goal is to obtain a policy $\hat{\pi}$ such that for a given $\epsilon>0$, using $N_{tot}$ samples or queries from the generative model such that $\mathrm{Sub-Opt}(\hat{\pi})\leq \epsilon$, and $\mathrm{Violation}(\hat{\pi})\leq \epsilon$ (see Definition~\ref{def:sample_complexity}). Unlike the unconstrained robust MDP, one needs to ensure that both the violation and the sub-optimality gap are small.

\textbf{Robust Bellman Consistency equation}: Directly applying the result from \cite{iyengar2005robust}, we have for any $\pi$, for $j=r,g$, and for all $s$,
\begin{align}\label{eq:robust_bellman}
V^{\pi}_{j,h}(s)=\sum_a \pi(a|s)[j_h(s,a)+\gamma L_{\cP_{h,s,a}}V_{j,h+1}^{\pi}]
\end{align}
where 
$L_{\cP_{h,s,a}}V=\inf\{PV:P\in \cP_{h,s,a}\}$.
\vspace{-0.1in}
\subsection{Markovian Policy can be sub-optimal}
\vspace{-0.07in}
For the unconstrained case, the Markovian policy is optimal for rectangular uncertainty set. However, in this counter example, we show that Markovian policy may no longer be optimal for the RCMDP  even for the rectangular uncertainty sets. 

\begin{lemma}\label{lem:sub_optimality}
Markovian policies can be sub-optimal for rectangular uncertainty sets. 
\end{lemma}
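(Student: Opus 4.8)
The plan is to prove the lemma by exhibiting an explicit RCMDP instance in which the best Markovian policy satisfying the robust utility constraint is strictly dominated in robust reward by a history-dependent policy. I would work with a short horizon (it suffices to take $H=2$ or $H=3$) and a small state space, and use the total-variation uncertainty set of (\ref{eq:uncertainty}) so that the worst-case operator $L_{\cP_{h,s,a}}$ has a closed form; the same instance can then be checked to carry over to the $\chi^2$ and KL balls. The conceptual target is to break the mechanism that makes Markovian policies optimal for non-robust CMDPs: there, the feasible set of (reward, utility) pairs is the image of a polytope of occupancy measures under linear maps, so randomizing across time steps (a Markovian operation) traces out the whole Pareto frontier. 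In the robust setting the maps $\pi \mapsto \min_{P} V^{\pi,P}_{r}$ and $\pi \mapsto \min_{P} V^{\pi,P}_{g}$ are no longer linear (nor in general concave) along Markovian mixtures, and the worst-case model for the reward generally differs from that for the utility, so the frontier achievable by Markovian policies can be a strict subset of what history allows.

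Concretely, first I would pick two deterministic Markovian policies $\pi_A$ and $\pi_B$ engineered so that $\pi_A$ has high robust reward but robust utility strictly below the threshold $b$, while $\pi_B$ is robustly feasible but has low robust reward, and --- this is the crucial design requirement --- so that the transition model attaining $\min_P V_g^{\pi_A,P}$ is incompatible with the one attaining $\min_P V_g^{\pi_B,P}$ (they stress different $(h,s,a)$ cells of the rectangular set). Using the robust Bellman recursion (\ref{eq:robust_bellman}) together with the TV closed form, I would compute $\min_P V^{\pi,P}_{r}$ and $\min_P V^{\pi,P}_{g}$ for the whole (low-dimensional) family of Markovian randomized policies and read off $R^{\mathrm{Mark}} := \max\{\min_P V_r^{\pi,P} : \pi \text{ Markovian},\ \min_P V_g^{\pi,P}\ge b\}$; keeping the instance small makes this an explicit optimization over a couple of randomization parameters despite the non-convexity.

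Then I would introduce the history-dependent ``commit'' policy $\sigma$ that flips a single coin at $h=1$, playing $\pi_A$ throughout with probability $\alpha$ and $\pi_B$ throughout with probability $1-\alpha$. For any fixed $P$ its values are linear in the coin bias, $V^{\sigma,P}_{j}(x)=\alpha V^{\pi_A,P}_{j}(x)+(1-\alpha)V^{\pi_B,P}_{j}(x)$ for $j=r,g$, and hence by superadditivity of the infimum,
\[
\min_P V^{\sigma,P}_{g}(x)\;\ge\;\alpha\min_P V^{\pi_A,P}_{g}(x)+(1-\alpha)\min_P V^{\pi_B,P}_{g}(x),
\]
with strict inequality precisely because the two worst-case models conflict, so no single $P$ can hurt both policies at once. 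This slack is what lets me choose $\alpha$ large enough that $\sigma$ is robustly feasible, $\min_P V^{\sigma,P}_g(x)\ge b$, while its robust reward still exceeds $R^{\mathrm{Mark}}$; the point that $\sigma$ is genuinely non-Markovian is that it correlates the step-$1$ and step-$2$ action choices through the retained coin, a correlation that no per-step Markovian randomization can reproduce.

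The main obstacle I anticipate is twofold and lies entirely in the instance design rather than in any deep argument. First, I must guarantee that the worst-case models for the two policies genuinely clash so that the displayed inequality is strict and the resulting feasibility slack is large enough to yield $\min_P V^{\sigma,P}_r(x)>R^{\mathrm{Mark}}$; this forces a careful choice of $\rho$, $b$, and the reward/utility values, and is where the closed form of $L_{\cP_{h,s,a}}$ for TV is essential. Second, because the Markovian robust feasible region is non-convex, the bound $R^{\mathrm{Mark}}$ must be established against \emph{all} Markovian policies, not merely against $\pi_A,\pi_B$ and their naive per-step mixtures; I would control this by making the state and action spaces small enough that the Markovian robust value functions are explicit rational functions of the randomization weights, so the constrained maximum can be bounded in closed form. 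Once both points are pinned down, the strict gap $\min_P V^{\sigma,P}_r(x) > R^{\mathrm{Mark}}$ shows that no Markovian policy is optimal, establishing the lemma.
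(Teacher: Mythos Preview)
Your plan is plausible and could be carried through, but it takes a route that is both mechanically and conceptually different from the paper's. The paper gives a fully concrete four-state, two-action, $H=3$ example under the TV ball: from $s_1$ the nominal model branches to $s_2$ (which yields reward~$1$) or $s_2'$ (which yields utility~$1$), and both lead deterministically to $s_3$, where action $a$ gives reward~$1$ and action $b$ gives utility~$1$. The optimal policy conditions on the \emph{observed state at step~$2$}: if the path went through $s_2'$ the utility budget is already met and one plays $a$ at $s_3$, otherwise one plays $b$. This yields reward~$1$ and utility~$1$ on \emph{every} sample path for \emph{every} $P$, so no worst-case calculation is needed on that side; any Markovian policy must randomize at $s_3$ with a fixed $q$, and the adversary shifts the $s_1$-transition in opposite directions for the reward and utility objectives, producing a gap of order~$\rho$. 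Your mechanism instead correlates \emph{randomization} across steps via an initial coin flip, which is a different source of non-Markovianity. Two consequences follow. First, in the paper's own instance your commit policy degenerates to the Markovian one (since $\pi_A$ and $\pi_B$ can differ only at the single state $s_3$), so you would have to engineer a different example in which the policies diverge early and the commitment correlation is essential --- doable, but this is precisely the hard instance-design step you flag as the main obstacle, whereas the paper sidesteps it entirely. Second, and more important for the paper's narrative, the paper's counterexample is tailor-made to motivate its fix: the dominating policy it exhibits becomes Markovian exactly once the accumulated utility is appended to the state, i.e.\ it lies in~$\Pi_M^{\aug}$. Your commit policy does not become Markovian in the augmented space and would instead point toward mixture/categorical policy classes, a direction the paper explicitly sets aside.
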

\begin{proof}
\begin{figure}[h!]
\centering
\begin{tikzpicture}[->, >=stealth, node distance=1.15cm, thick, state/.style={circle, draw, minimum size=1cm}]
  \node[state] (s0) {$s_1$};
  \node[state, right of=s0, yshift=1.2cm, xshift=2.5cm] (s1) {$s_2$};
  \node[state, right of=s0, yshift=-1.2cm, xshift=2.5cm] (s1p) {$s_2'$};
  \node[state, right of=s1, xshift=2.5cm, yshift=-1.2cm] (s2) {$s_3$};

  \path (s0) edge[bend left=15] node[above left] {$\frac{1}{2}$} (s1);
  \path (s0) edge[bend right=15] node[below left] {$\frac{1}{2}$} (s1p);
  \path (s1) edge node[above] {$1$} (s2);
  \path (s1p) edge node[below] {$1$} (s2);
\end{tikzpicture}
\caption{Transition diagram for the nominal model of CMDP.}
\label{fig:example}
\end{figure}
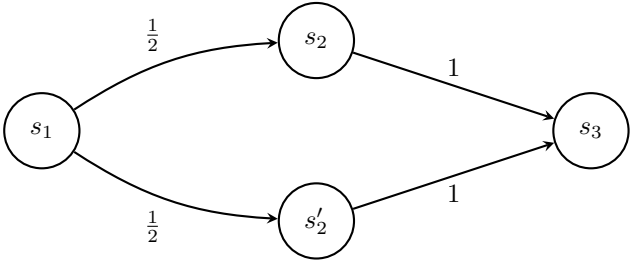
Consider the CMDP in Figure~\ref{fig:example} where the state space is $\cS = \{s_1, s_2, s_2', s_3\}$, the action space is $\cA = \{a, b\}$, and the horizon is $H = 3$. Here, $b=1$. The nominal transition probabilities are depicted in Figure~\ref{fig:example} and are independent of the actions taken at the states.  $P_0(s_2|s_1,\cdot)=P_0(s_2^{\prime}|s_1,\cdot)= 1/2$. The reward and utility functions are zero everywhere except at the following state-action pairs:
\begin{align*}
    r(s_2, \cdot) = 1, \quad g(s_2', \cdot) = 1, \quad r(s_3, a) = 1, \quad g(s_3, b) = 1.
\end{align*}
Notice that action $a$ maximizes the expected reward, while action $b$ maximizes the utility. The uncertainty set is given by the $TV$ distance. At state $s_3$, the if the policy depends on the augmented utility, then, if it has visited $s_2$ (or, the total remaining budget is $b-\sum_{h=1}^2g_h(s_h,a_h)=0$) then it will choose $a$. On the other hand, if it has visited $s_2'$, or the total utility encountered is $0$, then it would choose $b$. Thus, for every sample path, the total utility is $1$, and, the total reward is $1$. One can see that this is optimal.

Now, if the policy only depends on the state, then, it cannot distinguish whether it traverses through $s_2$ or $s_2'$ when it reaches $s_3$. Assuming that the policy at state $s_3$ is $\pi(b|s_3)=q$. For the utility, the worst case model is realized when $P_0(s_2|s_1,\cdot)=1/2+\rho/2$. Hence, considering the worst-case scenario, the total expected utility is
\begin{align*}
(1/2+\rho/2).0+(1/2-\rho/2).1+q.1
\end{align*}
Since the total utility has to be greater than or equal to $1$, then, $q\geq 1/2+\rho/2$. For the reward, the worst-case model is $P_0(s_2|s_1,\cdot)=1/2-\rho/2$. Thus, the worst case expected total reward is 
\begin{align*}
(1/2+\rho/2).0+(1/2-\rho/2).1+1-q\leq 1-\rho
\end{align*}
Thus, it is sub-optimal. Note that as $\rho\rightarrow 0$, the sub-optimality gap reduces, and goes to $0$ validating that this does not arise for CMDP scenario.
\end{proof}
Since we show that Markovian policy is no longer optimal, we will consider augmented state-space augmented by the total utility encountered so far. This above results also show that existing works on RCMDP \cite{Epigraph,ghosh2024sample,ganguly_neurips,ma2025rectified} are all sub-optimal as they only consider Markovian class of policies in the original state-space.
\vspace{-0.1in}
\section{Augmented RCMDP}
\label{sec:thm1}
\vspace{-0.07in}
To address the non-markovian policies, we consider an augmented RCMDP $(\cS^{\aug}, \cA, \mathbb{P}^{\aug}, H, r,g^{\aug})$ by appending a budget variable to the state and modifying the underlying utility function~\cite{bauerle2011markov,wang2024reductions}. More specifically, we augment the state space with a budget variable $c_h$ at horizon $h$ defined by 
$c_{h} = b-\sum_{{h'}=1}^{h-1} g_{h'}(s_{h'},a_{h'})$, where $c_1 = b$.  Note that $c_{h} \in [-H,H]$.

We define the augmented utility function $g_h^{\aug}$ by $g_h^{\aug}(s, c, a) = 0$ for $h \leq H$, and $g_{H+1}^{\aug}(s, c, a) = -c$. 
Note that the transition probability for the augmented CMDP problem is given by $\mathbb{P}_{h}^{\aug}(\cdot,c^{\prime}|s,c,a)=\mathbb{P}_h(\cdot|s,a)$
for $c^{\prime}=c-g_h(s,a)$, and $\mathbb{P}_{h}^{\aug}(\cdot,c^{\prime}|s,c,a)=0$, otherwise, as $g_h$ is deterministic. 

The agent focuses on Markov policies defined over the augmented state space, denoted by
\begin{align*}
    &\Pi_{\text{M}}^\aug = \bigg\{\pi = \{\pi_h(\cdot \mid \cdot)\}_{h=1}^H : \pi_h(\cdot \mid s_h, c_h) \in \Delta(\mathcal{A}),\\
    &\qquad\qquad \forall h \in [H] \text{ and } c \in[-H,H] \bigg\}.
\end{align*}
We show that for RCMDP, this augmented class of policies is enough. Note that here we do not need to consider the entire history of the trajectory rather only the available utility budget, hence, it is computationally less intensive. 
For a Markov policy $\pi$ in the augmented state space, abusing the notation, let $Q^{\pi,P}_{g,h}$ and $V^{\pi,P}_{g,h}$ denote the augmented state-action value function and the augmented state-value function, respectively.  By definition,
\begin{align*}
Q^{\pi,P}_{g,h}(s,c,a) &\!=\! \\
     \mathbb{E}_{P}\big[ \sum_{h'=h}^{H+1}& g^\aug_{h'}(s_{h'},c_{h'},a_{h'}) \big|
    s_h=s, c_h=c, a_h=a \big],\allowdisplaybreaks\\
V^{\pi,P}_{g,h}(s_h,c_h) &\!=\! \mathbb{E}\big[ \sum_{h'=h}^{H+1} g^\aug_{h'}(s_{h'},c_{h'},a_{h'}) \big|
    s_h=s, c_h=c \big].
\end{align*}
Note that here $V^{\pi,P}_{g,H+1}(s_{H+1},c_{H+1})=-c_{H+1}$ independent of state. Hence, $V^{\pi,P}_{g,H+1}(\cdot,c_{H+1})\geq 0$ means that the policy is feasible. 

Finally, for a Markov policy $\pi$ in the augmented MDP, the functions $Q^{\pi,P}_{g,h}$ and $V^{\pi,P}_{g,h}$ satisfy robust standard dynamic programming equations for rectangular uncertainty set:
\begin{align*}
    &Q^{\pi}_{g,h}(s_h,c_h,a_h) = L_{\cP_{h,s,a}} V_{g,h+1}^{\pi}(\cdot,c_{h}-g_h),\\
    &V^{\pi}_{g,h}(s_h,c_h) = \sum_{a\in\cA} \pi(a \mid s_h, c_h) Q^{\pi}_{g,h}(s_h,c_h,a).\\
    &Q^{\pi}_{r,h}(s_h,c_h,a_h)=r_h(s_h,a_h)+L_{\cP_{h,s,a}}V_{r,h+1}^{\pi}(\cdot,c_h-g_h)
\end{align*} 
Using the dual-representation  one can find an effective way to compute the worst case value function for the popular rectangular uncertainty set \cite{panaganti2022robust}. 

We now convert the RCMDP problem into an equivalent form
\begin{align}\label{eq:aug_prob}
\max_{\pi\in \Pi^{\aug}}\min_{P}V_{r,1}^{\pi,P}(s,b),\quad \text{s.t.}\min_{P}V_{g,1}^{\pi,P}(s,b)\geq 0
\end{align}
Note that the advantage of the augmented state-space is that at a step $h$, the optimal policy can be found by
\begin{align}\label{eq:step_solve}
\max_{\pi}\min_{P}V_{r,h}^{\pi,P}(s_h,c_h),  \text{s.t. }\min_{P}V_{g,h}^{\pi,P}(s_h,c_h)\geq 0
\end{align}
if there exists a feasible policy at state $(s_h,c_h)$. Hence, it becomes a per-step constrained problem in the augmented domain. Hence, the optimal policy $\pi^*$ in the augmented state-space would solve (\ref{eq:step_solve}) at every time step $h$. {\em Also, note that if the uncertainty set satisfies rectangularity assumption (cf.(\ref{eq:uncertainty})), it also satisfies rectangularity in the augmented state-space domain} if the distribution of $g_h$ is known. Next, we show that the Markovian policy in the augmented state is indeed optimal.
\begin{theorem}
\label{thm:markov-suff-c}
For the RCMDP problem in (\ref{eq:cmdp_robust}), Markovian policy in the augmented space $\Pi_{\aug}$ are sufficient.
\end{theorem}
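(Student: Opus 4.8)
The plan is to prove sufficiency in two stages: first, that the augmented reformulation (\ref{eq:aug_prob}) is genuinely equivalent to the original RCMDP (\ref{eq:cmdp_robust}) on the \emph{full} class of history-dependent policies; and second, that within the augmented MDP a backward-induction argument produces an optimal policy that depends on the history only through the augmented state $(s_h,c_h)$, i.e. an element of $\Pi^{\aug}$.

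For the first stage, I would observe that $c_h$ is a deterministic function of the past trajectory (and of $b$), so appending it to the state does not alter the realizable trajectory law under any fixed model $P$; hence every history-dependent policy for (\ref{eq:cmdp_robust}) corresponds to one for the augmented MDP with identical per-$P$ value functions, and conversely. Because $g^{\aug}_{h}\equiv 0$ for $h\le H$ and $g^{\aug}_{H+1}(s,c,a)=-c$, one computes $V_{g,1}^{\pi,P}(s,b)=\E_P[-c_{H+1}]=\E_P[\sum_{h=1}^H g_h(s_h,a_h)]-b$, which equals $V_{g,1}^{\pi,P}(s)-b$ in the original model. Taking $\min_P$ on both sides, the augmented feasibility $\min_P V_{g,1}^{\pi,P}(s,b)\ge 0$ is exactly the original constraint $\min_P V_{g,1}^{\pi,P}(s)\ge b$. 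Since the augmented reward is unchanged and $\mathbb{P}^{\aug}_h$ marginalizes to $\mathbb{P}_h(\cdot\mid s,a)$ on the $S$-coordinate, the reward objectives coincide as well, reducing the claim to showing that augmented-Markovian policies suffice for (\ref{eq:aug_prob}) inside the augmented MDP.

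For the second stage I would run backward induction over $h=H+1,\dots,1$. The starting observation is that the augmented uncertainty set remains $(s,a)$-rectangular: since $g_h$ is deterministic, the budget update $c'=c-g_h(s,a)$ is deterministic, so $\mathbb{P}^{\aug}_h$ merely appends a fixed coordinate to the rectangular $\mathbb{P}_h$ of (\ref{eq:uncertainty}). Consequently the robust DP equations recorded above hold and the operator $L_{\cP_{h,s,a}}$ may be evaluated locally at each augmented triple. I would then define, by backward recursion, the feasible set $\cF_h\subseteq S\times[-H,H]$ of augmented states from which a feasible continuation exists, with $\cF_{H+1}=\{(s,c):c\le 0\}$, together with the optimal constrained robust reward value $V^{*}_{r,h}(s,c)$ obtained from the per-step problem (\ref{eq:step_solve}). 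The inductive hypothesis is that $\cF_{h+1}$ and $V^{*}_{r,h+1}$ depend on the past only through $(s_{h+1},c_{h+1})$; since $c_{h+1}=c_h-g_h(s_h,a_h)$ and the worst-case next-state law are both determined by $(s_h,c_h,a_h)$ alone, the optimizer $\pi_h(\cdot\mid s_h,c_h)$ of (\ref{eq:step_solve}) — which reduces to an LP — likewise depends only on $(s_h,c_h)$, closing the induction and yielding an augmented-Markovian optimal policy.

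The main obstacle is the coupling between the inner worst-case minimization and the constraint: the model minimizing the reward value generally differs from the one minimizing the utility value, and this mismatch is precisely what makes Markovian policies in the \emph{original} space sub-optimal (Lemma~\ref{lem:sub_optimality}). The point to drive home is that the budget augmentation resolves this by tracking accrued utility pathwise, so feasibility becomes a state-wise condition $(s_h,c_h)\in\cF_h$ checkable independently of the reward objective, while rectangularity lets the two worst-case operators be taken locally and consistently. The remaining delicate step is the optimality direction: to show that no history-dependent policy beats the constructed augmented-Markovian one, I would verify a robust Bellman optimality inequality $\min_P V_{r,h}^{\pi,P}(s,c)\le V^{*}_{r,h}(s,c)$ for every feasible history-dependent $\pi$ by backward induction on $h$, using rectangularity to push the $\min_P$ inside the recursion. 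Combined with the fact that $V^{*}$ is attained within $\Pi^{\aug}$, this yields equality and completes the proof.
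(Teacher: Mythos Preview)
Your argument takes a different route from the paper. The paper proceeds by \emph{Markovization}: for any history-dependent $\pi$ it defines the conditional action kernel $\alpha^\pi_h(a\mid x):=\Pr_\pi(A_h{=}a\mid X_h{=}x)$ on the augmented state $x=(s,c)$, and then argues --- using that rectangularity makes the robust stage operator depend on the policy only through these conditionals --- that both the robust reward $R(\pi)$ and the robust utility $G(\pi)$ satisfy the same backward recursion as $R(\alpha^\pi)$ and $G(\alpha^\pi)$. Defining the Markov policy $\mu_h(\cdot\mid x):=\alpha^\pi_h(\cdot\mid x)$ therefore preserves feasibility and objective value simultaneously, and sufficiency follows in one line without ever constructing an optimal policy or comparing values across different policies.

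Your route is constructive: build a candidate optimal Markov policy via the per-step problem (\ref{eq:step_solve}) and then verify that no history-dependent feasible $\pi$ can beat it. The verification step as written has a gap. You plan to show $\min_P V_{r,h}^{\pi,P}(s,c)\le V^{*}_{r,h}(s,c)$ by backward induction, but a policy that is feasible from the root $(s_1,b)$ need not have feasible continuations from every reachable augmented state $(s_h,c_h)$: global feasibility is an \emph{expectation} condition at step $1$, and rectangularity does not force it to hold node by node. At such a state your induction hypothesis --- which compares against the \emph{constrained} optimum $V^*_{r,h}$ --- is inapplicable, and the inequality can fail: a continuation of $\pi$ that is infeasible from $(s_h,c_h)$ may collect strictly more reward there than any locally feasible Markov continuation. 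A related issue is that the LP constraint you use at step $h$ is posed with the utility $Q_{g,h}$ of the \emph{specific} Markov continuation you already fixed at steps $h{+}1,\dots,H$; a history-dependent $\pi$ may instead trade reward for utility slack downstream, so its $\pi_h$ need not be feasible for your LP even though the full $\pi$ is globally feasible. These points are not fatal, but they require a more careful induction invariant (and a definition of $V^*_{r,h}$ at infeasible states) than what you state; the paper's Markovization sidesteps the difficulty entirely because it never invokes a constrained-optimality comparison at intermediate steps.
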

The proof is in Appendix~\ref{sec:markovian}. Thus, we only consider the Markovian policies on the augmented space. 
\begin{definition}
    We seek to obtain policy $\hat{\pi}\in \Pi^{\aug}_M$ such that
    \begin{align}
& \mathrm{Sub-Opt}(\hat{\pi})=\min_{P}V_{r,1}^{\pi^*,P}(s,b)-\min_{P}V_{r,1}^{\hat{\pi},P}(s,b)\leq \epsilon\nonumber\\
& \mathrm{Violation}(\hat{\pi})=-\min_P V_{g,1}^{\hat{\pi}}(s,b)\leq \epsilon
    \end{align}
\end{definition}
\textbf{Computational Complexity.}
The computational complexity of our approach inherently depends on the size of the augmented state space. To make the problem tractable, we introduce the following assumption:
\begin{assum}\label{assum:discrete}
The utility values are discretized, and the total discretized utility space has cardinality $|\mathcal{C}|$.
\end{assum}
Under this assumption, we only need to operate over the discretized utility space, which reduces the computational complexity. Later, we relax this assumption and show that the total budget interval $[-H,H]$ can be discretized into $\tfrac{2H}{\epsilon}$ fixed points while still achieving the same order of sample complexity. In this case, $|\mathcal{C}| = \lceil 2H/\epsilon \rceil$. Thus, the overall computational complexity remains polynomial, scaling as $\mathcal{O}(1/\epsilon)$ in the discretization parameter.
\vspace{-0.12in}
\section{Algorithm}
\vspace{-0.1in}
We now describe the robust constrained value iteration (RCVI) algorithm. 
Since we do not know the nominal model, we will use generative model, and gather samples from it. For each state-action pairs, we will gather $N$ samples (Line 4). $N$ would depend on the nature of the uncertainty set and will be characterized later. After gathering $N$ samples, we estimate the empirical nominal model.  $\hat{P}^0_h(s^{\prime}|s_h,a_h)=\dfrac{N(s^{\prime},s_h,a_h)}{N(s_h,a_h)}$ where $N(s^{\prime},s_h,a_h)$ is the total number of times the state transitions to $s^{\prime}$ out of total $N(s_h,a_h)$ samples collected at state-action pair $(s_h,a_h)$ (Line 5). We consider the uncertainty set $\mathcal{\hat{P}}$ around the nominal model. 
$\mathcal{\hat{P}}=\bigotimes \hat{P}_{h,s,a}, \text{where} \hat{P}_{h,s,a}=\{P\in \Delta^{|S|}: D(P,\hat{P}_h^0)\leq \rho\}$.
where $D$ is one of the uncertainty sets. We consider $\hat{Q}_{j,h}^{\pi,P}$, and $\hat{V}_{j,h}^{\pi,P}$ as the empirical state-action value function and the value function respectively. Note that one can again achieve the worst-case empirical value function using the dual representation for the popular uncertainty sets which we describe in the following. 

Starting from step $H$, we start collecting $N$ samples, then, we find the worst case $Q$-function based on the empirical uncertainty set and the nature of uncertainty set $\hat{P}$ in the backward induction manner. For popular $f$-divergence metrics as described before, one can find the worst-case value function using the dual decomposition even in the augmented state-space. For example using the Proposition 1 in \cite{panaganti2022sample}, for TV distance we can achieve the worst-case value in the following manner
\begin{align*}
L_{\hat{P}^{TV}_{h,s,a}}V=-\inf_{[0,2H/\rho]}\mathbb{E}_{s^{\prime}\sim\hat{P}_{h,s,a}}[(\eta-V(s^{\prime},c-g_h(s,a))_+]\nonumber\\
+(\eta-\inf_{s^{''}}V(s{''},c-g_h(s,a)))_+\rho -\eta
\end{align*}
where $V$ is a value function in the augmented state-space. Note that the above is a convex optimization problem and can be solved efficiently. We achieve the worst case $Q$-value for both the reward and utility (Lines 7 and 8)  as $\hat{V}_{j,h+1}$ are already known.

Once we find the worst-case $Q$-functions at step $h$ for the augmented state-action pair, we will find the action $a$ such that it solves (\ref{eq:step_solve}) given $Q_{g,h}(s,\hat{c},a')\geq -(H-h+1)\epsilon$. This would ensure that there exists at least one action which gives a feasible action. Note that ideally $Q_{g,h}(s,\hat{c},a')\geq 0$, however, we have added slackness $(H-h+1)\epsilon$ to address the finite sample estimation error, and ensuring that the optimal policy of the original problem satisfies the constraint using the worst-case model for the estimated nominal model. This is required otherwise we cannot bound the sub-optimality gap as the optimal policy might not be feasible because of the estimation error. In particular, we consider the following modified problem (\ref{eq:step_solve}) starting from $H$ in the backward induction manner where we replace the original value function with the empirical value function for every state-action pair $(s,c,a)$
\begin{align}\label{eq:lp}
& \max_{\pi\in \Pi^{aug}} \langle \hat{\pi},\hat{Q}_{r,h}(s,c,a)\rangle, \nonumber\\ &\text{s.t}  \langle \hat{\pi},\hat{Q}_{g,h}(s,c,a)\rangle\geq -(H-h+1)\epsilon.
\end{align}
Note that it might not be possible to have a feasible action from every possible state in particular, since we have an augmented state. In those states, the policy would maximize the reward value function only. Nevertheless, we will show that encountering such states have negligible probabilities. The optimization problem in (\ref{eq:lp}) is a linear programming problem with only one constraint, and can be efficiently solved. We then compute the value worst case value function at step $h$ (Line 10). We then have the output policy $\hat{\pi}$.

\setlength{\textfloatsep}{0pt}
\begin{algorithm*}[tbh]
\caption{RCVI: Robust constrained Value Iteration Algorithm  for RCMDP}
\label{algo:rs-cmdp}
\textbf{Input:}  Discretized budget space $\cC$, given error bound $\epsilon$, confidence level $\delta \in (0,1]$, uncertainty parameter $\rho$, and the f-divergence metric $D$.
\begin{algorithmic}[1]
\STATE For all $(s, \hat{c}, a) \in \mathcal{S} \times \cC \times \mathcal{A}$, initialize $V_{g,H+1}(s,\hat{c}) \gets -\hat{c}$ and $V_{r,H+1}(s,\hat{c}) \gets 0$

    \FOR{step $h = H$ to $1$ }
    \FOR{all $(s,a)$}
     \STATE Collect $N$ samples from the generator model.
     \STATE Compute counts and empirical transitions: $N_h(s,a,s') \gets \sum_{i=1}^{N} \mathbbm{1}[(s^i_{h}, a^i_{h}, s^i_{h+1}) = (s,a,s')]$,  and $\hat{P}^0_h(s' \mid s,a) \gets \frac{N_h(s,a,s')}{N}$
        \FOR{all $\hat{c} \in \cC $}
        \STATE ${Q}_{g,h}(s,\hat{c},a) \gets \min\left\{\min_{P}\mathbb{E}_{s'\sim P_h(\cdot | s, a)} \left[{V}_{g,h+1}(s',\hat{c}-(g_h(s,a)))\right],H\right\}$
        \STATE $Q_{r,h}(s,\hat{c},a) \gets \min\left\{r_h(s, a)+\min_{P}\mathbb{E}_{s'\sim P_h(\cdot |s, a)} \left[{V}_{r,h+1}(s',\hat{c}-(g_h(s,a)))\right],H\right\}$
        \ENDFOR
        \ENDFOR
        \STATE For all $(s, \hat{c}) \in \mathcal{S} \times \mathcal{C}$, solve for $\max_{\pi} \langle\pi,Q_{r,h}(\cdot,\cdot,a)\rangle, \quad \text{s.t}\langle \pi,Q_{g,h}(\cdot,\cdot,a)\geq -(H-h+1)\epsilon$ given $Q_{g,h}(s,\hat{c},a)\geq -(H-h)\epsilon$ for some $a$, otherwise, $\pi_h(a|s,\hat{c})=1$ for some $a\in \arg\max_{a^{\prime}}Q_{r,h}(s,\hat{c},a')$.
        \STATE For all $(s,\hat{c}) \in \cS\times \cC$, update $V_{r,h}(s,\hat{c}) \gets \langle Q_{r,h}(s,\hat{c},\cdot),\pi_h(\cdot|s,\hat{c})\rangle$, ${V}_{g,h}(s,\hat{c})\gets \langle Q_{g,h}(s,\hat{c},\cdot),\pi_h(\cdot|s,\hat{c})\rangle$.
\ENDFOR
\STATE \textbf{Output} $\pi$.
\end{algorithmic}
\end{algorithm*}
\vspace{-0.1in}
\section{Main Results and Analysis}
\label{sec:thm2}
\vspace{-0.09in}
\subsection{Main Results}
\vspace{-0.09in}
We now state the main result of our paper and subsequently, we provide the proof outline.
\begin{theorem}\label{thm:tv}
For total variation distance uncertainty set, after $N_{tot}=N|S||A|\geq N_{TV}$ samples, where 
\begin{align}
N_{TV}=\dfrac{C_1|S||A|H^5}{\epsilon^2}\log\left(\dfrac{48|S||A|H^3}{\epsilon\delta}\right)\nonumber
\end{align}
for some constant $C_1> 0$ (independent of $\epsilon$).
Algorithm~\ref{algo:rs-cmdp} returns the policy $\hat{\pi}$ such that with probability $1-3\delta$, 
$\mathrm{Sub-Opt}(\hat{\pi})\leq \epsilon,$ and $\mathrm{Violation}(\hat{\pi})\leq \epsilon$.
\end{theorem}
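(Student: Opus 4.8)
The plan is to bound $\mathrm{Sub\text{-}Opt}(\hat\pi)$ and $\mathrm{Violation}(\hat\pi)$ by separately controlling two quantities: the statistical error between the estimated robust Bellman operator $L_{\widehat{\mathcal P}_{h,s,a}}$ and the true operator $L_{\mathcal P_{h,s,a}}$, and the constraint slack $(H-h+1)\epsilon$ that the algorithm injects into the per-step LP \eqref{eq:lp}. By Theorem~\ref{thm:markov-suff-c} it suffices to reason over augmented Markov policies, and by the per-step decomposition \eqref{eq:step_solve} the backward-induction policy $\hat\pi$ returned by Algorithm~\ref{algo:rs-cmdp} is exactly optimal for the empirical, slack-relaxed constrained problem. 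Thus the whole argument reduces to comparing the empirical robust value functions $\hat V_{j,h}$ (built on $\widehat{\mathcal P}$) with the true robust value functions $V_{j,h}$ (under $\mathcal P$), for $j=r,g$.

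The central step is a concentration lemma for the TV robust operator. Using the dual representation stated before the algorithm, $L_{\widehat P^{TV}_{h,s,a}}V$ is a scalar program over $\eta\in[0,2H/\rho]$ of an empirical expectation of $(\eta-V)_+\in[0,H]$. For a fixed $V$, Hoeffding's inequality controls $|\mathbb{E}_{\widehat P^0_{h,s,a}}[(\eta-V)_+]-\mathbb{E}_{P^0_{h,s,a}}[(\eta-V)_+]|$ by $\tilde{O}(H/\sqrt N)$; a union bound over a fine grid in $\eta$ (using Lipschitz dependence of the objective on $\eta$) and over all $(h,s,a)$ and discretized budgets $\hat c\in\cC$ upgrades this to a uniform bound $|L_{\widehat{\mathcal P}_{h,s,a}}V-L_{\mathcal P_{h,s,a}}V|\le\tilde{O}(H/\sqrt N)$ off a failure event of probability $\delta$. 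I would then propagate through the backward induction: writing $\hat V^\pi_{j,h}-V^\pi_{j,h}$ telescopically and using that $L$ is a non-expansion, the per-step errors add across the $H$ levels, giving $\sup_{s,c}|\hat V^\pi_{j,1}(s,c)-V^\pi_{j,1}(s,c)|\le\xi:=\tilde{O}(H^2/\sqrt N)$ for every augmented Markov $\pi$. Taking $N=\tilde{O}(H^4/\epsilon^2)$ per $(h,s,a)$ makes the per-step error $\tilde{O}(\epsilon/H)$, matched to a per-step slack of $\Theta(\epsilon/H)$, so both $\xi$ and the accumulated slack are $O(\epsilon)$; summing $N$ over the $H$ levels and $|S||A|$ pairs yields $N_{tot}=\tilde{O}(|S||A|H^5/\epsilon^2)$.

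The final step converts the uniform value bounds into the two guarantees. For sub-optimality: since $\pi^*$ is feasible for the true problem, $V^{\pi^*}_{g,1}(s,b)\ge0$, so by concentration $\hat V^{\pi^*}_{g,1}(s,b)\ge-\xi$; as the top-level slack dominates $\xi$, $\pi^*$ is feasible for the relaxed program, and optimality of $\hat\pi$ gives $\hat V^{\hat\pi}_{r,1}\ge\hat V^{\pi^*}_{r,1}\ge V^{\pi^*}_{r,1}-\xi$, while transferring $\hat\pi$ back to the true model costs another $\xi$, so $\mathrm{Sub\text{-}Opt}(\hat\pi)\le2\xi\le\epsilon$. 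For violation: $\hat\pi$ satisfies the relaxed constraint under $\widehat{\mathcal P}$, i.e. $\hat V^{\hat\pi}_{g,1}(s,b)\ge-(\text{accumulated slack})$, and transferring back loses $\xi$, so $\mathrm{Violation}(\hat\pi)=-V^{\hat\pi}_{g,1}(s,b)\le(\text{slack})+\xi=O(\epsilon)$. A technical point is to handle augmented states where no slack-feasible action exists (the algorithm falls back to pure reward maximization): along trajectories induced by $\pi^*$ and $\hat\pi$ I would show that reaching such a state forces a budget deficit exceeding the accumulated slack, an event bounded by the same concentration and hence negligible.

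I expect the main obstacle to be making the concentration lemma rigorous despite the data-dependence of $V$: the value function to which $L_{\widehat{\mathcal P}_{h,s,a}}$ is applied is itself produced by the backward recursion on the same samples, so a fixed-$V$ bound does not directly apply. I would resolve this either by passing to a finite net over the bounded range $[-H,H]$ of augmented value functions and union bounding, or by closing a deterministic recursion using monotonicity of $L$, in each case avoiding extra powers of $H$. The coupled requirement — that the per-step slack $(H-h+1)\epsilon$ exactly dominates the accumulated statistical error (so $\pi^*$ stays feasible) while the violation still telescopes to $O(\epsilon)$ — is what ultimately pins down the $H^5$ scaling.
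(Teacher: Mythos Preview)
Your proposal is correct and mirrors the paper's proof almost step for step: concentration for $L_{\widehat{\cP}_{h,s,a}}$ versus $L_{\cP_{h,s,a}}$ via the TV dual plus an $\eta$-net and Hoeffding (Lemma~\ref{lem:diff_v}), $1$-Lipschitz propagation through the backward recursion (Lemmas~\ref{lem:L-1lip}--\ref{lem:V-propagation}), and then the three-term decomposition with feasibility of $\pi^*$ for the slack-relaxed per-step LP. The one place you are working harder than necessary is the data-dependence worry: because Algorithm~\ref{algo:rs-cmdp} draws \emph{fresh} samples at each step $h$, the value $\hat V_{h+1}$ (and $\hat\pi_{h+1},\dots,\hat\pi_H$) depends only on samples from steps $h{+}1,\dots,H$ and is therefore independent of the step-$h$ samples, so after conditioning the fixed-$V$ Hoeffding bound applies directly---the paper never needs the net-over-value-functions or monotonicity arguments you anticipate.
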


The proof is in Appendix~\ref{proof:tv}. The result indicates that one needs $\tilde{\mathcal{O}}(|S||A|H^5/\epsilon^2)$ samples to bound both the $\mathrm{sub-opt}$ and $\mathrm{Violation}$ by $\epsilon$. This is the {\em first} such sample complexity result result for the robust CMDP. Note that it matches the sample complexity bound in the unconstrained episodic case \cite{panaganti2022sample}. A recent work in \cite{shi2023curious} shows that for the unconstrained discounted case with an improved dependence on $H^4$ is possible. We have left this for the future to reducing the dependency on $H$ for TV-distance. Note that even though we have used the augmented space, the sample complexity does scale with $|\cC|$. Of course, the computational complexity scales as for each augmented state, we have to solve the LP.


We now state the results for the other distance metrics.
\begin{theorem}\label{thm:chi}
For Chi-squared uncertainty set, if the total number of samples $N_{tot}\geq N_{\chi}$ where
\begin{align}
N_{\chi}=\dfrac{C_2 (1+\rho)|S||A|H^5}{\epsilon^2}\log\left(\dfrac{2|S||A|H^2}{\epsilon\delta}\right)
\end{align}
where $C_2$ is a constant (independent of $\epsilon$), then, the policy $\hat{\pi}$ returned by Algorithm~\ref{algo:rs-cmdp} satisfies with probability $1-3\delta$, $\mathrm{Sub-Opt}(\hat{\pi})\leq \epsilon$, and $\mathrm{Violation}(\hat{\pi})\leq \epsilon$.
\end{theorem}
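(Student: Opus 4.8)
The plan is to reuse the architecture of the proof of Theorem~\ref{thm:tv} essentially verbatim and to isolate the single place where the geometry of the uncertainty set enters: the concentration of the empirical robust Bellman operator. First I would invoke the dual representation of the worst-case operator for the chi-squared ball (the analogue of the TV dual used in Lines~7--8 of Algorithm~\ref{algo:rs-cmdp}): for a value function $V$ valued in $[0,H]$,
\[
L_{\cP^{\chi}_{h,s,a}} V \;=\; \sup_{\eta}\Big\{\,\eta - \sqrt{(1+\rho)\,\mathbb{E}_{P^0_{h,s,a}}\big[(\eta - V)_+^2\big]}\,\Big\},
\]
so that the only empirical object is a clipped \emph{second moment} of a bounded function under $\widehat P^0_{h,s,a}$, and the same expression with $\widehat P^0$ defines the computable operator $L_{\hat{\cP}^{\chi}_{h,s,a}}$. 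The $\sqrt{1+\rho}$ prefactor is exactly what will propagate into the $(1+\rho)$ appearing in $N_{\chi}$.

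The core step is the per-step concentration. For each fixed $(h,s,a,c)$ I would bound $\big|L_{\hat{\cP}^{\chi}_{h,s,a}}\widehat V - L_{\cP^{\chi}_{h,s,a}}V\big|$ by $O\!\big(\sqrt{(1+\rho)H^3\log(\cdot)/N}\big)$. This follows from a concentration bound on the empirical second moment $\widehat{\mathbb{E}}[(\eta-V)_+^2]$ together with the facts that (i) $x\mapsto\sqrt{x}$ and $\eta\mapsto$(dual objective) are controlled, so the supremum over the scalar $\eta$ contributes only a $\log$ factor through a net on a bounded range of $\eta$ of size $O(H)$, and (ii) the statistical dependence between $\widehat V$ and the samples used to form $\widehat P^0$ is removed by the standard uniform-over-value-functions / absorbing argument used for robust MDPs. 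I would carry this out for both the reward and utility operators, and union-bound over the $|S||A|H$ triples $(h,s,a)$ and the discretized budget points $c\in\cC$; since $|\cC|=\lceil 2H/\epsilon\rceil$ enters only inside the logarithm, $N$ grows only logarithmically in $|\cC|$, which is why $N_{tot}$ retains the advertised polynomial form with the log factor $\log(2|S||A|H^2/(\epsilon\delta))$.

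Next I would propagate the per-step error through the backward induction over the $H$ augmented stages. Because the errors accumulate additively and the value functions are bounded by $H$, forcing the per-step error to be at most $\epsilon/H$ gives $N \gtrsim (1+\rho)H^3/(\epsilon/H)^2 = (1+\rho)H^5/\epsilon^2$ per state-action pair, which is $N_{\chi}$. The slack $(H-h+1)\epsilon$ built into the empirical LP~(\ref{eq:lp}) is calibrated to dominate this accumulated estimation error: on the good event this guarantees both that the true optimal policy $\pi^*$ stays feasible for the empirical per-step program (so that comparing $\hat\pi$ against $\pi^*$ bounds $\mathrm{Sub\text{-}Opt}(\hat\pi)\le\epsilon$ via the value-difference/simulation argument), and that the true worst-case utility of $\hat\pi$ is at least $-\epsilon$ (so that $\mathrm{Violation}(\hat\pi)\le\epsilon$). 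Collecting the three high-probability events --- reward concentration, utility concentration, and feasibility of $\pi^*$ --- yields the $1-3\delta$ guarantee, exactly as in the TV case.

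The main obstacle is the concentration step itself. Unlike TV, whose dual involves only a clipped mean, the chi-squared dual places a second moment under a square root, which makes the rate analysis delicate: a naive Hoeffding bound on $(\eta-V)_+^2\in[0,H^2]$ inflates the horizon dependence and loses the clean $(1+\rho)$ scaling, so recovering the stated $\sqrt{(1+\rho)H^3/N}$ per-step rate requires a variance-aware (Bernstein-type) treatment of the moment concentration together with the observation that $L\widehat V$ is itself bounded by $H$. Handling the coupling of $\widehat P^0$ appearing simultaneously inside the moment and the radius constraint, and transferring moment concentration through the square-root and the supremum over $\eta$, is the delicate part; everything else is a routine adaptation of the argument behind Theorem~\ref{thm:tv}.
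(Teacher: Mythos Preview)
Your high-level architecture is correct and matches the paper exactly: reuse the TV machinery (Lemmas~\ref{lem:L-1lip} and~\ref{lem:V-propagation} and the violation/sub-optimality decomposition) and replace only Lemma~\ref{lem:diff_v} by its $\chi^2$ analogue. The divergence is in how that analogue is obtained.

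You propose to control the empirical second moment $\widehat{\mathbb E}[(\eta-V)_+^2]$ by a Bernstein-type inequality and then push the estimate through the square root and the supremum in~$\eta$; you correctly flag this transfer as the delicate step but leave it unspecified. The paper sidesteps it entirely by exploiting the $L^2$-norm structure of the $\chi^2$ dual. Writing $\sqrt{\mathbb E_P[(\eta-V)^2]}=\|\eta-X\|_{2,P}$, the reverse triangle inequality gives, for any net point~$\nu$,
\[
\|\eta-X\|_{2,P}-\|\eta-X\|_{2,\hat P}\;\le\;2|\eta-\nu|+\big(\|\nu-X\|_{2,P}-\|\nu-X\|_{2,\hat P}\big),
\]
so a $\theta$-net on $\eta\in[0,C_\rho H/(C_\rho-1)]$ with $C_\rho=\sqrt{1+\rho}$ reduces the supremum to finitely many fixed $\nu$'s with additive error~$2\theta$ (Lemma~\ref{lem:chi2-cover-aug}). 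For each fixed~$\nu$ the difference of true and empirical $L^2$ norms is then bounded directly by an off-the-shelf result (Lemma~11 of \cite{xu2023improved}, restated as Lemma~\ref{lem:chi2-conc-aug}), yielding the per-step rate without ever passing through the range $[0,H^2]$ of the squared term. This simultaneously handles the square root and the covering, which is precisely the obstacle your Bernstein route would still have to overcome.

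Two minor corrections. First, your union bound over the discretized budgets $c\in\cC$ is unnecessary: the kernels $\cP_{h,s,a}$ and $\hat P^0_{h,s,a}$ do not depend on~$c$, so the per-$(h,s,a)$ concentration automatically holds for every augmented state, and $|\cC|$ never enters the sample-complexity log factor. Second, the paper's dual (Proposition~3 of \cite{xu2023improved}) uses $(\eta-V)^2$ rather than $(\eta-V)_+^2$; the optimal $\eta$ sits where the two coincide, but the unclipped form is what enables the clean $L^2$-norm interpretation above.
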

The proof is in Appendix~\ref{proof:chi}. Theorem~\ref{thm:chi} shows that the sample complexity result is $\tilde{\mathcal{O}}(|S||A|H^5/\epsilon^2)$. It matches the bound in the unconstrained episodic setting  \cite{panaganti2022sample}. Note that here the bound is again tight both in terms of $\epsilon$, and $H$ as proved in the unconstrained case \cite{shi2023curious}. 

\begin{theorem}\label{thm:kl}
For the KL uncertainty set, if the total number of samples $N_{tot}\geq N_{KL}$ where
\begin{align}
N_{KL}=\mathcal{O}(\dfrac{H^4|S|^2|A|^2}{\rho^2\zeta^2\epsilon^2}\log\left(\dfrac{8H|S||A|\zeta\rho}{\delta}\right))
\end{align}
where $\zeta=\min_{P^0(s^{\prime}|s,a)>0}P^0(s^{\prime}|s,a)$ is the problem-dependent parameter, and independent $N_{KL}$, then the policy $\hat{\pi}$ returned by Algorithm~\ref{algo:rs-cmdp} satisfies with probability $1-3\delta$ $\mathrm{Sub-opt}(\hat{\pi})\leq \epsilon$, and $\mathrm{Violation}(\hat{\pi})\leq \epsilon$.
\end{theorem}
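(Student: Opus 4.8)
The plan is to follow the same backward-induction template as the TV and $\chi^2$ analyses (Theorems~\ref{thm:tv} and~\ref{thm:chi}) and to isolate the only genuinely KL-specific ingredient: a uniform per-step concentration bound for the robust one-step update. Concretely, I would control the error $\Delta_{h,s,a}(V) := |L_{\hat{\cP}^{KL}_{h,s,a}}V - L_{\cP^{KL}_{h,s,a}}V|$ between the empirical and population KL-robust Bellman operators, uniformly over all value functions $V$ on the augmented space with $\|V\|_\infty\le H$. Because the budget transition in the augmented MDP is deterministic, $L$ acts only on the $s'$-marginal, so the concentration is over $(s,a)\in\cS\times\cA$ while the budget coordinate $c\in\cC$ merely selects which slice $V(\cdot,c-g_h)$ is fed in; this is why $|\cC|$ does not enter the final sample count. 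Once a high-probability bound $\Delta_{h,s,a}(V)\le e_{KL}$ is in hand, the remaining steps — accumulating $e_{KL}$ across the $H$ backward layers, using the slack $(H-h+1)\epsilon$ in the per-step LP~(\ref{eq:lp}) to keep $\pi^*$ feasible under the estimated model, and translating value-function closeness into the $\mathrm{Sub\text{-}Opt}$ and $\mathrm{Violation}$ guarantees — are verbatim repetitions of the TV/$\chi^2$ arguments and impose $e_{KL}\lesssim \epsilon/H$.

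The core step is bounding $\Delta_{h,s,a}$ through the KL dual representation
$$L_{\cP^{KL}_{h,s,a}}V=\sup_{\lambda>0}\Big[-\lambda\log\big(\mathbb{E}_{s'\sim P^0_{h,s,a}}[e^{-V(s')/\lambda}]\big)-\lambda\rho\Big]$$
and its empirical analogue with $\hat{P}^0_{h,s,a}$. I would first confine the supremum to a bounded window $\lambda\in[\underline{\lambda},\overline{\lambda}]$ determined by $\rho$ and the value range $\|V\|_\infty\le H$, on which the perturbation of the objective is controlled by the deviation $\mathbb{E}_{\hat{P}^0}[e^{-V/\lambda}]-\mathbb{E}_{P^0}[e^{-V/\lambda}]$ divided by the normalizer $\mathbb{E}_{P^0}[e^{-V/\lambda}]$. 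The problem-dependent constant $\zeta$ enters precisely here: the normalizer is bounded below by a quantity proportional to $\zeta$, so the log-sum-exp is stable with sensitivity $\mathcal{O}(1/(\rho\zeta))$ once the $\rho$-dependence of $\underline{\lambda}$ is accounted for. Combining a Bernstein bound on the per-coordinate multinomial deviations — which yields $\sum_{s'}|\hat{P}^0(s')-P^0(s')|\,e^{-V(s')/\lambda}=\tilde{\mathcal{O}}(\sqrt{|S|/N})$ via Cauchy–Schwarz — with a union bound over $\cS\times\cA$ and an $\epsilon$-net in $\lambda$ then gives $e_{KL}=\tilde{\mathcal{O}}\!\big(\tfrac{H}{\rho\zeta}\sqrt{|S|/N}\big)$.

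Imposing the accumulated-error requirement $H\,e_{KL}\le\epsilon$, solving for $N$, and multiplying by the $|S||A|$ state–action pairs reproduces the stated polynomial and logarithmic dependence of $N_{KL}=\tilde{\mathcal{O}}(H^4|S|^2|A|^2/(\rho^2\zeta^2\epsilon^2))$, the $1/(\rho^2\zeta^2)$ factor being inherited from squaring the $1/(\rho\zeta)$ sensitivity and the logarithmic term absorbing the union bound over $\cS\times\cA$ and the $\lambda$-net. I would then propagate $e_{KL}$ through the robust dynamic-programming recursions for $\hat{V}_{r,h}$ and $\hat{V}_{g,h}$ separately: the LP slack guarantees that the true optimal policy $\pi^*$ stays feasible for the estimated worst-case model up to the accumulated error, which bounds $\mathrm{Sub\text{-}Opt}(\hat{\pi})$, while feasibility of the returned $\hat{\pi}$ evaluated against the true worst-case model up to the same error bounds $\mathrm{Violation}(\hat{\pi})$; a final union bound over the transition-concentration event and the two value-recursion events produces the $1-3\delta$ confidence.

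The main obstacle is the KL-specific stability analysis: confining the dual temperature $\lambda$ to a window on which $e^{-V/\lambda}$ neither blows up nor degenerates, and establishing that the log-sum-exp normalizer $\mathbb{E}_{P^0}[e^{-V/\lambda}]$ is bounded below in terms of $\zeta$. This is exactly what makes the KL sample complexity depend on the problem-dependent minimum-probability constant $\zeta$ — a dependence absent from the $\zeta$-free TV and $\chi^2$ guarantees — and it must hold uniformly in the value argument so that a single concentration event and $\lambda$-net serve every discretized budget slice $c\in\cC$. Everything downstream of the per-step bound $e_{KL}$ is routine bookkeeping inherited unchanged from the TV/$\chi^2$ proofs.
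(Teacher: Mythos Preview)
Your high-level template is exactly what the paper does: isolate a uniform per-step bound $|L_{\cP^{KL}_{h,s,a}}V-L_{\hat\cP^{KL}_{h,s,a}}V|\le e_{KL}$, propagate it through the $H$ backward layers via the $1$-Lipschitz property of $L$, and reuse the feasibility/sub-optimality bookkeeping from the TV case unchanged. The budget-coordinate observation (so $|\cC|$ never enters the sample count) is also correct and used implicitly in the paper.

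The gap is in how you obtain $e_{KL}$. Your mechanism ``the normalizer $\E_{P^0}[e^{-V/\lambda}]$ is bounded below by a quantity proportional to $\zeta$'' is not justified and, as stated, fails: the only generic lower bound is $e^{-H/\lambda}$, which is what produces the exponential factor $e^{H/\lambda}$ in the paper's preliminary KL lemma (Lemma~\ref{lem:kl-aug}) and would destroy the polynomial bound. Bounding numerator and denominator separately, together with a $\lambda$-net, therefore leaves you stuck at that exponential unless you add a new idea. The paper's actual device (Lemma~\ref{lem:KL-aug-support}) is different and sidesteps both the normalizer issue and the $\lambda$-net entirely: writing
\[
\frac{\sum_{s'}(\hat P^0-P^0)(s')\,e^{-\lambda V(s')}}{\sum_{s'}P^0(s')\,e^{-\lambda V(s')}}
\;\le\;\max_{s':\,P^0(s')>0}\frac{\hat P^0(s')-P^0(s')}{P^0(s')}
\]
via the weighted-average inequality $\sum a_i/\sum b_i\le\max_i a_i/b_i$ cancels the exponentials, so the right-hand side is $\lambda$-independent. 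Then $\zeta$ enters through the per-coordinate ratio $|\hat P^0(s')-P^0(s')|/P^0(s')\le|\hat P^0(s')-P^0(s')|/\zeta$, bounded by Hoeffding and a union bound over the support. This is where the $1/(\rho\zeta)$ sensitivity really comes from, not from a lower bound on the log-sum-exp normalizer. If you replace your numerator/denominator split by this ratio trick, the rest of your plan goes through verbatim.
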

The proof is in Appendix~\ref{proof:kl}. Note that here the sample complexity bound is $\tilde{\mathcal{O}}(|S||A|H^4/(\epsilon^2\rho^2\zeta^2))$. The bound again matches the bound achieved in the unconstrained episodic case \cite{panaganti2022sample}. 
\vspace{-0.12in}
\subsection{Analysis}
\vspace{-0.07in}
\textbf{Violation Bound}: First, we prove the violation bound. Note that by the construction we have $\hat{V}_{g,1}^{\hat{\pi}}(s_1)\geq -H\epsilon$. However, this is only for the empirical  value. We have to show that it holds for true robust value function. Towards this end, we decompose the difference 
\begin{align}\label{eq:decompose_utility_val}
& Q^{\hat{\pi}}_{g,h}(s,c,a)-\hat{Q}^{\hat{\pi}}_{g,h}(s,c,a)=\nonumber\\
& L_{P_{h,s,a}}V^{\hat{\pi}}_{g,h+1}(s,c-g_h)-L_{\hat{P}_{h,s,a}}\hat{V}^{\hat{\pi}}_{g,h+1}(s,c-g_h)\nonumber\\
& L_{P_{h,s,a}}V^{\hat{\pi}}_{g,h+1}(s,c-g_h)-L_{\hat{P}_{h,s,a}}V^{\hat{\pi}}_{g,h+1}(s,c-g_h)\nonumber\\
& +L_{\hat{P}_{h,s,a}}V^{\hat{\pi}}_{g,h+1}(s,c-g_h)-L_{\hat{P}_{h,s,a}}\hat{V}^{\hat{\pi}}_{g,h+1}(s,c-g_h)
\end{align}
We bound the first term in (\ref{eq:decompose_utility_val}) by  
showing that the empirical worst-case value function and the true worst-case value function is bounded $\epsilon$ in Lemma~\ref{lem:diff_v} for the choice of $N$. We bound the second term by induction in Lemma~\ref{lem:V-propagation} using the 1-Lipschitz property of the worst-case operator $L$ in Lemma~\ref{lem:L-1lip}. 


\textbf{Sub-optimality Bound}: In order to prove the sub-optimality bound,  we decompose the sub-optimality bound as follows 
\begin{align*}
& V^{\pi^*}_{r,1}(s,b)-V^{\hat{\pi}}_{r,1}(s,b)= (V^{\pi^*}_{r,1}(s,b)-\hat{V}^{\pi^*}_{r,1}(s,b))+\nonumber\\
& (\hat{V}^{\pi^*}_{r,1}(s,b)-\hat{V}_{r,1}(s,b))+(\hat{V}_{r,1}(s,b)-V^{\hat{\pi}}(s,b)).
\end{align*} The first, and the third terms would be bounded by $H\epsilon$ using Lemma~\ref{lem:V-propagation}. The key is to bound the second term which differs from the standard bound in the unconstrained case. Note that we need to ensure that the empirically modified problem should contain the original optimal policy $\pi^*$ even when we are restricting the action space. Since, we consider a slackness $\epsilon$, $\pi^*$ is feasible for the empirically constructed augmented RCMDP by Lemma~\ref{lem:V-propagation}. Hence, we can bound $(\hat{V}^{\pi^*}_{r,1}(s,b)-\hat{V}_{r,1}(s,b))\leq 0$ for the states where a feasible action is available using backward induction starting from step $H$.  If there is no feasible action, the bound is trivial since Algorithm~\ref{algo:rs-cmdp} simply maximizes the reward value function from that state onward. 
\vspace{-0.11in}
\subsection{Extension}
\vspace{-0.08in}
\textbf{Relaxation of Assumption~\ref{assum:discrete}.}  
Our results can be extended to the continuous domain through quantization. Since the residual budget 
$b-\sum_{h=1}^H g_h(\cdot,\cdot)$ lies within $[-H,H]$, we discretize this interval at resolution $\epsilon/H$, 
yielding grid points $\{-H+i\epsilon/H \mid i=1,\ldots,\lceil 2H/\epsilon\rceil\}$. Thus, the cardinality 
of the discretized utility space is $|\mathcal{C}| = \lceil 2H/\epsilon \rceil$, and the computational complexity 
remains polynomial in $1/\epsilon$. Finer discretization improves approximation accuracy at the cost of 
increased complexity.  We define the discretization operator $\phi: [-H,H] \to \mathcal{C}$ as  
\begin{align*}
\phi(c) = \arg\min_{\hat{c}\in\mathcal{C},\; \hat{c}\geq c} |\hat{c}-c|,
\end{align*}  
which projects a real-valued budget $c$ to the nearest larger discretized value. This upward projection ensures 
that the resulting policy satisfies the $\epsilon$-suboptimality and $\epsilon$-violation guarantees (Appendix~\ref{sec:continuous}).  

\textbf{Multiple Constraints.}  
Our framework can be naturally extended to handle multiple constraints. In this case, we augment the state space 
with multiple budget variables $(\tau_1,\ldots,\tau_I)$, where $I$ denotes the number of constraints. For each 
augmented state, we compute the worst-case value functions for the reward and all constraints, and then solve a 
linear program with $I$ constraints to obtain the policy. However, the dimensionality of the augmented state space 
grows exponentially in $I$, and developing algorithms with improved computational complexity for this setting remains 
an important direction for future work.

\textbf{Function Approximation.}  
Recent advances in robust linear and mixture MDPs~\cite{ma2022distributionally,wang2024linear,liu2025linear} 
suggest promising directions to extend our guarantees beyond finite-state settings using function approximation.  

\textbf{Other Constraint Classes.}  
Our framework naturally adapts to chance constraints. For example, requiring  
$
\Pr\Big(\sum_{h=1}^H g_h \geq b\Big) \geq 1-\delta$, 
can be handled by defining $g_{\aug}(\cdot,c_{H+1})=\mathbbm{1}(c_{H+1}\leq 0)$ with 
$c_{H+1}=b-\sum_{h=1}^H g_h$, and applying Algorithm~\ref{algo:rs-cmdp} to obtain similar guarantees. 

\vspace{-0.12in}
\section{Experiments}
\vspace{-0.1in}
In this section\footnote{The complete code and supporting files can be found in \url{https://github.com/VocenInquisitor/RVI_aug_space.git}}, we present the empirical results obtained from experiments on benchmarks: (i) the Constrained RiverSwim (CRS) environment and (ii) the Garnet environment. Note that even though our theoretical results rely on generative model, we do not use any generative model, here, yet, we achieve a feasible policy with good reward. In both the environments, we use KL divergence uncertainty sets. The details of which are in Appendix~\ref{kl_divergence}. We compare our approach with (i) constrained Robust Natural Policy gradient (RNPG) proposed for RCMDP \cite{ganguly_neurips}, and (ii) the   CRPO adapted for RCMDP \cite{xu2021crpo,ma2025rectified}.

\textbf{Constrained River-swim:} The CRS comprises six states, corresponding six islands. At each state, the agent selects between two actions: \emph{swim left} (\(a_0\)) or \emph{swim right} (\(a_1\)). Rewards are assigned only at the boundary states, with intermediate states yielding none. Progression from \(s_0\) to \(s_5\) is associated with increasing challenges represented through \emph{safety constraint cost}. This cost is minimum at \(s_0\) and maximum at \(s_5\), reflecting the growing risk downstream. The objective is to maximize cumulative rewards subject to the constraint that cumulative safety costs remain below a given threshold (refer to appendix \ref{app:experiments} for more details). 
\vspace{-1em}
\begin{figure}[h]
    \centering
    \includegraphics[width=1\linewidth]{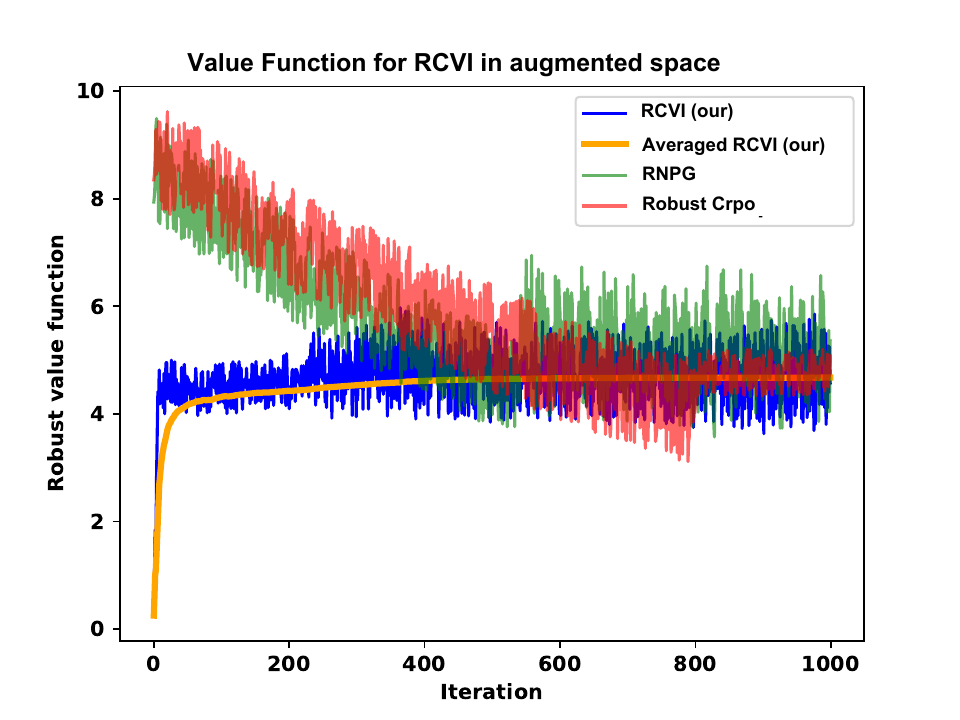}
    \vspace{-0.3in}
    \caption{Robust value function update at each iteration on CRS environment}
    \label{fig:crs_rvi}
    \vspace{-0.15in}
\end{figure}

\begin{figure}[h]
    \centering
    \includegraphics[width=1\linewidth]{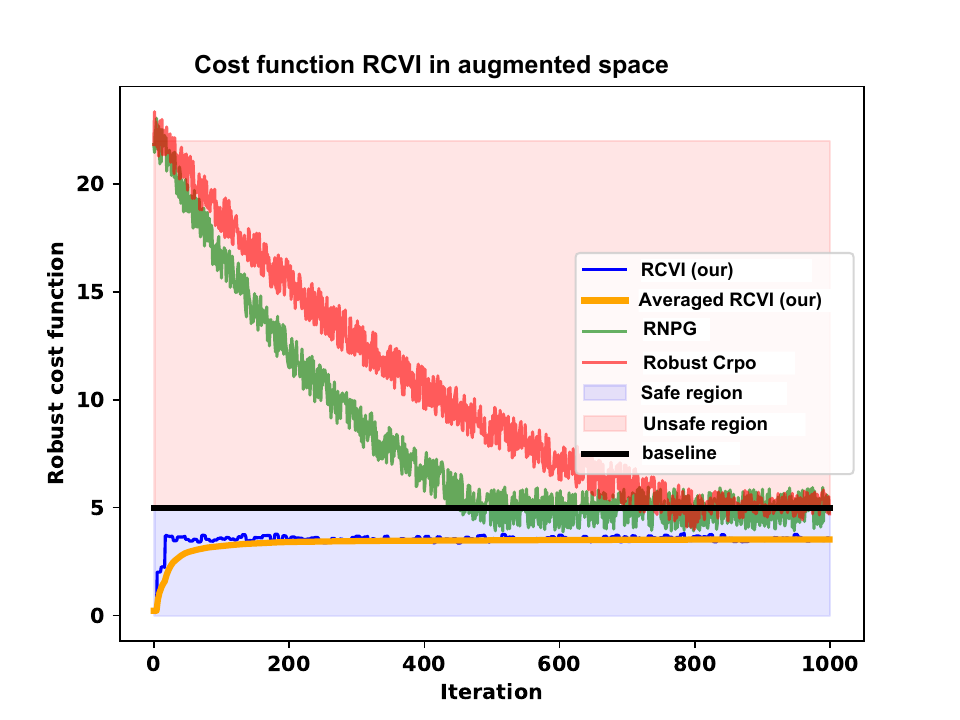}
    \vspace{-0.3in}
    \caption{Robust cost function update at each iteration on CRS environment}
    \label{fig:crs_rci}
    \vspace{-0.0in}
\end{figure}

\paragraph{Cost-based Garnet:} The Garnet problem is a widely used benchmark in control theory and reinforcement learning for evaluating algorithmic performance \cite{ganguly_neurips}.  The objective in the cost-based Garnet setting is to maximize long-term rewards while ensuring that the accumulated cost remains below a prescribed threshold (see appendix \ref{app:experiments} for further details). A key characteristic of the Garnet setup is the sparsity of its transition dynamics, where each state--action pair leads only to a restricted subset of successor states rather than the entire state space. 

\begin{figure}[h]
    \centering
    \includegraphics[width=1\linewidth]{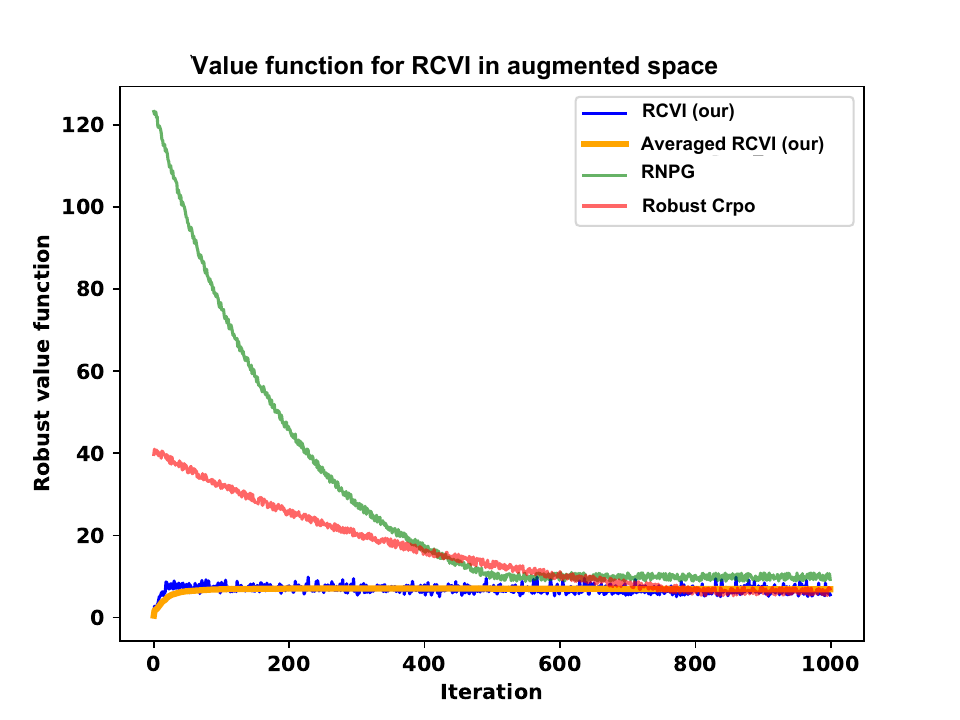}
    \vspace{-0.3in}
    \caption{Robust value function update at each iteration on Garnet environment}
    \label{fig:garnet_rvi}
    \vspace{-0.15in}
\end{figure}

\begin{figure}[h]
    \centering
    \includegraphics[width=1\linewidth]{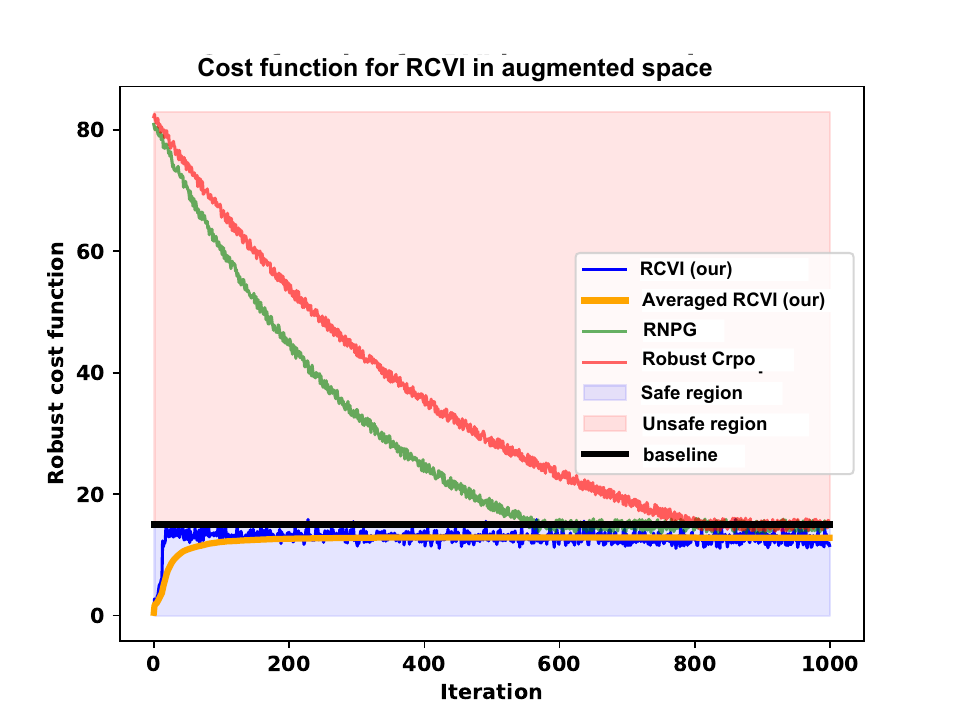}
    \vspace{-0.3in}
    \caption{Robust cost function update at each iteration on Garnet environment}
    \label{fig:garnet_rci}
    \vspace{-0.00in}
\end{figure}

\textbf{Results}: The results obtained upon training Algorithm \ref{algo:rs-cmdp} on CRS is as shown in figures \ref{fig:crs_rvi} and \ref{fig:crs_rci}.   As shown in Figure~\ref{fig:crs_rvi}, the robust value function increases steadily from an initial value of approximately $0.24$  and converges after about 50 iterations to the maximum achievable value within the budget. Note that while the other algorithms such as RNPG, and Robust CRPO achieve the same values upon convergence. Our approach is much faster validating that our approach requires less sample for finding optimal policy. Also, our approach always provides policy which is feasible. 

Figure~\ref{fig:garnet_rci} shows that in the Garnet environment, the policy entailed by our algorithm~\ref{algo:rs-cmdp} mostly satisfies the constraint value unlike the other approaches. 
The robust value function (Figure~\ref{fig:garnet_rvi}) increases consistently.
Unlike the CRS environment, the Garnet environment requires more iterations to converge due to its larger state and action space. Still, we achieve a faster convergence compared to the RNPG, and Robust CRPO \footnote{Additional experiments with different values of $\rho$, i.e., varying radii of uncertainty sets, are in Appendix \ref{app:experiments}.}.

\vspace{-0.11in}
\section{Conclusion}
\vspace{-0.07in}
We consider an episodic RCMDP framework. We show that unlike the unconstrained robust MDP, and the non robust-CMDP, the policies may no longer be Markovian. We show that the Markoivan policies in the augmented state-space where we augment the state with the available total utility contains optimal policy. We propose a RCVI algorithm and show that the sample complexity guarantee is $\tilde{O}(|S||A|H^5/\epsilon^2)$ for popular uncertainty metrics. This is the {\em first} sample complexity guarantee in the RCMDP. Empirical results show the validity of the proposed approach. 
\clearpage
\newpage

\bibliographystyle{plain}
\bibliography{biblography}
\clearpage
\newpage
\section*{Checklist}



\begin{enumerate}

  \item For all models and algorithms presented, check if you include:
  \begin{enumerate}
    \item A clear description of the mathematical setting, assumptions, algorithm, and/or model. [\textbf{Yes}/No/Not Applicable]
    \item An analysis of the properties and complexity (time, space, sample size) of any algorithm. [\textbf{Yes}/No/Not Applicable]
    \item (Optional) Anonymized source code, with specification of all dependencies, including external libraries. [\textbf{Yes}/No/Not Applicable]
  \end{enumerate}

  \item For any theoretical claim, check if you include:
  \begin{enumerate}
    \item Statements of the full set of assumptions of all theoretical results. [\textbf{Yes}/No/Not Applicable]
    \item Complete proofs of all theoretical results. [\textbf{Yes}/No/Not Applicable]
    \item Clear explanations of any assumptions. [\textbf{Yes}/No/Not Applicable]     
  \end{enumerate}

  \item For all figures and tables that present empirical results, check if you include:
  \begin{enumerate}
    \item The code, data, and instructions needed to reproduce the main experimental results (either in the supplemental material or as a URL). [\textbf{Yes}/No/Not Applicable]
    \item All the training details (e.g., data splits, hyperparameters, how they were chosen). [\textbf{Yes}/No/Not Applicable]: Provided in appendix
    \item A clear definition of the specific measure or statistics and error bars (e.g., with respect to the random seed after running experiments multiple times). [Yes/No/\textbf{Not Applicable}] Comparison made only
    \item A description of the computing infrastructure used. (e.g., type of GPUs, internal cluster, or cloud provider). [\textbf{Yes}/No/Not Applicable]
  \end{enumerate}

  \item If you are using existing assets (e.g., code, data, models) or curating/releasing new assets, check if you include:
  \begin{enumerate}
    \item Citations of the creator If your work uses existing assets. [Yes/No/\textbf{Not Applicable}]
    \item The license information of the assets, if applicable. [Yes/No/\textbf{Not Applicable}]
    \item New assets either in the supplemental material or as a URL, if applicable. [\textbf{Yes}/No/Not Applicable]
    \item Information about consent from data providers/curators. [Yes/No/\textbf{Not Applicable}]
    \item Discussion of sensible content if applicable, e.g., personally identifiable information or offensive content. [Yes/No/\textbf{Not Applicable}]
  \end{enumerate}

  \item If you used crowdsourcing or conducted research with human subjects, check if you include:
  \begin{enumerate}
    \item The full text of instructions given to participants and screenshots. [Yes/No/\textbf{Not Applicable}]
    \item Descriptions of potential participant risks, with links to Institutional Review Board (IRB) approvals if applicable. [Yes/No/\textbf{Not Applicable}]
    \item The estimated hourly wage paid to participants and the total amount spent on participant compensation. [Yes/No/\textbf{Not Applicable}]
  \end{enumerate}

\end{enumerate}
\appendix
\onecolumn
\tableofcontents
\section{Experiments}
\label{app:experiments}
The experiments were performed on two very popular benchmarks in RL\footnote{All experiments were done in colab without use of hardware accelerators such as GPU or TPU} (i) Constrained River-swim (CRS) and (ii) Cost-based Garnet environment. For both the experiments, the f-divergence metric is assumed to be KL-divergence. Although the algorithm is not limited to KL-divergence and can be extended to other f-divergence measures such as TV-distance, $\chi^{2}$-distribution etc.  There are two important reasons for choosing the KL-divergence over the other f-divergence metrics. First the existence of a closed form evaluation method that makes the robust policy evaluation having known a nominal model ($P_{0}$) simple (see Appendix \ref{kl_divergence}). Second KL-divergence is very stable with minimum influence of the changing hyperparameters. 

\subsection{Constrained River-swim}
The constrained River-swim (CRS) is an important benchmark environment studied in optimization theory and control. We briefly introduce the objective of the Constrained River-swim environment followed by the results obtained upon training our algorithm (Algorithm \ref{algo:rs-cmdp}) on CRS.

\subsubsection{Environment Description}
The environment consists of $6$ states, each representing a landmass.  A swimmer starts in one of these states according to a random initial distribution.  At any state, the swimmer can choose between two actions: \emph{swim left} ($a_{0}$) or \emph{swim right} ($a_{1}$).  Each action leads to a probabilistic transition to the next state, governed by a transition distribution.  

The swimmer receives rewards only at the extreme states (in this case, $s_{0}$ and $s_{5}$).  However, the environment also introduces risks:  
\begin{itemize}
    \item A \textbf{river current} always pushes against the swimmer’s chosen direction, making movement more uncertain.  
    \item \textbf{Harmful creatures} inhabit the landmasses, causing injury to the swimmer. The number of these creatures increases as we move to higher-indexed states.  
\end{itemize}

Thus, $s_{0}$ is the safest state with minimal reward and minimal cost, while states with larger indices carry linearly increasing costs due to greater safety hazards.  The swimmer’s objective is to learn an optimal policy that balances reward and safety, despite uncertainties in the transition dynamics, given only a nominal transition probability distribution $P_{0}$.
\subsubsection{Results and discussion}
The results obtained upon training Algorithm \ref{algo:rs-cmdp} on CRS is as shown in figures \ref{fig:CRS_vf_app} and \ref{fig:CRS_cf_app}



\begin{figure*}[h!]
    \centering
    \begin{subfigure}[h]{0.5\textwidth}
        \centering
        \includegraphics[height=2.8in]{images/Aistats_VI_vf_CRS.pdf}
         \caption{The value function at each iteration step}
        \label{fig:CRS_vf_app}
    \end{subfigure}%
    ~ 
    \begin{subfigure}[h]{0.5\textwidth}
        \centering
        \includegraphics[height=2.8in]{images/Aistats_VI_cf_CRS.pdf}
        \caption{The cost function at each iteration step}
        \label{fig:CRS_cf_app}
    \end{subfigure}
    \caption{The worst case value function and cost function update at each iteration step where the x-axis denotes the Iteration number and the y-axis denots the worst case value function at that iteration (left figure) and the worst case cost function at that iteration (right figure) denoted as Robust value function and Robust cost function respectively upon running our algorithm on the CRS environment.}
    \label{fig:CRS_app}
\end{figure*}
The baseline for the cost function ($b$) or the budget was fixed at $4$.  As seen in figure \ref{fig:CRS_vf_app}, the value function steadily increases to the maximum possible in the given budget range. Starting from $0.24$ approximately which denotes the robust value function for the policy of equiprobable actions in each states (i.e $\pi(a|s) = \frac{1}{|\mathcal{A}|}~\forall s\in \mathcal{S}$), it slowly increases and after 50 iterations it converges to the policy where the robust value function is maximum. As it is evident that the convergence is faster compared to other state-of-the approaches. Further, the RCVI (Algorithm~\ref{algo:rs-cmdp}) is always feasible.   The implementation details along with the hyperparameters are listed below .

\subsubsection{Implementation details}
Constrained River-swim environment consists of $6$ states so, let us denote the six states as $\mathcal{S} = \{s_{0},s_{1},s_{2},\ldots,s_{5}\}$ and $2$ actions denoted as $\mathcal{A} =\{a_{0},a_{1} \}$ where $\mathcal{S}$ and $\mathcal{A}$ denotes the state space and action space respectively. The above information is important for the underlying model of the environment that we considered (as shown in tables \ref{tab:rs} and \ref{tab:rs_r_c}).

\begin{table}[h]
    \centering
    \begin{tabular}{ccc}
    \toprule
        \textbf{State} & \textbf{Action} & \textbf{Probability for next state}  \\
    \midrule
        $s_{0}$ & $a_{0}$ & $s_{0}$:0.9, $s_{1}$:0.1 \\
    \midrule
        $s_{i}, \; i \in \{1,2,\ldots, 5\}$ & $a_{0}$ & $s_{i}$:0.6, $s_{i-1}$:0.3, $s_{i+1}$:0.1 \\
    \midrule
        $s_{i}, \; i \in \{0,1,\ldots, 4\}$ & $a_{1}$ & $s_{i}$:0.6, $s_{i-1}$:0.1, $s_{i+1}$:0.3 \\
    \midrule
        $s_{5}$ & $a_{1}$ & $s_{5}$:0.9, $s_{4}$:0.1 \\
    \bottomrule
    \end{tabular}
    \caption{Transition probabilities of the RiverSwim environment.}
    \label{tab:rs}
\end{table}

\begin{table}[h]
    \centering
    \begin{tabular}{ccc}
    \toprule
       \textbf{State}  &\textbf{Reward}&\textbf{Constraint cost}  \\
       \midrule
         $s_{0}$&0.001&0.2\\
         \midrule
         $s_{1}$&0&0.035\\
         \midrule
         $s_{2}$&0&0\\
         \midrule
         $s_{3}$&0&0.01\\
         \midrule
         $s_{4}$&0.1&0.08\\
         \midrule
         $s_{5}$&1&0.9\\
         \bottomrule
    \end{tabular}
    \caption{The reward and constraint cost received at each state}
    \label{tab:rs_r_c}
\end{table}

Now, that we have discussed about the model of the underlying CRS environment, let us now list the hyperparameters used (see Table \ref{tab:hp_CRS}).

\begin{table}[h]
    \centering
    \begin{tabular}{ccc}
         &\textbf{Hyperparameters}&\textbf{Value}  \\
         \toprule
        \multirow{3}{5em}{Environmental parameters} &$B$ (Budget)&4\\
         &$|\mathcal{S}|$&6\\
         &$|\mathcal{A}|$&2\\
         \midrule
         \multirow{5}{5em}{Algorithm \ref{algo:rs-cmdp} variables}&bins&10\\
         &$\rho$(f-divergence tolerence)&0.05\\
         &$H$ (horizon length)&1000\\
         &$N$ (sample size)&1000\\
         \bottomrule
    \end{tabular}
    \caption{Hyperparameter list for running Algorithm \ref{algo:rs-cmdp} on CRS}
    \label{tab:hp_CRS}
\end{table}

\subsection{Cost based Garnet problem}
The Garnet problem is another fundamental benchmark problems used in Control theory and Reinforcement Learning to test the effectiveness of an algorithm. In the next subsection we briefly discuss about the environment with the results in the following subsection
\subsubsection{Environment Description}
The \emph{Garnet environment} is a widely used MDP benchmark designed for evaluating RL algorithms under controlled conditions. 
It is defined by a fixed number of states \(nS\) and actions \(nA\), with transition probabilities, rewards, and (in constrained RL) utility functions sampled from prescribed distributions. 
A key feature of the Garnet setup is that the transition dynamics are typically \emph{sparse}, meaning that each state--action pair leads only to a limited subset of possible successor states rather than all states.  

Formally, the environment is specified by a transition probability kernel \(P(s' \mid s, a)\), a reward function \(R(s, a)\), and, when applicable, a utility function \(U(s, a)\). 
These quantities are often sampled from normal distributions:
\[
P(s' \mid s, a) \sim \mathcal{N}(\mu_a, \sigma_a), 
\quad R(s, a) \sim \mathcal{N}(\mu_b, \sigma_b), 
\quad U(s, a) \sim \mathcal{N}(\mu_c, \sigma_c),
\]
where the means \(\mu_a, \mu_b, \mu_c\) are themselves drawn from a uniform distribution, i.e., 
\(\text{Unif}(0,100)\).  

Since \(P(s' \mid s, a)\) must define a valid probability distribution (each row summing to one), 
the sampled values are exponentiated and normalized via a softmax transformation:
\[
p^{0}(s' \mid s, a) 
= \frac{\exp(P(s' \mid s, a))}{\sum_{s''} \exp(P(s'' \mid s, a))}.
\]

However, our cost based Garnet environment is a slight changed version of the aforementioned environment. In this cost based setting, we assumed a cost function $C(s,a) \sim \mathcal{N}(\mu_{c},\sigma_{c})$ instead of the utility function $U(s,a)$ and the new objective of the cost based Garnet environment is to maximize the long term objective function while keeping the long term cost function denoted as $V_{P,c}^{\pi}=\sum_{t=1}^{T}\mathbb{E}_{P}\left[ C(s_{t},\pi_{t}(s_{t}))\right] \leq b$ below a certain threshold value $b$.
\subsubsection{Results and discussion}
The results obtained upon training Algorithm \ref{algo:rs-cmdp} on cost-based Garnet environment is as shown in figures \ref{app:fig_gar_vf} and \ref{app:fig_gar_cf}
\begin{figure*}[h!]
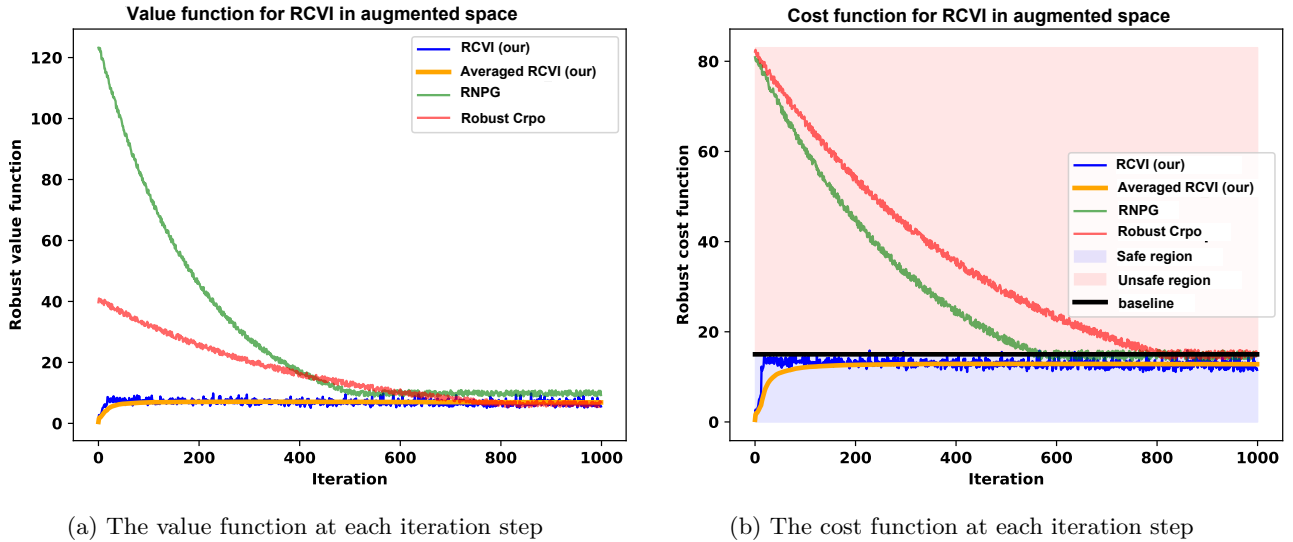

    \centering
    \begin{subfigure}[h]{0.5\textwidth}
        \centering
        \includegraphics[height=2.8in]{images/Aistats_VI_vf_Garnet.pdf}
         \caption{The value function at each iteration step}
        \label{app:fig_gar_vf}
    \end{subfigure}%
    ~ 
    \begin{subfigure}[h]{0.5\textwidth}
        \centering
        \includegraphics[height=2.8in]{images/Aistats_VI_cf_Garnet.pdf}
        \caption{The cost function at each iteration step}
        \label{app:fig_gar_cf}
    \end{subfigure}
    \caption{The worst case value function and cost function update at each iteration step where the x-axis denotes the Iteration number and the y-axis denotes the worst case value function at that iteration (left figure) and the worst case cost function at that iteration (right figure) denoted as Robust value function and Robust cost function respectively upon running our algorithm on the cost based Garnet environment.}
    \label{fig:CRS_app}
\end{figure*}
The baseline for the cost function or the budget was set at $15$ (i.e., $b$). In this environment the objective was to maximize the value function while keeping the expected long run cost function below a given threshold value or budget. As shown in figure \ref{app:fig_gar_cf}, the agent always keeps the expected cost function below the budget threshold for most of the iterations with occasional jump outside the safe-zone in trying to follow policies which maximizes the rewards function. From figure \ref{app:fig_gar_vf} it is clearly observed that the robust value function or the worst case value function increases steadily.  Hence, it takes more iterations to converge to the optimal policy under uncertainties. However, still, it achieves a faster convergence, and the policy is feasible throughout the training phase. 
\subsubsection{Implementation details}
The Garnet environment had $10$ states denoted as $s_{0} \ldots s_{9}$ and $5$ distinct actions denoted as $a_{0} \ldots a_{4}$. The hyperparameter list is as given below

\begin{table}[h]
    \centering
    \begin{tabular}{ccc}
    \toprule
         & \textbf{Hyperparameters}&\textbf{Value} \\
         \midrule
         \multirow{5}{6em}{Environmental parameters}&B&15\\
         &$|\mathcal{S}|$&10\\
         &$|\mathcal{A}|$&5\\
         &$\mu_{a},\mu_{b},\mu_{c}$&$Unif(0,100)$,$Unif(0,10)$, $Unif(0,10)$\\
         &$\sigma_{a},\sigma_{b},\sigma_{c}$&$Unif(0,100)$,$Unif(0,10)$, $Unif(0,10)$\\
         \midrule
         \multirow{5}{6em}{Algorithm \ref{algo:rs-cmdp} variables}&bins&$20$\\
         &$\rho$ (f-divergence tolerance)&$0.05$\\

         &$H$&$1000$\\
         &$N$&$1000$\\
         \bottomrule
    \end{tabular}
    \caption{Hyperparameters used for running Algorithm \ref{algo:rs-cmdp} on Garnet environment}
    \label{tab:placeholder_1}
\end{table}
\subsection{Additional experiments}
We present additional experimental results in Figures~\ref{fig:CRS_app_list} and~\ref{fig:Garnet_app_list}. All experiments follow a similar setup and share the same hyperparameters, except for the level of divergence from the nominal model, denoted by $\rho$ in Algorithm~\ref{algo:rs-cmdp}. We consider three different values of $\rho$ in our analysis.

In Figure~\ref{fig:CRS_app_list}, we evaluate the proposed algorithm (RCVI) on the Constrained RiverSwim (CRS) environment and compare it against RNPG~\cite{ganguly_neurips} and the Robust CRPO algorithm. The first set of plots (Figures~\ref{fig:CRS_vf_app_0.05} and~\ref{fig:CRS_cf_app_0.05}) corresponds to $\rho = 0.05$, followed by $\rho = 0.01$ (Figures~\ref{fig:CRS_vf_app_0.01} and~\ref{fig:CRS_cf_app_0.01}), and $\rho = 0.1$ (Figures~\ref{fig:CRS_vf_app_0.1} and~\ref{fig:CRS_cf_app_0.1}). In all cases, both RNPG and Robust CRPO initially operate in the unsafe region, requiring nearly 500 iterations to satisfy the safety constraint and a significant number of additional iterations to converge to the optimal policy. In contrast, our RVI algorithm consistently remains within the safe boundary, however narrow, and converges to the optimal policy within approximately 50 iterations—achieving nearly $10\times$ faster convergence. Moreover, for $\rho=0.01$ (Figure~\ref{fig:CRS_vf_app_0.01}) the achieved value function is slightly higher compared to the existing approaches (RNPG, and the RCRPO).  Minor fluctuations observed in Figures~\ref{fig:CRS_vf_app_0.1} and~\ref{fig:CRS_cf_app_0.1} arise from sampling noise and model estimation errors under higher perturbations.

A similar trend persists for the Garnet environment as well.  As shown in Figure~\ref{fig:Garnet_app_list}, our algorithm converges at least $15\times$ faster than competing methods while strictly adhering to the safety constraints throughout for different values of $\rho$. These results highlight that Algorithm~\ref{algo:rs-cmdp} achieves superior learning efficiency and safety performance compared to existing state-of-the-art methods. Although extending this framework to large-scale or continuous state-action spaces remains an open challenge, the proposed approach provides a strong foundation for such future extensions. Overall, the experimental outcomes are consistent with the theoretical findings, demonstrating that augmented robust MDP formulations enable significantly more efficient and reliable safety-constrained learning.

\begin{figure*}[]
    \centering
    \begin{subfigure}[]{0.5\textwidth}
        \centering
        \includegraphics[height=2.3in]{images/Aistats_VI_vf_CRS.pdf}
         \caption{The value function at each iteration step ($\rho=0.05$)}
        \label{fig:CRS_vf_app_0.05}
    \end{subfigure}%
    ~ 
    \begin{subfigure}[]{0.5\textwidth}
        \centering
        \includegraphics[height=2.3in]{images/Aistats_VI_cf_CRS.pdf}
        \caption{The cost function at each iteration step ($\rho=0.05$)}
        \label{fig:CRS_cf_app_0.05}
    \end{subfigure}%
    \\
    \begin{subfigure}[]{0.5\textwidth}
        \centering
        \includegraphics[height=2.3in]{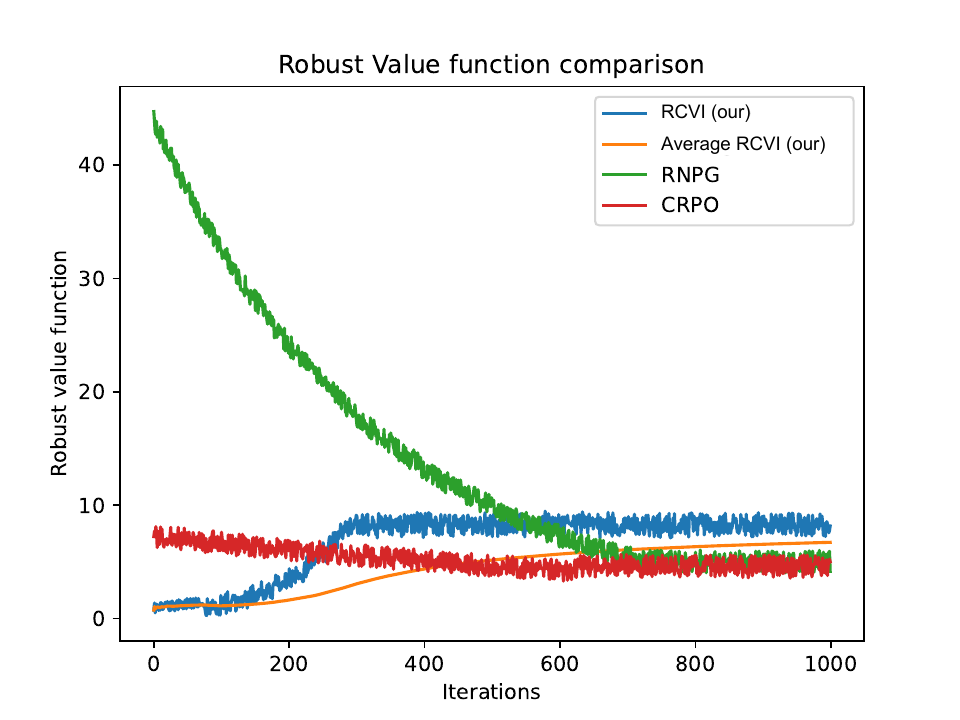}
        \caption{The value function at each iteration step($\rho=0.01$)}
        \label{fig:CRS_vf_app_0.01}
    \end{subfigure}%
    ~
    \begin{subfigure}[]{0.5\textwidth}
        \centering
        \includegraphics[height=2.3in]{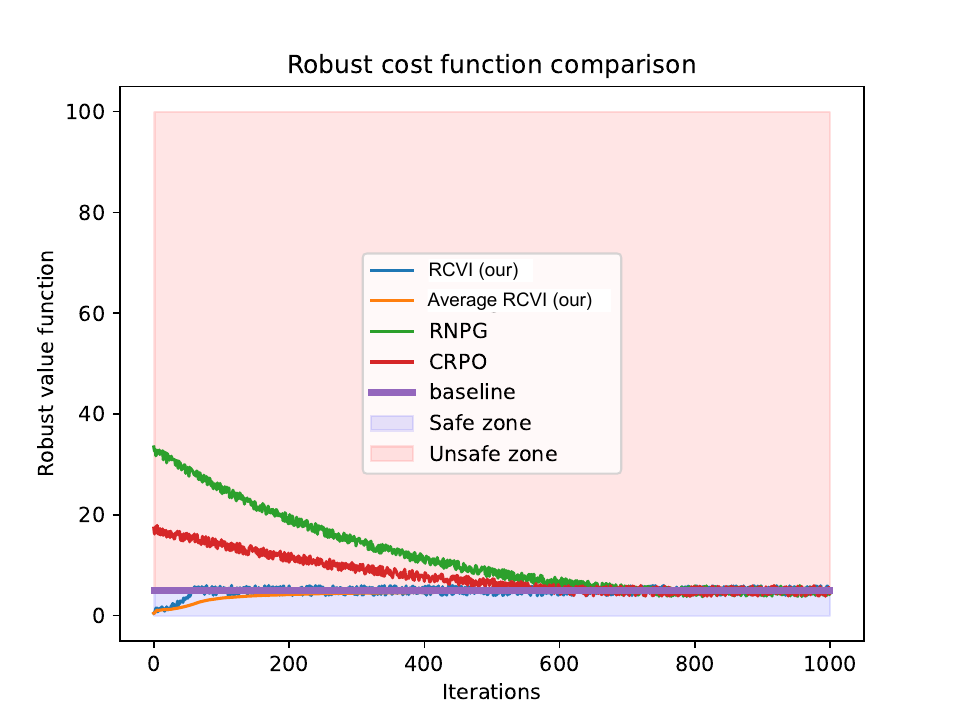}
        \caption{The cost function at each iteration step($\rho=0.01$)}
        \label{fig:CRS_cf_app_0.01}
    \end{subfigure}%
    \\
    \begin{subfigure}[]{0.5\textwidth}
        \centering
        \includegraphics[height=2.3in]{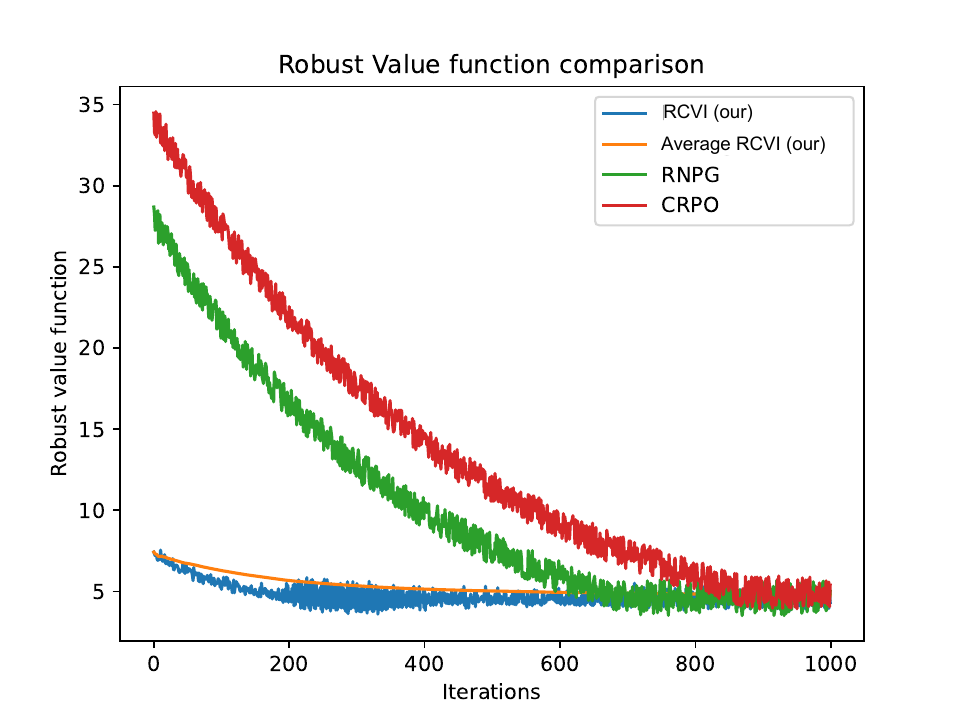}
        \caption{The value function at each iteration step ($\rho=0.1$)}
        \label{fig:CRS_vf_app_0.1}
    \end{subfigure}%
    ~
    \begin{subfigure}[]{0.5\textwidth}
        \centering
        \includegraphics[height=2.3in]{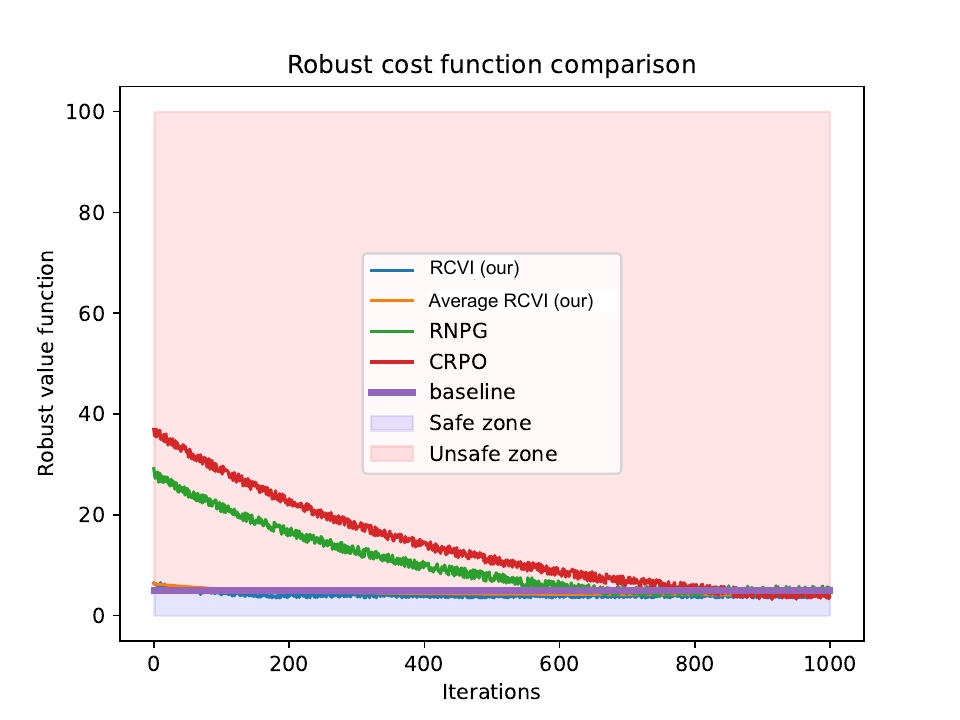}
        \caption{The cost function at each iteration step ($\rho=0.1$)}
        \label{fig:CRS_cf_app_0.1}
    \end{subfigure}
    \caption{The worst case reward value function and cost function update at each iteration step where the x-axis denotes the Iteration number and the y-axis denotes the worst case reward value function at that iteration (left figure) and the worst case cost function at that iteration (right figure) on the CRS environment.}
    \label{fig:CRS_app_list}
\end{figure*}

\begin{figure*}[]
    \centering
    \begin{subfigure}[]{0.5\textwidth}
        \centering
        \includegraphics[height=2.3in]{images/Aistats_VI_vf_Garnet.pdf}
         \caption{The value function at each iteration step}
        \label{app:fig_gar_vf_0.05}
    \end{subfigure}%
    ~ 
    \begin{subfigure}[]{0.5\textwidth}
        \centering
        \includegraphics[height=2.3in]{images/Aistats_VI_cf_Garnet.pdf}
        \caption{The cost function at each iteration step}
        \label{app:fig_gar_cf_0.05}
    \end{subfigure}
    \\
    \begin{subfigure}[]{0.5\textwidth}
        \centering
        \includegraphics[height=2.3in]{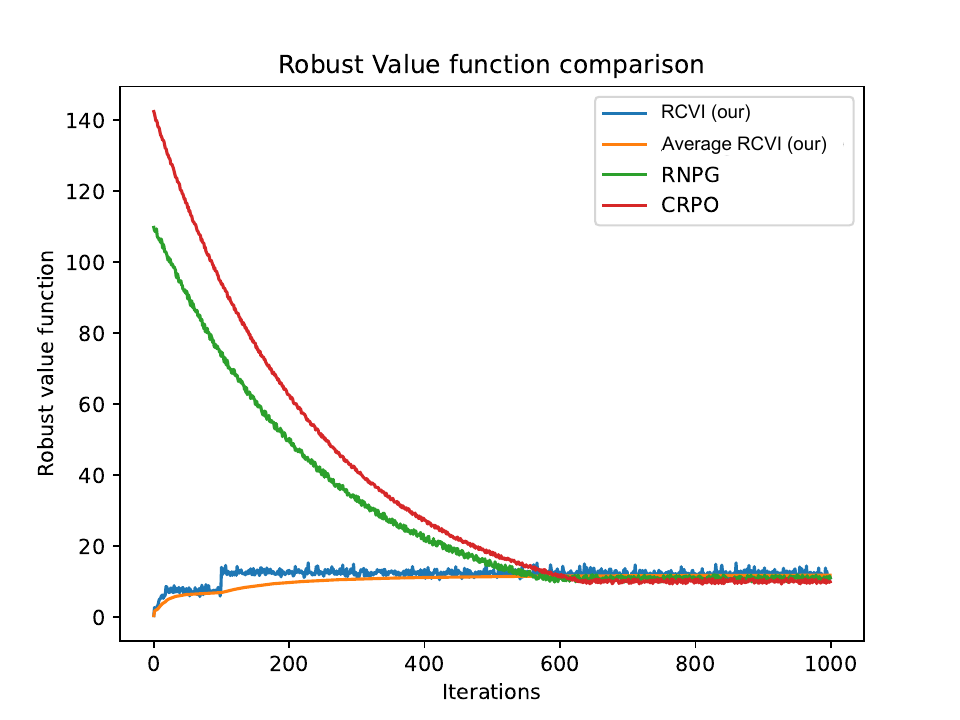}
         \caption{The value function at each iteration step ($\rho=0.01$)}
        \label{app:fig_gar_vf_0.01}
    \end{subfigure}%
    ~ 
    \begin{subfigure}[]{0.5\textwidth}
        \centering
        \includegraphics[height=2.3in]{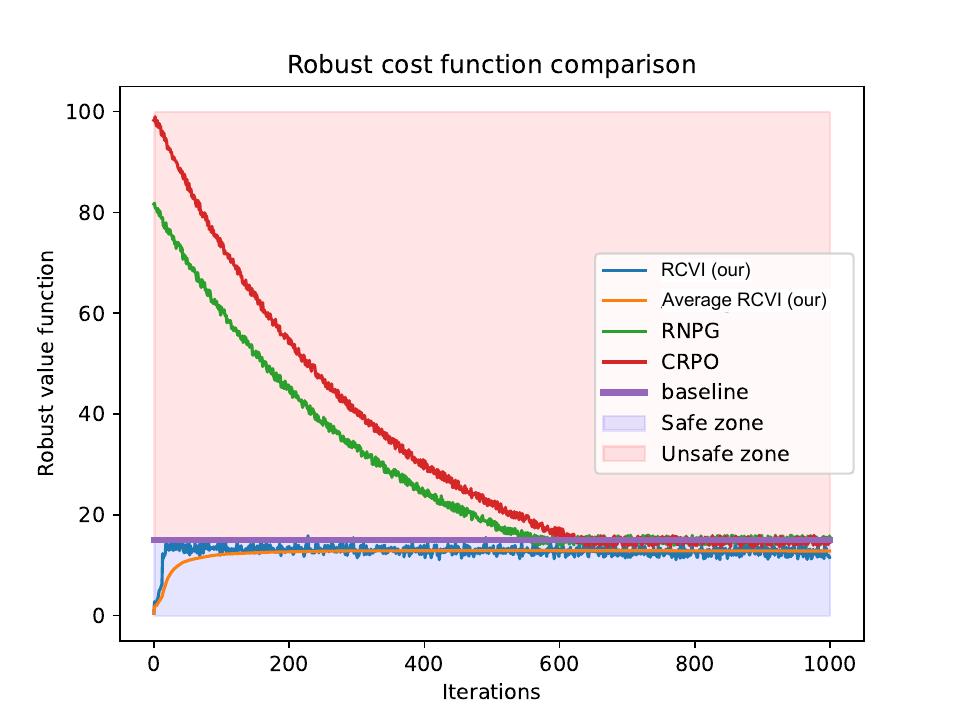}
        \caption{The cost function at each iteration step ($\rho=0.01$)}
        \label{app:fig_gar_cf_0.01}
    \end{subfigure}
    \\
    \begin{subfigure}[]{0.5\textwidth}
        \centering
        \includegraphics[height=2.3in]{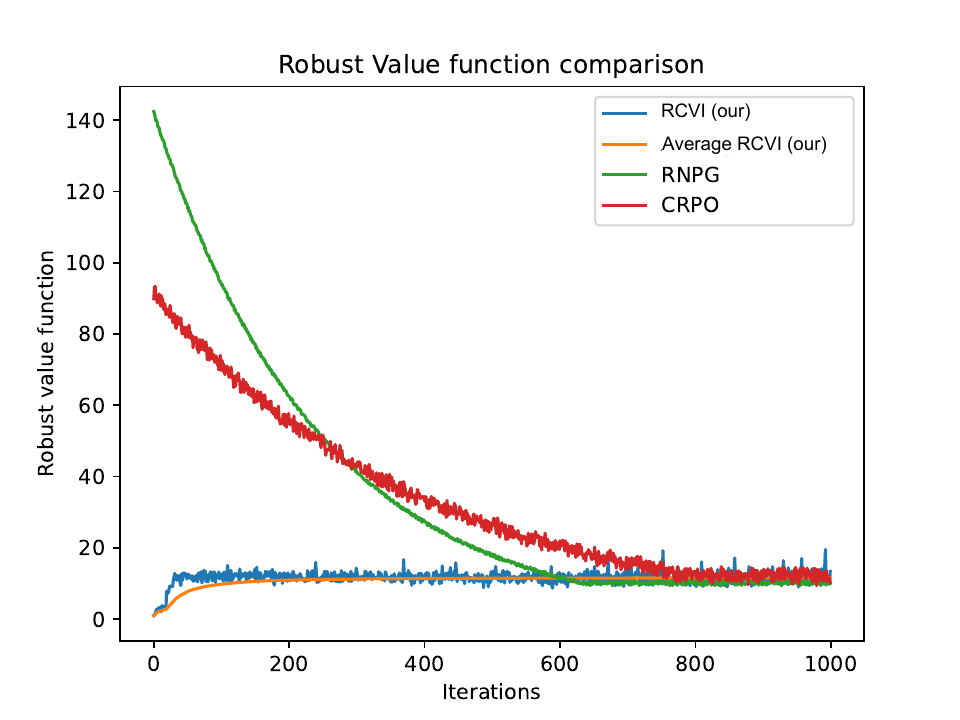}
         \caption{The value function at each iteration step ($\rho=0.1$)}
        \label{app:fig_gar_vf_0.1}
    \end{subfigure}%
    ~ 
    \begin{subfigure}[]{0.5\textwidth}
        \centering
        \includegraphics[height=2.3in]{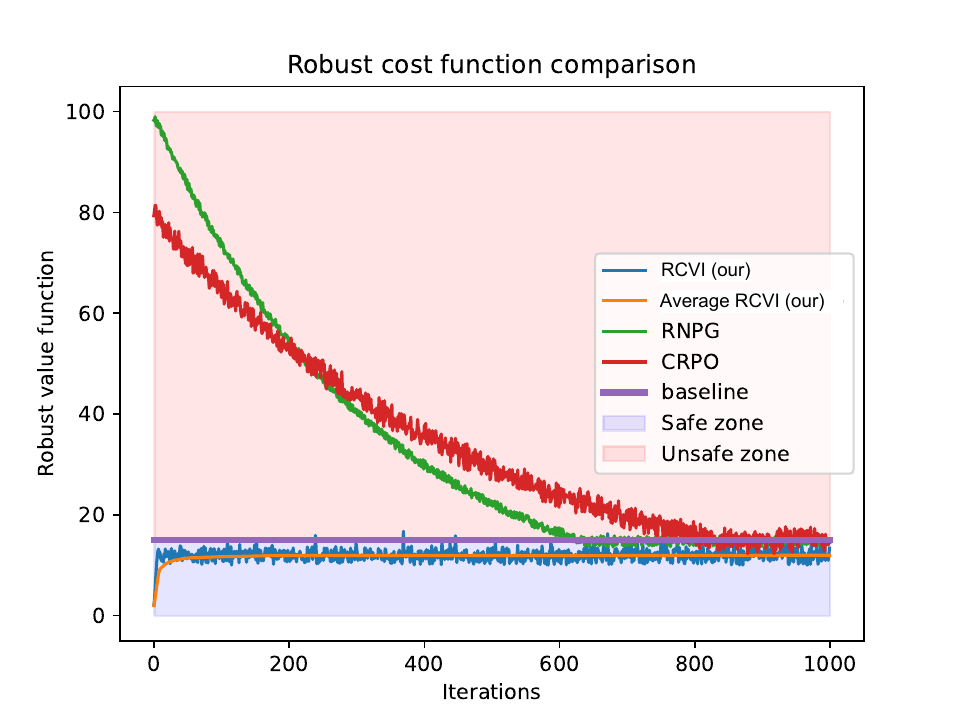}
        \caption{The cost function at each iteration step ($\rho=0.1$)}
        \label{app:fig_gar_cf_0.1}
    \end{subfigure}
    \caption{The worst case reward value function and cost function update at each iteration step where the x-axis denotes the Iteration number and the y-axis denotes the worst case value function at that iteration (left figure) and the worst case cost function at that iteration (right figure)  on the cost based Garnet environment.}
    \label{fig:Garnet_app_list}
\end{figure*}
\vspace{-1em}
\section{Policy evaluation using KL-divergence as the f-divergence measure}\label{kl_divergence}
\vspace{-0.5em}
For KL divergence uncertainty set, the worst case value function can be achieved by the following expression from Proposition 5 in \cite{xu2023improved}
\begin{align}\label{eq:worst_kl}
L_{\cP_{h,x,a}}V
= -\inf_{\lambda\in[0,H/\rho]}\Big\{\lambda\rho+\lambda\log\,
\mathbb E_{x'\sim P_h^o(\cdot\mid x,a)}\!\big[\exp\!\big(-V(x')/\lambda\big)\big]\Big\}
\end{align}
Then by Lemma 4 in \cite{iyengar2005robust} we have the worst case model for $j=r,g$ as
\begin{align}\label{eq:worst_model_kl}
P_{j,h}^*\propto P_h^0\exp(-V_{j,h+1}(x')/\lambda_j^*)
\end{align}
where $\lambda_j^*$ is the solution of the convex optimization problem in (\ref{eq:worst_kl}) for $V_{j,h}$. Note that here $x'=(s',c-g_h)$, thus, here one only needs to solve it for the state component rather than the budget evolution component. Also, (\ref{eq:worst_kl}) is a convex optimization problem. Yet, solving for all state-action pair is computationally expensive.  We thus consider an update given by \cite{Epigraph} which we adapt for the episodic case recursively starting from state $H+1$. Note that $V_{r,H+1}(\cdot,\cdot)=0$ whereas $V_{g,H+1}(\cdot,c)=-c$. 

In particular, we consider the following update
\begin{align}\label{eq:equivalent}
& Q_{g,h}(s,c,a)=\sum_{s^{\prime}}P_{g,h}^*(s^{\prime}|s,a)V_{g,h+1}(s',c-g_h), \quad P_{g,h}^*\propto P_h^0(\cdot|s,a)\exp(-V_{g,h+1}(s',c-g_h)/C'_{g,KL}),\nonumber\\
& Q_{r,h}(s,c,a)=r_h(s,a)+\sum_{s^{\prime}}P_h^*(s^{\prime}|s,a)V_{r,h+1}(s^{\prime},c-g_h), \quad P_{r,h}^*\propto P_h^0(\cdot|s,a)\exp(-V_{r,h+1}(s^{\prime},c-g_h)/C'_{r,KL})
\end{align}
Once we obtain $Q$ values, we will find the policy as described in Algorithm~\ref{algo:rs-cmdp}, and update the $V$-value at step $h$. 
Then by Lemma 5 in \cite{Epigraph}, for any $C'_{j,KL}>0$ there exists $\rho>0$ such that the solution in (\ref{eq:worst_kl}) converges to $\rho$ showing the equivalence. We use this expression in (\ref{eq:equivalent}) for the robust value function update. 

For RNPG, and RCRPO we use the KL-divergence evaluator adapted to the episodic case exactly as described in \cite{Epigraph} as they are policy optimization based algorithms. 

\section{Markovian Policies in the Augmented State is optimal}\label{sec:markovian}
Denote the induced rectangular family on $\cX$ by $\tilde{\mathcal P}_h(x,a)$ where $\cX$ contains all the augmented state-space $\cS\times \cC$.

Policies may be history-dependent and randomized: $A_h\sim\pi_h(\cdot\mid \mathcal H_h)$,
where $\mathcal H_h$ is the full history. We now restate  the robust expectation-constrained problem as described in (\ref{eq:aug_prob}) here with the optimal policy potentially can be entirely history-dependent.
\[
\max_{\pi}\quad R(\pi):=\inf_{\tilde P}\E_{\tilde P,\pi}\!\Big[\sum_{h=1}^H r_h(S_h,A_h)\Big]
\quad\text{s.t.}\quad
G(\pi):=\inf_{\tilde P}\E_{\tilde P,\pi}\!\Big[\sum_{h=1}^H g_h(S_h,A_h)\Big]\le b.
\tag{$\star$}
\]

\paragraph{Optimization over conditional action distributions.}
For any policy $\pi$, define its \emph{Markovized} conditional action kernels on $\cX$:
\[
\alpha_h^\pi(a\mid x)\ :=\ \Pr_\pi(A_h=a\mid X_h=x),\qquad x=(s,c)\in\cX,\ a\in\cA.
\]
We will show both $R(\pi)$ and $G(\pi)$ depend on $\pi$ only via
$\alpha^\pi:=\{\alpha_h^\pi(\cdot\mid x)\}_{h,x}$.

\paragraph{Robust stage operators (reward and cost).}
For bounded $V:\cX\to\mathbb R$ and $\alpha(\cdot\mid x)\in\Delta(\cA)$, define
\begin{align*}
(\mathbb T^{(r)}_h V)(x,\alpha)
&:= \inf_{P\in \tilde{\mathcal P}_h}\ \E_{a\sim\alpha(\cdot\mid x)}
\Big[r_h(x,a)+\E_{x'\sim P(\cdot\mid x,a)}V(x')\Big],\\
(\mathbb T^{(g)}_h V)(x,\alpha)
&:= \inf_{P\in \tilde{\mathcal P}_h}\ \E_{a\sim\alpha(\cdot\mid x)}
\Big[\E_{x'\sim P(\cdot\mid x,a)}V(s^{\prime},c-g_h(\cdot,a))\Big].
\end{align*}
here, $x=(s,c)$, $P_h(\cdot|s,c,a)=P_h(\cdot,c-g_h(\cdot,a)|s,c)$. 

\textbf{Rectangularity} implies the inner infimum separates pointwise in $(x,a)$; thus
\[
(\mathbb T^{(\cdot)}_h V)(x,\alpha)
= \E_{a\sim\alpha(\cdot\mid x)}
\Big[\kappa_h^{(\cdot)}(x,a;V)\Big],\quad
\kappa_h^{(\cdot)}(x,a;V):=\text{stage term}+\inf_{P\in\tilde{\mathcal P}_h(x,a)}\E_{x'\sim Q}V(x').
\]
Hence the operator depends on the policy only via $\alpha(\cdot\mid x)$.

\paragraph{Robust value recursions driven by $\alpha$.}
Given $\alpha=\{\alpha_h(\cdot\mid x)\}$, define reward and cost value functions:
\[
U_{H+1}^{\alpha}\equiv 0,\quad U_h^{\alpha}(x) := (\mathbb T^{(r)}_h U_{h+1}^{\alpha})(x,\alpha_h),\qquad
C_{H+1}^{\alpha}(\cdot,c)\equiv -c,\quad C_h^{\alpha}(x) := (\mathbb T^{(g)}_h C_{h+1}^{\alpha})(s,c,\alpha_h),
\]
and set $R(\alpha):=U^{\alpha}_1(x_1)$, $G(\alpha):=C^{\alpha}_1(x_1)$ with $x_1=(s_1,b)$.

\begin{lemma}[Policy dependence only via $\alpha$]
\label{lem:alpha-suff-c}
For any history-dependent randomized policy $\pi$,
\[
R(\pi)=R(\alpha^\pi),\qquad G(\pi)=G(\alpha^\pi).
\]
\end{lemma}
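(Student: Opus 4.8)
The plan is to decouple the two difficulties in $(\star)$ --- the history dependence of $\pi$ and the worst-case infimum over $\tilde{\mathcal P}$ --- and handle them in sequence. First I would fix an arbitrary rectangular kernel $\tilde P=\{\tilde P_h(\cdot\mid x,a)\}$ and show that, for this \emph{fixed} model, both $\E_{\tilde P,\pi}[\sum_h r_h]$ and $\E_{\tilde P,\pi}[\sum_h g_h]$ depend on $\pi$ only through its conditional kernels $\alpha^\pi$. This is the classical Markovization-of-a-policy fact, but it must be run on the augmented chain on $\cS\times\cC$, whose dynamics are $\mathbb P^{\aug}_h(s',c'\mid s,c,a)=\mathbb P_h(s'\mid s,a)$ for $c'=c-g_h(s,a)$ and $0$ otherwise. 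The given rectangular separation then lets me express the robust value of any Markov policy $\alpha$ as the backward recursion $U^{\alpha},C^{\alpha}$, so that $R(\alpha^\pi)=U_1^{\alpha^\pi}(x_1)$ and $G(\alpha^\pi)=C_1^{\alpha^\pi}(x_1)$ hold by definition; the remaining work is to reconcile these with the infimum-of-expectations definitions of $R(\pi),G(\pi)$.

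For the fixed-model step I would prove by forward induction on $h$ that $\Pr_{\tilde P,\pi}(X_h=x)=\Pr_{\tilde P,\alpha^\pi}(X_h=x)$ for every augmented state $x=(s,c)$. The base case is immediate because $X_1=(s_1,b)$ deterministically. In the inductive step I use the defining identity $\Pr_{\tilde P,\pi}(X_h=x,A_h=a)=\Pr_{\tilde P,\pi}(X_h=x)\,\alpha^\pi_h(a\mid x)$ together with the common transition $\tilde P_h(\cdot\mid x,a)$ and the deterministic budget update to push the marginal one step forward; since both policies then induce the same state-action marginals at every step, the expected cumulative reward agrees, and the same computation handles the utility (in the augmented form the utility return equals $\E[-c_{H+1}]$, a functional of the terminal augmented-state marginal, which the induction matches).

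With the fixed-model identity in hand, one inequality is immediate. Let $P^\star$ be a minimizing kernel for $\pi$ (it exists by compactness of the rectangular set and continuity), and define $\alpha^\pi$ through the conditional law under $P^\star$. Then $\E_{P^\star,\alpha^\pi}[\sum_h r_h]=\E_{P^\star,\pi}[\sum_h r_h]=R(\pi)$, and since $P^\star$ is a feasible choice in the infimum defining $R(\alpha^\pi)$ we obtain $R(\alpha^\pi)=\inf_{\tilde P}\E_{\tilde P,\alpha^\pi}[\sum_h r_h]\le R(\pi)$, and identically for $G$.

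The step I expect to be the main obstacle is the reverse inequality, which is precisely where the model-dependence of the Markovization bites: $\alpha^\pi$ is built from a conditional law $\Pr_\pi(\cdot\mid X_h=x)$ that itself changes with the model one conditions under, so one cannot simply interchange $\inf_{\tilde P}$ with the passage $\pi\mapsto\alpha^\pi$. To close the gap I would aim to show that a single kernel $P^\star$ can be taken worst-case \emph{simultaneously} for $\pi$ and for the induced $\alpha^\pi$, so that both robust values equal $\E_{P^\star,\cdot}[\cdot]$. Rectangularity is the essential lever, since it turns the adversary's problem into an independent minimization at each $(h,x,a)$; the crux is to argue that, on the support of the occupation measure, the per-$(h,x,a)$ minimizer defining $P^\star$ for $\pi$ coincides with the robust-dynamic-programming minimizer driving $U^{\alpha^\pi},C^{\alpha^\pi}$. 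This self-consistency between the worst-case model and the conditional kernels it induces is the delicate point; the augmented construction is what keeps it tractable, because the deterministic budget update preserves rectangularity in the augmented coordinates, so the separation used for the reward applies verbatim to the utility.
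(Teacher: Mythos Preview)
Your route differs from the paper's. The paper simply asserts that, because $(X_h)$ is controlled Markov and the uncertainty is rectangular, the robust value of the history-dependent $\pi$ is given by the backward recursion driven by $\alpha^\pi$, and then observes this recursion is literally the one defining $R(\alpha^\pi)$. It never confronts the point you raise: $\alpha^\pi_h(\cdot\mid x)=\Pr_\pi(A_h=\cdot\mid X_h=x)$ is well-defined only once a transition model is fixed, and different models yield different Markovizations. Your plan---fixed-model Markovization, then take the infimum---is the honest way to unpack this, and your Step~2 (build $\alpha^\pi$ from a worst-case $P^\star$ for $\pi$, obtaining $R(\alpha^{\pi,P^\star})\le R(\pi)$) is correct.

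The gap is Step~3, and the self-consistency you hope for---$P^\star$ simultaneously worst-case for $\pi$ and for $\alpha^{\pi,P^\star}$---fails in general. Take $H=3$ with trivial utility (so the augmented state is just the raw state), a single action at steps $1,2$, transitions $s_0\to\{s_1,s_2\}$ with probability $p\in[0.3,0.7]$ and $s_i\to\{s_3,s_4\}$ with $q_i\in[0.3,0.7]$, and step-$3$ rewards $r(s_3,a)=2$, $r(s_4,b)=1$, all else $0$. Let $\pi$ play $a$ at step $3$ iff it passed through $s_1$. Then $R(\pi)=\min[2pq_1+(1-p)(1-q_2)]=0.39$, attained at $P^\star=(p,q_1,q_2)=(0.3,0.3,0.7)$. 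The Markovization under $P^\star$ gives $\alpha(a\mid s_3)=9/58$, $\alpha(a\mid s_4)=1/2$, and one computes $R(\alpha^{\pi,P^\star})=\tfrac12-\tfrac{11}{58}\cdot 0.7=213/580\approx 0.367$, with the adversary's best response now $q_1=q_2=0.7\neq P^\star$. Thus $R(\alpha^{\pi,P^\star})<R(\pi)$ strictly, so no model-free definition of $\alpha^\pi$ can deliver the claimed equality for this $\pi$, and your proposed closing argument cannot be completed. The paper's one-line proof does not escape this either; it simply does not confront it.
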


\begin{proof}
Define $U^\pi_h$ and $C^\pi_h$ by the same recursions as above but with $\alpha_h^\pi(\cdot\mid x)=\Pr_\pi(A_h=\cdot\mid X_h=x)$.
Because the augmented process $(X_h)$ is controlled Markov and the uncertainty is rectangular, these recursions are the
robust DPs on $\cX$. The recursions for $(U^\pi_h)$ and $(U^{\alpha^\pi}_h)$ coincide with the same terminal condition;
likewise for $(C^\pi_h)$ and $(C^{\alpha^\pi}_h)$. Backward induction yields equality at all $h$, hence at $x_1$.
\end{proof}

\paragraph{Realizing $\alpha$ by a Markov policy.}
Given any collection $\alpha=\{\alpha_h(\cdot\mid x)\}$, define the \emph{Markov randomized} policy on $\cX$:
\[
\mu_h(\cdot\mid X_h=x)\ :=\ \alpha_h(\cdot\mid x),\qquad h=1,\dots,H.
\]
Then the robust DP recursions that define $U^\alpha,C^\alpha$ are exactly those under $\mu$, so
$R(\mu)=R(\alpha)$ and $G(\mu)=G(\alpha)$.

\begin{theorem}[Restatement of Theorem~\ref{thm:markov-suff-c}]
For the robust expectation-constrained problem \textnormal{($\star$)},
\[
\max_{\pi}\ \{R(\pi):\ G(\pi)\le b\}
\;=\;
\max_{\mu\ \text{Markov on }\cX}\ \{R(\mu):\ G(\mu)\le b\}.
\]
Hence Markov \emph{randomized} policies on $(s,c)$ are sufficient.
\end{theorem}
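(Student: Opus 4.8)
The plan is to prove the claimed equality by a two-inclusion (sandwich) argument, treating the reward functional $R$ and the constraint functional $G$ in parallel and leaning entirely on the two facts already assembled above: Lemma~\ref{lem:alpha-suff-c}, which says that both $R$ and $G$ factor through the Markovized kernels $\alpha^\pi$, and the realization step, which turns any collection $\alpha$ into a Markov policy $\mu$ on $\cX$ with $R(\mu)=R(\alpha)$ and $G(\mu)=G(\alpha)$.

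The inequality $\max_{\mu\text{ Markov}}\{R(\mu):G(\mu)\le b\}\le\max_\pi\{R(\pi):G(\pi)\le b\}$ is immediate, since every Markov randomized policy on $\cX$ is in particular a history-dependent randomized policy; maximizing over a larger feasible set can only enlarge the optimum. I would dispose of this direction in one line.

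For the reverse (and substantive) inequality, I would start from an arbitrary feasible history-dependent policy $\pi$, i.e.\ one with $G(\pi)\le b$, and form its Markovized kernels $\alpha^\pi_h(a\mid x)=\Pr_\pi(A_h=a\mid X_h=x)$. Lemma~\ref{lem:alpha-suff-c} gives $R(\pi)=R(\alpha^\pi)$ and $G(\pi)=G(\alpha^\pi)$ simultaneously. I would then invoke the realization construction to build the Markov policy $\mu_h(\cdot\mid x):=\alpha^\pi_h(\cdot\mid x)$, for which the robust DP recursions defining $U^{\alpha^\pi},C^{\alpha^\pi}$ are exactly those under $\mu$; hence $R(\mu)=R(\alpha^\pi)=R(\pi)$ and $G(\mu)=G(\alpha^\pi)=G(\pi)\le b$. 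Thus $\mu$ is a Markov policy that is feasible and attains the same reward as $\pi$, so the Markov optimum is at least $R(\pi)$. Taking the supremum over feasible $\pi$ closes the sandwich and yields equality.

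The real content is therefore hidden in Lemma~\ref{lem:alpha-suff-c}, and the part I would treat most carefully is ensuring its two hypotheses are genuinely in force. First, I would check that $\alpha^\pi$ is well defined, fixing the kernels arbitrarily on the $\Pr_\pi(X_h=x)=0$ events since those do not affect $R$ or $G$. Second, and more importantly, I would verify that the backward robust recursions are valid on the augmented chain, which requires the uncertainty set to remain rectangular on $\cX$; this is exactly where determinism (or a known distribution) of $g_h$ is essential, since it makes the budget update $c_{h+1}=c_h-g_h$ a deterministic function of $(x_h,a_h)$ and thus keeps the augmented kernel factorized across $(x,a)$. I would also flag, to forestall a natural objection, that the infima defining $R$ and $G$ are taken separately over $\tilde P$, so no single shared worst-case model is needed and the Markovization applies to each functional independently — this is precisely what makes the constrained argument go through despite the failure of Markovian sufficiency in the original (unaugmented) state space.
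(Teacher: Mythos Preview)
Your proposal is correct and follows essentially the same route as the paper's own proof: both directions are handled exactly as the paper does, with the trivial inclusion coming from Markov policies being a subset of history-dependent ones, and the substantive direction obtained by Markovizing an arbitrary feasible $\pi$ to $\alpha^\pi$, realizing it as a Markov $\mu$, and invoking Lemma~\ref{lem:alpha-suff-c} together with the realization step to preserve both $R$ and $G$. Your additional remarks on well-definedness of $\alpha^\pi$ on null sets and on rectangularity carrying over to the augmented space are helpful sanity checks that the paper leaves implicit.
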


\begin{proof}
($\le$) Given any history-dependent $\pi$, take $\alpha=\alpha^\pi$ and the Markov $\mu$ realizing $\alpha$.
By Lemma~\ref{lem:alpha-suff-c}, $R(\mu)=R(\pi)$ and $G(\mu)=G(\pi)$, preserving feasibility and objective value.
($\ge$) The RHS optimizes over a subset of all policies, so it is $\le$ the LHS. Equality follows.
\end{proof}
\label{appx:KL_evaluator}

\section{Proofs for TV-distance uncertainty set}\label{proof:tv}
The key result to prove Theorem~\ref{thm:tv} is to show Lemma~\ref{lem:diff_v}. We use Lemma~\ref{lem:diff_v} to show Lemmas~\ref{lem:L-1lip}, and \ref{lem:V-propagation}. Combining them we prove Theorem~\ref{thm:tv}. We first prove Lemmas~\ref{lem:diff_v}, \ref{lem:L-1lip}, and \ref{lem:V-propagation}. Subsequently, we show Theorem~\ref{thm:tv}.
\begin{lemma}\label{lem:diff_v}
Fix any $(h,s,\hat{c},a)$,and $V$, then for any $\theta,\delta>0$, with probability $1-\delta$ we have
\begin{align}
||L_{\cP_{h,s,a}}V-L_{\chP_{h,s,a}}V||\leq \sqrt{\dfrac{H^2\log(4H/\theta\delta)}{2N}}+2\theta
\end{align}
\end{lemma}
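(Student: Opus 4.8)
The plan is to exploit the dual representation of the robust TV operator stated just before the lemma, which writes $L_{\cP_{h,s,a}}V$ as a one–dimensional problem over a scalar dual variable $\eta$ in which the only dependence on the centering measure enters through a single expectation $\E_{s'\sim P^0_{h,s,a}}[(\eta-V(s'))_+]$. Writing the same dual for both the true center $P^0_{h,s,a}$ and the empirical center $\hat P^0_{h,s,a}$, the terms that are affine in $\eta$ and the $\min_{s'}V$ term are identical, so the difference collapses to a difference of the two expectations of the fixed, $\eta$–parametrized payoff $\phi_\eta(s') := (\eta - V(s'))_+$. First I would apply the elementary bound $|\inf_\eta f(\eta)-\inf_\eta g(\eta)|\le\sup_\eta|f(\eta)-g(\eta)|$ to get
\[
\bigl| L_{\cP_{h,s,a}}V - L_{\chP_{h,s,a}}V \bigr|
\;\le\; \sup_{\eta}\;\bigl| \E_{P^0_{h,s,a}}[\phi_\eta] - \E_{\hat P^0_{h,s,a}}[\phi_\eta] \bigr|.
\]

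The crucial observation for obtaining a clean $H^2$ (rather than $H^2/\rho^2$) dependence is that Hoeffding's inequality only sees the \emph{range} of $\phi_\eta$ across successor states, not its magnitude. Since $x\mapsto(\eta-x)_+$ is $1$–Lipschitz, $|\phi_\eta(s_1')-\phi_\eta(s_2')|\le|V(s_1')-V(s_2')|\le\mathrm{span}(V)\le H$, so for every fixed $\eta$ the variable $\phi_\eta(S')$ lies in an interval of width at most $H$, even though $\eta$ itself may be as large as $2H/\rho$. Because $\hat P^0_{h,s,a}$ is the empirical average of $N$ i.i.d.\ draws from $P^0_{h,s,a}$ and $V$ is fixed (being built only from samples at steps $>h$, hence independent of the $N$ fresh samples drawn at step $h$), $\E_{\hat P^0_{h,s,a}}[\phi_\eta]$ is an empirical mean of $N$ bounded i.i.d.\ terms, and Hoeffding gives, for each fixed $\eta$ and each $\delta'>0$, a deviation at most $H\sqrt{\log(2/\delta')/(2N)}$ with probability $1-\delta'$.

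To promote this pointwise bound to the supremum over $\eta$, I would place a $\theta$–net on the effective range of the dual variable. A short first–order check shows the minimizing $\eta^\star$ lies in $[\min_{s'}V,\max_{s'}V]$ (for $\eta$ above $\max V$ the dual objective has slope $\rho>0$, and for $\eta$ below $\min V$ it has slope $-1<0$), so the net need only cover an interval of width $O(H)$, giving $K=O(H/\theta)$ grid points. Since $\eta\mapsto\E_P[\phi_\eta]$ is $1$–Lipschitz in $\eta$ for either center, moving from an arbitrary $\eta$ to its nearest grid point costs at most $\theta$ for each of the two expectations, which produces the additive $2\theta$. A union bound over the $K$ grid points with $\delta'=\delta/K$ then turns the per–point Hoeffding bound into $H\sqrt{\log(2K/\delta)/(2N)}=\sqrt{H^2\log(4H/(\theta\delta))/(2N)}$, and adding the discretization error yields the stated inequality.

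The main obstacle is precisely the second step: one must resist bounding $\phi_\eta$ by its absolute size (which scales like $H/\rho$ through the range of $\eta$) and instead argue through the span of $V$ that the quantity controlling the concentration is only $H$. Combined with restricting the $\eta$–net to a width-$O(H)$ interval so that the logarithmic factor is $\log(4H/(\theta\delta))$ rather than $\log(4H/(\rho\theta\delta))$, this is what makes the bound $\rho$–free and delivers the advertised $H$ dependence; the remaining pieces (the dual reduction, the Lipschitz discretization, and the union bound) are routine.
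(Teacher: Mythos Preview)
Your approach is correct and reaches the same bound, but the route differs from the paper's in two places worth noting.

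To keep the Hoeffding range at $H$ you use that $x\mapsto(\eta-x)_+$ is $1$-Lipschitz and bound the \emph{span} of $\phi_\eta$ by $\mathrm{span}(V)\le H$, uniformly in $\eta$. The paper instead splits the dual interval into $\eta\in[0,H]$ and $\eta\in[H,2H/\rho]$: on the first piece $(\eta-V)_+\in[0,H]$ directly, while on the second the positive part drops out, so the difference of expectations collapses to the single quantity $\E_{\hat P}[V]-\E_P[V]$, handled by one Hoeffding application with no net. Your span argument is conceptually cleaner and avoids the case split; the paper's split has the minor advantage that the large-$\eta$ regime becomes a single point rather than requiring any further reasoning.

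For the net size you localize the dual minimizer to $[\min V,\max V]$ via a first-order check. This is correct, but as written there is a small ordering slip: having already passed to $\sup_\eta|f(\eta)-g(\eta)|$ over \emph{all} $\eta$, the minimizer's location does not by itself shrink that supremum. The easy fix is either to use directly the sharper estimate $|\inf f-\inf g|\le\sup_{\eta\in I}|f(\eta)-g(\eta)|$ for any interval $I$ containing both minimizers (evaluate each infimum at the other's minimizer), or to observe---which is essentially what the paper's case split encodes---that for $\eta\ge\max V$ the difference $|\E_{\hat P}[\phi_\eta]-\E_P[\phi_\eta]|$ is the constant $|\E_{\hat P}[V]-\E_P[V]|$ and for $\eta\le\min V$ it vanishes, so the unrestricted supremum already equals the one over $[\min V,\max V]$ plus that single constant. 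With either patch your argument goes through and delivers the stated $\sqrt{H^2\log(4H/(\theta\delta))/(2N)}+2\theta$.
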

\begin{proof}
Fix any $V \in \mathcal V$ with $\|V\|_\infty \le H$, and fix $(h,x,a) \in [H]\times \mathcal X \times A$, where the augmented state is $x=(s,c)$ with $s \in S$ and $b$ the remaining utility budget.  
Let $P_h(\cdot \mid x,a)$ and $\hat P_h(\cdot \mid x,a)$ denote the true and empirical transition kernels over the augmented state space $\mathcal X = \cS \times \mathcal C$.  

From Proposition~1 in \cite{xu2023improved}, we have
\begin{align*}
L_{\cP_{h,x,a}}V
&= - \inf_{\eta \in [0,2H/\rho]}
\Big\{
\mathbb{E}_{x'\sim P^0_h(\cdot|x,a)}[(\eta - V(x'))_+]
+ \big(\eta - \inf_{x''\in \mathcal X} V(x'')\big)_+ \rho - \eta
\Big\}, \\
\tilde L_{\hat{\cP}_{h,x,a}} V
&= - \inf_{\eta \in [0,2H/\rho]}
\Big\{
\mathbb{E}_{x'\sim \hat P^0_h(\cdot|x,a)}[(\eta - V(x'))_+]
+ \big(\eta - \inf_{x''\in \mathcal X} V(x'')\big)_+ \rho - \eta
\Big\}.
\end{align*}

Fix $\rho>0$. Then
\begin{align*}
&\quad  L_{\cP_{h,x,a}} V - L_{\hat{\cP}_{h,x,a}} V \\
&= \inf_{\eta\in[0,2H/\rho]}
\Big\{
\mathbb{E}_{x'\sim \hat P^0_h(\cdot|x,a)}[(\eta - V(x'))_+]
+ \big(\eta - \inf_{x''} V(x'')\big)_+\rho - \eta
\Big\} \\
&\quad - \inf_{\eta\in[0,2H/\rho]}
\Big\{
\mathbb{E}_{x'\sim P^0_h(\cdot|x,a)}[(\eta - V(x'))_+]
+ \big(\eta - \inf_{x''} V(x'')\big)_+\rho - \eta
\Big\} \\
&\overset{(a)}{\le}
\sup_{\eta\in[0,2H/\rho]}
\Big|
\mathbb{E}_{x'\sim \hat P^0_h}[\,(\eta - V(x'))_+\,]
- \mathbb{E}_{x'\sim  P^0_h}[\,(\eta - V(x'))_+\,]
\Big| \\
&\le \max\Bigg\{
\sup_{\eta\in[0,H]}
\Big|
\mathbb{E}_{\hat P_h^0}[\,(\eta - V(x'))_+\,]
- \mathbb{E}_{P^0_h}[\,(\eta - V(x'))_+\,]
\Big|,
\;
\sup_{\eta\in[H,2H/\rho]}
\Big|
\mathbb{E}_{\hat P^0_h}[\,(\eta - V(x'))_+\,]
- \mathbb{E}_{P^0_h}[\,(\eta - V(x'))_+\,]
\Big|
\Bigg\}.
\end{align*}

Step (a) uses the inequality $|\inf_x f(x)-\inf_x g(x)| \le \sup_x |f(x)-g(x)|$.  
For $\eta\in[H,2H/\rho]$, since $\|V\|_\infty \le H$, we have $\eta - V(x') \ge 0$ for all $x'$, hence
\[
\mathbb{E}[(\eta - V(x'))_+] = \eta - \mathbb{E}[V(x')],
\]
and the difference reduces to
\[
\mathbb{E}_{\hat P^0_h}[V(x')] - \mathbb{E}_{P^0_h}[V(x')].
\]

Thus,
\begin{align}
L_{\cP_{h,x,a}} V - L_{\hat{\cP}_{h,x,a}}V
&\le \max\Bigg\{
\sup_{\eta\in[0,H]}
\Big|
\mathbb{E}_{\hat P^0_h}[\,(\eta - V(x'))_+\,]
- \mathbb{E}_{P^0_h}[\,(\eta - V(x'))_+\,]
\Big|,
\;
\mathbb{E}_{\hat P^0_h}[V(x')] - \mathbb{E}_{ P^0_h}[V(x')]
\Bigg\}.
\tag{*}
\end{align}

Now construct a $\theta$-net $\mathcal N_V(\theta)$ of $[0,H]$ and denote $\nu(x') = (\eta - V(x'))_+$ with $\nu(x') \in [0,H]$.  
Hence, $|\mathcal N_V(\theta)| \le 2H/\theta$. By Lemma~\ref{lem:approx-aug} with probability at least $1-\delta$,
\begin{align}
\sup_{\eta\in[0,H]}
\Big|
\mathbb{E}_{\hat P^0_h}[\,(\eta - V(x'))_+\,]
- \mathbb{E}_{P^0_h}[\,(\eta - V(x'))_+\,]
\Big|\leq 
\max_{\nu^{\prime} \in \mathcal N_V(\theta)}
\big| (\hat P_h^0 - P_h^0)\nu^{\prime} \big|+2\theta
\le
\sqrt{\tfrac{H^2 \log(4H/(\theta\delta))}{2N}}+2\theta.
\label{eq:aug-first}
\end{align}
 where we apply Hoeffding's inequality (Lemma~\ref{lem:hoeffding}) in the last inequality. Similarly, since $V(x') \in [0,H]$, Hoeffding’s inequality (Lemma~\ref{lem:hoeffding}) gives
\begin{align}
\big| (\hat P_h^0 - P_h^0)V \big|
\le
\sqrt{\tfrac{H^2 \log(2/\delta)}{2N}}.
\label{eq:aug-second}
\end{align}

Combining~\eqref{eq:aug-first}--\eqref{eq:aug-second} with~(*) establishes that
\[
 L_{\cP_{h,x,a}} V - L_{\hat{\cP}_{h,x,a}} V
\;\le\;
\sqrt{\tfrac{H^2 \log(4H/(\theta\delta))}{2N}}+2\theta
\]
with probability at least $1-\delta$.  

\end{proof}
\begin{lemma}[1-Lipschitz property of $L$ in its argument]
\label{lem:L-1lip}
Let $V_j,\hat V_j$ satisfy $\norm{V_j}_\infty,\norm{\hat V_j}_\infty\le H$. Then for any kernel $\cQ\in\{\cP,\hP\}$ and any $(h,s,a)$,
\[
\left|\,L_{\cQ_{h,s,a}} V \;-\; L_{\cQ_{h,s,a}} \hat V\,\right|
\;\le\; \norm{V-\hat V}_\infty.
\]
\end{lemma}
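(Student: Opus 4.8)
The plan is to exploit the structural fact that the worst-case operator is an infimum of \emph{linear} functionals of its value-function argument, each of which is an average against a probability distribution and hence non-expansive in the sup-norm. Concretely, from the definition $L_{\cQ_{h,s,a}}V=\inf\{PV:P\in\cQ_{h,s,a}\}$ with $PV=\sum_{s'}P(s')V(s')$, every admissible $P$ lies in $\cQ_{h,s,a}\subseteq\Delta(S)$, so it is a genuine probability measure. The whole argument is the classical ``infimum of uniformly $1$-Lipschitz maps is $1$-Lipschitz,'' specialized to conditional expectations.

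First I would fix an arbitrary $P\in\cQ_{h,s,a}$ and use $P(s')\ge 0$ together with $\sum_{s'}P(s')=1$ to obtain the pointwise-uniform estimate
\[
|PV-P\hat V|=\Big|\sum_{s'}P(s')\big(V(s')-\hat V(s')\big)\Big|\le\sum_{s'}P(s')\,\big|V(s')-\hat V(s')\big|\le\norm{V-\hat V}_\infty .
\]
In particular $PV\le P\hat V+\norm{V-\hat V}_\infty$, where the additive constant is independent of $P$.

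Next I would pass to the infimum. Taking $\inf_{P\in\cQ_{h,s,a}}$ on both sides of $PV\le P\hat V+\norm{V-\hat V}_\infty$ — legitimate precisely because the constant does not depend on $P$ — yields $L_{\cQ_{h,s,a}}V\le L_{\cQ_{h,s,a}}\hat V+\norm{V-\hat V}_\infty$, i.e. $L_{\cQ_{h,s,a}}V-L_{\cQ_{h,s,a}}\hat V\le\norm{V-\hat V}_\infty$. Swapping the roles of $V$ and $\hat V$ gives the reverse inequality, and combining the two produces the claimed bound. Since the argument uses only that $\cQ_{h,s,a}$ is a set of probability distributions, it applies verbatim to both $\cQ=\cP$ and $\cQ=\hP$, and it is unaffected by the state augmentation because the sup-norm is simply taken over the augmented state space (the deterministic budget transition does not alter the convex-combination structure of $PV$).

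There is no genuine obstacle here. The only point needing a line of care is that the perturbation bound $\norm{V-\hat V}_\infty$ is a constant independent of $P$, which is exactly what licenses interchanging it with the infimum; this is immediate from the probability-simplex constraint. I would note that the boundedness hypotheses $\norm{V}_\infty,\norm{\hat V}_\infty\le H$ are not actually needed for this particular estimate (they are recorded because they are used elsewhere in the analysis), so I would avoid invoking them in the proof.
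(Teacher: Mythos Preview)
Your proof is correct and follows essentially the same route as the paper: both exploit that $L_{\cQ_{h,s,a}}$ is an infimum of linear functionals $P\mapsto PV$ with $\|P\|_1=1$, hence non-expansive in $\|\cdot\|_\infty$. The paper phrases this via an $\epsilon$-approximate minimizer and H\"older's inequality ($|Q^T(V_1-V_2)|\le\|Q\|_1\|V_1-V_2\|_\infty$) before letting $\epsilon\downarrow 0$, whereas you bound $PV\le P\hat V+\|V-\hat V\|_\infty$ uniformly in $P$ and pass the constant through the infimum directly; your version is slightly cleaner but the underlying idea is identical.
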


\begin{proof}
We just show that for $j=g$, the proof of $j=r$ is the same. 
The proof is adapted from Lemma 1 of \cite{panaganti2022sample}. Note that
\begin{align}\label{eq:first_in}
& L_{\cQ_{h,s,a}}V_2-L_{\cQ_{h,s,a}}V_1=\inf_{Q}Q^TV_1(s,c-g_h)-\inf_Q Q^TV_2(s,c-g_h)\nonumber\\
& \geq Q^TV_1(s,c-g_h)-\inf_Q Q^T(V_2(s,c-g_h)-V_1(s,c-g_h))
\end{align}
By definition there exists a $Q$, such that 
\begin{align}\label{eq:second_in}
Q^T(V_2(s,c-g_h)-V_1(s,c-g_h))-\epsilon\leq L_{\cQ_{h,s,a,}}(V_2-V_1)
\end{align}
Hence, from (\ref{eq:first_in}), and (\ref{eq:second_in}), 
 \begin{align*}
L_{\cQ_{h,s,a}}V_1-L_{\cQ_{h,s,a}}V_2\leq Q^T(V_1-V_2)+\epsilon\leq ||Q||_1||V_1-V_2||_{\infty}+\epsilon
\end{align*}
Since $\epsilon>0$ is arbitrary, the result follows. 
\end{proof}

\paragraph{Notation.}
Let $V^{\pi}_{g,h}$ and $\hat V^{\pi}_{g,h}$ denote the true and empirical value functions at stage $h$
(possibly depending on exogenous randomness $g_h$). Define
\[
Q^{\pi}_{g,h}(s,c,a) \;:=\; L^{\cP}_{h,s,a}\big(\E_{g_h}[\,V^{\pi}_{g,h+1}\,]\big),
\qquad
\hat Q^{\pi}_{g,h}(s,c,a) \;:=\; L^{\hP}_{h,s,a}\big(\E_{g_h}[\,\hat V^{\pi}_{g,h+1}\,]\big).
\]
We also write $\Delta_h := \norm{V^{\pi}_{g,h}-\hat V^{\pi}_{g,h}}_\infty$. We also denote $\sqrt{\tfrac{H^2 \log(4H/(\theta\delta))}{2N}}+2\theta$ as $\varepsilon_{N,\theta}$. 

\begin{lemma}
\label{lem:V-propagation}
With probability at least $1-\delta$, for all $h\in[H]$, and a fixed $\pi$
\[
\left\lVert\,V^{\pi}_{j,h} \;-\;  \hat V^{\pi}_{j,h}\,\right\rVert_\infty
\;\le\; (H-h+1)\,\varepsilon_{N,\theta}.
\]
\end{lemma}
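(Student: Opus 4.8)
The plan is to prove the bound by backward induction on $h$, starting from the terminal stage $h=H+1$ where the claim holds trivially because both value functions coincide: $V^{\pi}_{j,H+1}(s,c)=\hat V^{\pi}_{j,H+1}(s,c)$ (either $0$ for $j=r$ or $-c$ for $j=g$, independent of the model), so $\Delta_{H+1}=0$ and the factor $(H-(H+1)+1)\varepsilon_{N,\theta}=0$ matches. For the inductive step, I would assume $\lVert V^{\pi}_{j,h+1}-\hat V^{\pi}_{j,h+1}\rVert_\infty\le (H-h)\,\varepsilon_{N,\theta}$ and bound $\lVert V^{\pi}_{j,h}-\hat V^{\pi}_{j,h}\rVert_\infty$. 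Since $V^{\pi}_{j,h}(s,c)=\sum_a\pi(a\mid s,c)\,Q^{\pi}_{j,h}(s,c,a)$ and the analogous identity holds for the empirical quantities with the \emph{same} policy $\pi$ (the policy is fixed throughout), and since $\pi(\cdot\mid s,c)$ is a probability distribution, it suffices to bound $|Q^{\pi}_{j,h}(s,c,a)-\hat Q^{\pi}_{j,h}(s,c,a)|$ uniformly over $(s,c,a)$ and then take a convex combination.

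The core of the argument is the same two-term decomposition already used in Equation~(\ref{eq:decompose_utility_val}) of the main text, applied pointwise at $(s,c,a)$:
\begin{align*}
Q^{\pi}_{j,h}-\hat Q^{\pi}_{j,h}
&= \Big(L_{\cP_{h,s,a}}V^{\pi}_{j,h+1}-L_{\hP_{h,s,a}}V^{\pi}_{j,h+1}\Big)\\
&\quad +\Big(L_{\hP_{h,s,a}}V^{\pi}_{j,h+1}-L_{\hP_{h,s,a}}\hat V^{\pi}_{j,h+1}\Big),
\end{align*}
where I suppress the shared stage term $r_h(s,a)$ (which cancels) and the budget shift $c-g_h$ for readability. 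The first term is the model-estimation error at the \emph{same} argument $V^{\pi}_{j,h+1}$; applying Lemma~\ref{lem:diff_v} with the value function $V^{\pi}_{j,h+1}$ (which satisfies $\lVert\cdot\rVert_\infty\le H$ by the truncation in Lines 7--8 of Algorithm~\ref{algo:rs-cmdp}) bounds it by $\varepsilon_{N,\theta}$ with high probability. The second term is the propagation error under the \emph{fixed} empirical kernel $\hP$; applying the $1$-Lipschitz property of $L$ from Lemma~\ref{lem:L-1lip} bounds it by $\lVert V^{\pi}_{j,h+1}-\hat V^{\pi}_{j,h+1}\rVert_\infty\le(H-h)\,\varepsilon_{N,\theta}$ using the inductive hypothesis. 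Adding the two gives $(H-h+1)\,\varepsilon_{N,\theta}$, closing the induction.

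The main obstacle is the uniformity of the high-probability event across all $(h,s,c,a)$ simultaneously. Lemma~\ref{lem:diff_v} is stated for a \emph{fixed} $(h,s,\hat c,a)$ and a \emph{fixed} $V$, whereas here the value function $V^{\pi}_{j,h+1}$ is itself data-dependent (it is built from the empirical kernels at later stages), so one cannot naively invoke the lemma with $V=V^{\pi}_{j,h+1}$ under a union bound over a fixed net. The clean way to handle this is to note that the decomposition only requires the concentration $\lVert L_{\cP_{h,s,a}}V-L_{\hP_{h,s,a}}V\rVert\le\varepsilon_{N,\theta}$ to hold \emph{uniformly over all} $V$ with $\lVert V\rVert_\infty\le H$ at each fixed $(h,s,a)$, which is exactly what the $\theta$-net covering argument inside the proof of Lemma~\ref{lem:diff_v} already delivers (the net discretizes the threshold $\eta$, not $V$, and the supremum is taken over the relevant function class). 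I would therefore take a union bound over the $H|S||C||A|$ triples (rescaling $\delta$ accordingly, which only affects logarithmic factors absorbed into $\varepsilon_{N,\theta}$), and on this single good event run the deterministic backward induction above. I would also remark that the identical argument applies verbatim to $j=r$ and $j=g$, since Lemmas~\ref{lem:diff_v} and~\ref{lem:L-1lip} are stated for both.
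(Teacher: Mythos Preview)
Your proposal is correct and follows essentially the same backward-induction scheme as the paper: split $Q^{\pi}_{j,h}-\hat Q^{\pi}_{j,h}$ into a concentration term (Lemma~\ref{lem:diff_v}) plus a propagation term (Lemma~\ref{lem:L-1lip}), then sum along the horizon. The only cosmetic difference is the choice of intermediate point: you insert $L_{\hP_{h,s,a}}V^{\pi}_{j,h+1}$, whereas the paper inserts $L_{\cP_{h,s,a}}\hat V^{\pi}_{j,h+1}$; both decompositions are valid and yield the same $(H-h+1)\varepsilon_{N,\theta}$.

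One clarification on your ``main obstacle'' paragraph: in your decomposition the function appearing in the concentration term is $V^{\pi}_{j,h+1}$, the \emph{true} robust value under $\cP$, which is \emph{not} data-dependent (it is fixed once $\pi$ and the nominal model are fixed, and is not built from empirical kernels). So Lemma~\ref{lem:diff_v} applies directly to a deterministic $V$ and a union bound over $(h,s,a)$ suffices; no uniform-in-$V$ argument is needed for your version. Ironically, it is the \emph{paper's} decomposition that plugs the data-dependent $\hat V^{\pi}_{j,h+1}$ into Lemma~\ref{lem:diff_v}, and the paper does not comment on this; your observation that the $\theta$-net inside Lemma~\ref{lem:diff_v} discretizes $\eta$ (not $V$) and hence gives a bound uniform over all $V$ with $\|V\|_\infty\le H$ is exactly what justifies that step. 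Also, the bound $\|V^{\pi}_{j,h+1}\|_\infty\le H$ holds because rewards and utilities are in $[-1,1]$ and $c\in[-H,H]$, not because of the truncation in Lines~7--8 (which acts on the empirical quantities).
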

for $j=r,g$.
\begin{proof}
We prove by backward induction on $h$. We also only show it for $j=g$. The proof is exactly the same for $j=r$.

\emph{Base case ($h=H{+}1$):} $V^{\pi}_{g,H+1}=\hat V^{\pi}_{g,H+1}$ by definition, hence the difference is $0$.

\emph{Inductive step:} assume $\Delta_{h+1}\le (H-h)\varepsilon_{N,\theta}$. Then, for any $(s,c,a)$,
\begin{align*}
Q^{\pi}_{g,h}(s,c,a) - \hat Q^{\pi}_{g,h}(s,c,a)
&= L_{\cP_{h,s,a}}\!\big(\E_{g_h}[V^{\pi}_{g,h+1}(s,c-g_h)]\big)
   - L_{\hP_{h,s,a}}\!\big(\E_{g_h}[\hat V^{\pi}_{g,h+1}(s,c-g_h)]\big) \\
&= \underbrace{L_{\cP_{h,s,a}}\!\big(\E_{g_h}[V^{\pi}_{g,h+1}]\big)
   - L_{\cP_{h,s,a}}\!\big(\E_{g_h}[\hat V^{\pi}_{g,h+1}]\big)}_{\text{(A)}} \\
&\quad + \underbrace{L_{\cP_{h,s,a}}\!\big(\E_{g_h}[\hat V^{\pi}_{g,h+1}]\big)
   - L_{\hP_{h,s,a}}\!\big(\E_{g_h}[\hat V^{\pi}_{g,h+1}]\big)}_{\text{(B)}}.
\end{align*}
By Lemma~\ref{lem:L-1lip}, term (A) is bounded by
\[
|{\rm (A)}| \;\le\; \Big\lVert \E_{g_h}\!\left[V^{\pi}_{g,h+1}-\hat V^{\pi}_{g,h+1}\right]\Big\rVert_\infty
\;\le\; \Delta_{h+1}
\;\le\; (H-h)\,\varepsilon_{N,\theta}.
\]
By Lemma~\ref{lem:diff_v} for any
$f=\E_{g_h}[\hat V^{\pi}_{g,h+1}]$ (which lies in $[-H,H]$),
\[
|{\rm (B)}| \;\le\; \varepsilon_{N,\theta}.
\]
Combining gives
\[
\big|Q^{\pi}_{g,h}(s,c,a) - \hat Q^{\pi}_{g,h}(s,c,a)\big)
\;\le\; (H-h)\,\varepsilon_{N,\theta} + \varepsilon_{N,\theta}
\;=\; (H-h+1)\,\varepsilon_{N,\theta}.
\]
Taking the sup over $(s,c,a)$ yields
$\norm{Q^{\pi}_{g,h}-\hat Q^{\pi}_{g,h}}_\infty \le (H-h+1)\varepsilon_{N,\theta}$,
which is equivalent to the stated bound for $V^{\pi}_{g,h}-\hat V^{\pi}_{g,h}$.
\end{proof}


 Now, we are ready to prove Theorem~\ref{thm:tv}.  First, we show the violation bound.

\subsection{Proving Violation and Sub-optimality Bound}
Let us recall the LP program used by the Algorithm~\ref{algo:rs-cmdp} for the policy at step $h$ if a feasible action exists.
\begin{align}
\mathrm{LP}: \max_{\pi} \langle \pi, \hat{Q}_{r,h}(s,c,\cdot)\rangle,\quad \text{s.t } \langle \pi,\hat{Q}_{g,h}(s,c,\cdot)\rangle\geq -(H-h+1)\epsilon
\end{align}
 \textbf{Violation Bound}: We first show the violation bound which directly follows from Lemma~\ref{lem:V-propagation}.
\begin{lemma}\label{lem:violation_bound}
With probability $1-\delta$,
\[
V^{\hat{\pi}}_{g,1}(s,b) \;\geq\; -2H\epsilon.
\]
\end{lemma}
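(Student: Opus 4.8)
The plan is to establish the violation bound by connecting the algorithm's built-in feasibility slackness to the true robust utility value function via the value-propagation estimate in Lemma~\ref{lem:V-propagation}. The starting point is the observation that by construction of the LP solved in Algorithm~\ref{algo:rs-cmdp}, whenever a feasible action exists at a state $(s,\hat{c})$, the chosen policy satisfies $\langle \hat{\pi}, \hat{Q}_{g,h}(s,\hat{c},\cdot)\rangle \geq -(H-h+1)\epsilon$, and in particular at $h=1$ we obtain $\hat{V}^{\hat{\pi}}_{g,1}(s,b) \geq -H\epsilon$ for the \emph{empirical} robust value function. The task is to transfer this guarantee from the empirical value to the true robust value $V^{\hat{\pi}}_{g,1}(s,b)$.

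First I would invoke Lemma~\ref{lem:V-propagation} with $j=g$, $h=1$, which states that with probability at least $1-\delta$,
\[
\big\lVert V^{\hat{\pi}}_{g,1} - \hat{V}^{\hat{\pi}}_{g,1}\big\rVert_\infty \;\leq\; H\,\varepsilon_{N,\theta}.
\]
Then I would write the trivial decomposition
\[
V^{\hat{\pi}}_{g,1}(s,b) \;=\; \hat{V}^{\hat{\pi}}_{g,1}(s,b) + \big(V^{\hat{\pi}}_{g,1}(s,b) - \hat{V}^{\hat{\pi}}_{g,1}(s,b)\big)\;\geq\; -H\epsilon - H\,\varepsilon_{N,\theta}.
\]
Choosing $N$ and the net resolution $\theta$ as prescribed in Theorem~\ref{thm:tv} so that $\varepsilon_{N,\theta} \leq \epsilon$, this yields $V^{\hat{\pi}}_{g,1}(s,b) \geq -2H\epsilon$, which is exactly the claimed bound. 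The lemma applies to a fixed policy $\hat{\pi}$, and since $\hat{\pi}$ is the deterministic output of the algorithm on the estimated model, one must be slightly careful that the high-probability event in Lemma~\ref{lem:V-propagation} is the same event on which $\hat{\pi}$ is constructed; I would note that the concentration bound holds uniformly over the fixed finite collection of value functions produced by backward induction, so conditioning on that single event suffices.

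The main obstacle, and the step deserving the most care, is the subtlety hidden in ``whenever a feasible action exists.'' The algorithm only enforces the utility constraint at states where some action $a$ satisfies $Q_{g,h}(s,\hat{c},a) \geq -(H-h)\epsilon$; at infeasible states it simply maximizes reward and abandons the utility guarantee. To make the backward-induction argument that yields $\hat{V}^{\hat{\pi}}_{g,1}(s,b) \geq -H\epsilon$ airtight, I would need to argue that the trajectory induced by $\hat{\pi}$ starting from $(s_1,b)$ reaches such infeasible states only with negligible probability, or that the slackness $(H-h+1)\epsilon$ propagates correctly backward so that feasibility at step $h$ guarantees the stated aggregate bound at $h=1$; this propagation is precisely why the per-step slack is made to \emph{grow} as $h$ decreases. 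Once that structural claim is in hand, the remaining passage from empirical to true value is a routine application of Lemma~\ref{lem:V-propagation} and the choice of $N$.
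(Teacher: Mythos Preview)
Your proposal is correct and follows essentially the same approach as the paper: use the algorithm's built-in slackness to get $\hat V^{\hat\pi}_{g,1}(s,b)\ge -H\epsilon$, invoke Lemma~\ref{lem:V-propagation} to bound $\|V^{\hat\pi}_{g,1}-\hat V^{\hat\pi}_{g,1}\|_\infty$ by $H\varepsilon_{N,\theta}$, and combine. Your added remarks on the infeasible-state case and on the data-dependence of $\hat\pi$ actually go beyond what the paper's proof makes explicit, but the core argument is identical.
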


\begin{proof}
From Lemma~\ref{lem:V-propagation} we obtain $|V_{g,h}^{\hat{\pi}}(s,c)-\hat{V}_{g,h}^{\hat{\pi}}(s,c)|\leq H\epsilon$. Since $\hat{V}^{\hat{\pi}}_{g,h}\geq -H\epsilon$, then $V_{g,h}^{\hat{\pi}}\geq -2H\epsilon$. Hence, the result follows. 
\end{proof}
Note that if $N=\tilde{\cO}(H^4/\epsilon^2)$, and $\theta=\epsilon$, we achieve the violation bound result in Theorem~\ref{thm:tv}.
\textbf{Sub-optimality Bound}: Now, we prove the sub-optimality bound. 
\begin{lemma}
With probability at least $1-3\delta$, we have
\[
V^{\pi^*}_{r,1}(s_1,b) - V^{\hat{\pi}}_{r,1}(s_1,b) \;\leq\; H\,\epsilon_{N,\theta}.
\]
\end{lemma}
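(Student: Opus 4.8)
The plan is to decompose the sub-optimality gap into three pieces, following the outline already sketched in the Analysis section, and then bound each piece separately. Concretely, I would write
\begin{align*}
& V^{\pi^*}_{r,1}(s_1,b) - V^{\hat{\pi}}_{r,1}(s_1,b) = \underbrace{\big(V^{\pi^*}_{r,1}(s_1,b) - \hat V^{\pi^*}_{r,1}(s_1,b)\big)}_{\text{(I)}} \\
&\quad + \underbrace{\big(\hat V^{\pi^*}_{r,1}(s_1,b) - \hat V_{r,1}(s_1,b)\big)}_{\text{(II)}} + \underbrace{\big(\hat V_{r,1}(s_1,b) - V^{\hat{\pi}}_{r,1}(s_1,b)\big)}_{\text{(III)}},
\end{align*}
where $\hat V_{r,1}$ is the reward value function computed by the algorithm under the output policy $\hat\pi$. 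Terms (I) and (III) are statistical estimation errors: each compares the true robust value of a fixed policy with its empirical counterpart, so both are bounded by $(H-1+1)\varepsilon_{N,\theta} = H\varepsilon_{N,\theta}$ via Lemma~\ref{lem:V-propagation}, applied to $\pi^*$ and to $\hat\pi$ respectively. Each of these invocations holds with probability $1-\delta$, which together with the event used for term (II) accounts for the $1-3\delta$ confidence.

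The crux, and the main obstacle, is term (II), which must be shown to be nonpositive. This is where the robustness and the budget-augmentation interact with the added slack $(H-h+1)\epsilon$ in the LP constraint of Algorithm~\ref{algo:rs-cmdp}. The key sub-claim is that $\pi^*$ remains \emph{feasible for the empirically constructed per-step LP}: at each augmented state $(s,c)$ where a feasible action exists, the true optimal policy satisfies $\langle \pi^*, \hat Q_{g,h}(s,c,\cdot)\rangle \geq -(H-h+1)\epsilon$. This follows because $\pi^*$ satisfies the true constraint $\langle \pi^*, Q_{g,h}(s,c,\cdot)\rangle \geq 0$ (it is feasible for the original problem in the augmented space by Theorem~\ref{thm:markov-suff-c}), and Lemma~\ref{lem:V-propagation} guarantees $|Q_{g,h} - \hat Q_{g,h}| \leq (H-h+1)\varepsilon_{N,\theta}$, so the empirical utility $Q$-value drops by at most the slack we deliberately inserted. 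Hence $\pi^*$ lies in the feasible region of the LP at step $h$, and since $\hat\pi$ \emph{maximizes} the empirical reward over that feasible region, we get $\hat V_{r,1}(s_1,b) \geq \hat V^{\pi^*}_{r,1}(s_1,b)$, i.e. term (II) $\leq 0$.

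To make this rigorous I would run a backward induction on $h$ starting at $H$: assuming $\hat V_{r,h+1}(\cdot,\cdot) \geq \hat V^{\pi^*}_{r,h+1}(\cdot,\cdot)$ pointwise, one shows at step $h$ that the empirical optimizer $\hat\pi_h$ cannot do worse than $\pi^*_h$ on the reward value, because $\pi^*_h$ is feasible for the LP while $\hat\pi_h$ is the argmax, and then the monotonicity of $L_{\hP_{h,s,a}}$ propagates the inequality backward through the Bellman recursion. The one delicate point to address carefully is the case where no feasible action exists at some augmented state: there the algorithm simply maximizes the reward $Q$-value, so $\hat V_{r,h}(s,c) = \max_a \hat Q_{r,h}(s,c,a) \geq \hat V^{\pi^*}_{r,h}(s,c)$ trivially, and the induction still goes through. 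Combining (I), (II), (III) and taking $\theta=\epsilon$, $N = \tilde{\mathcal{O}}(H^4/\epsilon^2)$ so that $\varepsilon_{N,\theta} = \mathcal{O}(\epsilon)$ yields the stated bound $H\,\epsilon_{N,\theta}$, which after rescaling $\epsilon$ gives $\mathrm{Sub\text{-}Opt}(\hat\pi)\leq\epsilon$.
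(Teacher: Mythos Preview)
Your proposal is correct and follows essentially the same approach as the paper's own proof: the identical three-term decomposition, Lemma~\ref{lem:V-propagation} for terms (I) and (III), and the backward-induction argument showing that $\pi^*$ is feasible for the per-step LP (using the $(H-h+1)\epsilon$ slack together with Lemma~\ref{lem:V-propagation} for $j=g$) so that term (II) is nonpositive, with the infeasible-state case handled trivially. The accounting of the three $\delta$-events and the final parameter choice $\theta=\epsilon$, $N=\tilde{\mathcal O}(H^4/\epsilon^2)$ also match.
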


\begin{proof}
We begin by decomposing the difference as
\begin{align}
V^{\pi^*}_{r,1}(s_1,b) - V^{\hat{\pi}}_{r,1}(s_1,b)
&= \Big(V^{\pi^*}_{r,1}(s_1,b) - \hat V^{\pi^*}_{r,1}(s_1,b)\Big) + \Big(\hat V^{\pi^*}_{r,1}(s_1,b) - \hat V^{\hat\pi}_{r,1}(s_1,b)\Big) + \Big(\hat V^{\hat\pi}_{r,1}(s_1,b) - V^{\hat\pi}_{r,1}(s_1,b)\Big).
\label{eq:decomp}
\end{align}

- The \emph{first} and \emph{third} terms of~\eqref{eq:decomp} compare true and empirical values for the \emph{same policy}.  
  By Lemma~\ref{lem:V-propagation}, each such term can be bounded by at most $(H-h+1)\,\epsilon_{N,\theta}$ at stage $h$. In particular, at the root this gives a contribution of at most $H\epsilon_{N,\theta}$ in total. Each holds with probability $1-\delta$. 

- It remains to bound the \emph{second} term, which compares the optimal policy $\pi^*$ with the empirical optimizer $\hat\pi$ under the empirical model.

\medskip
\noindent\textbf{Induction on $h$.}  
We show that for all $h \in [H]$ and all feasible states $(s,\hat c)$,  
\[
\hat V^{\pi^*}_{r,h}(s,\hat c) \;\leq\; \hat V^{\hat\pi}_{r,h}(s,\hat c).
\]

\emph{Base case ($h=H+1$):} At the terminal step, both policies incur the same return, so the inequality holds trivially.

\emph{Inductive step:} Assume the claim holds at stage $h+1$. At stage $h$, we have
\begin{align}
\hat Q^{\pi^*}_{r,h}(s,\hat c) - \hat Q^{\hat\pi}_{r,h}(s,\hat c)
&= L_{\hat\cP_{h,s,a}} \,\E_{g_h}\!\left[ \hat V^{\pi^*}_{r,h+1}(s,\hat c - \hat g_h) 
 - \hat V^{\hat\pi}_{r,h+1}(s,\hat c - \hat g_h)\right].
\end{align}

- If $(s,\hat c)$ is feasible, then by the induction hypothesis  
  \(\hat V^{\pi^*}_{r,h+1}(s,\hat c - \hat g_h) \leq \hat V^{\hat\pi}_{r,h+1}(s,\hat c - \hat g_h)\),  
  and hence \(\hat V^{\pi^*}_{r,h}(s,\hat c,a) \leq \hat V^{\hat\pi}_{r,h}(s,\hat c,a)\) by construction. Note that by Lemma~\ref{lem:V-propagation}, $V_{g,h+1}^{\pi^*}(\cdot,\hat{c}-\hat{g}_h)$ is feasible for $\mathrm{LP}$ because of the slackness introduced in the constraint.

- If $(s,\hat c)$ is infeasible, then for $\pi^*$ is also infeasible, then by the construction of the algorithm, any infeasible value is dominated, i.e.
  \(\hat V^{\pi^*}_{r,h}(s,\hat c) \leq \hat V^{\hat\pi}_{r,h}(s,\hat c)\) as Algorithm~\ref{algo:rs-cmdp} maximizes the robust reward value function only. 

Thus the inequality holds in both cases. By induction, we obtain
\[
\hat V^{\pi^*}_{r,1}(s_1,b) \;\leq\; \hat V^{\hat\pi}_{r,1}(s_1,b).
\]

\medskip
\noindent\textbf{Conclusion.}  
Putting everything together in~\eqref{eq:decomp}, the second term is nonpositive, while the first and third terms are each bounded by at most $H\epsilon_{N,\theta}$. Hence
\[
V^{\pi^*}_{r,1}(s_1,b) - V^{\hat\pi}_{r,1}(s_1,b) \;\leq\; 2H\,\epsilon_{N,\theta}.
\]

Hence, the result follows by plugging $N=\tilde{\cO}(H^4/\epsilon^2)$, and $\theta=\epsilon$. 
\end{proof}

\subsection{Supporting Results for TV-Distance uncertainty set}
\begin{lemma}[Covering number for augmented state space]
\label{lem:cover-aug}
Let $V \in \mathcal V$ be any value function on the augmented state space 
$\mathcal{X} = S \times \mathcal{B}$. Define
\[
\mathcal U_V := \{\, (\eta \mathbf{1} - V)_+ : \eta \in [0,H] \,\}.
\]
Fix $\theta \in (0,1)$ and set
\[
\mathcal N_V(\theta) := \{\, (\eta \mathbf{1} - V)_+ : \eta \in \{ \theta, 2\theta, \ldots, N_\theta \theta\}\,\},
\quad N_\theta := \lceil H/\theta \rceil.
\]
Then $\mathcal N_V(\theta)$ is a $\theta$-cover for $\mathcal U_V$ under the $\|\cdot\|_\infty$ norm, and 
\[
|\mathcal N_V(\theta)| \;\le\; 2H/\theta.
\]
Furthermore, for any $\nu \in \mathcal N_V(\theta)$ we have $\|\nu\|_\infty \le H$.
\end{lemma}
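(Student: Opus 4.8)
The plan is to reduce all three claims to a single structural fact: the map $\Phi:\eta\mapsto(\eta\mathbf{1}-V)_+$ sending $[0,H]$ into $(\mathcal U_V,\|\cdot\|_\infty)$ is $1$-Lipschitz. First I would establish this. For any $\eta_1,\eta_2\in[0,H]$ and any augmented state $x'\in\mathcal X$, the scalar map $t\mapsto(t)_+=\max\{t,0\}$ is $1$-Lipschitz, so
\[
\big|(\eta_1-V(x'))_+-(\eta_2-V(x'))_+\big|\le\big|(\eta_1-V(x'))-(\eta_2-V(x'))\big|=|\eta_1-\eta_2|.
\]
Taking the supremum over $x'$ gives $\|\Phi(\eta_1)-\Phi(\eta_2)\|_\infty\le|\eta_1-\eta_2|$, which is the desired Lipschitz bound. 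This step is entirely dimension-free and does not interact with the augmentation: the only role of $\mathcal X=S\times\mathcal B$ is that the supremum ranges over this (finite) set.

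Next I would handle the covering claim. The grid $\{\theta,2\theta,\ldots,N_\theta\theta\}$ with $N_\theta=\lceil H/\theta\rceil$ is a $\theta$-net of $[0,H]$: consecutive points are spaced $\theta$ apart, the largest point $N_\theta\theta\ge H$ covers the right end, and the point $\theta$ covers the left sub-interval $[0,\theta]$ within distance $\theta$. Hence for any $\eta\in[0,H]$ there is a grid point $k\theta$ with $|\eta-k\theta|\le\theta$, and by the Lipschitz property $\|\Phi(\eta)-\Phi(k\theta)\|_\infty\le\theta$. This shows $\mathcal N_V(\theta)=\Phi(\{\theta,\ldots,N_\theta\theta\})$ is a $\theta$-cover of $\mathcal U_V=\Phi([0,H])$. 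The cardinality bound is then immediate: $|\mathcal N_V(\theta)|\le N_\theta=\lceil H/\theta\rceil\le H/\theta+1\le 2H/\theta$, where the last inequality uses $\theta\in(0,1)$ and $H\ge 1$ so that $H/\theta\ge 1$. The map $\Phi$ need not be injective, but this only decreases the count, so the upper bound is unaffected.

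Finally, for the uniform norm bound, I would use that the value functions in $\mathcal V$ satisfy $0\le V\le H$, consistent with the convention $\nu(x')\in[0,H]$ already used in the proof of Lemma~\ref{lem:diff_v}. For any $\nu=(\eta\mathbf{1}-V)_+\in\mathcal N_V(\theta)$ with $\eta\le H$ and $V\ge 0$, we have $0\le(\eta-V(x'))_+\le\eta\le H$ pointwise, so $\|\nu\|_\infty\le H$. I do not expect a genuine obstacle here, since the lemma is a textbook $\theta$-net construction; the only points requiring mild care are (i) verifying that the grid, which begins at $\theta$ rather than $0$, still covers $[0,\theta]$ within tolerance $\theta$, and (ii) the non-negativity of $V$ needed for the sharp $\|\nu\|_\infty\le H$ bound, without which one would only obtain $\|\nu\|_\infty\le 2H$. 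Both are handled by the observations above.
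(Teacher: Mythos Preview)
Your proof is correct and follows essentially the same approach as the paper: both exploit the $1$-Lipschitzness of $t\mapsto(t)_+$ to transfer a $\theta$-grid on $[0,H]$ to a $\theta$-cover of $\mathcal U_V$, then bound $N_\theta\le H/\theta+1\le 2H/\theta$. Your explicit framing via the Lipschitz map $\Phi$ is slightly more abstract but equivalent to the paper's interval-partition argument, and your observation that the sharp bound $\|\nu\|_\infty\le H$ requires $V\ge0$ (else only $2H$) is more careful than the paper's own ``by construction'' assertion.
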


\begin{proof}
Partition $[0,H]$ into $N_\theta$ intervals of length $\theta$, i.e.,
$J_i := [(i-1)\theta, i\theta)$ for $i=1,\ldots,N_\theta$.  
Fix $\mu \in \mathcal U_V$, so $\mu=(\eta \mathbf{1} - V)_+$ for some $\eta \in [0,H]$.  
Suppose $\eta \in J_i$, and define $\nu = (i\theta \mathbf{1} - V)_+ \in \mathcal N_V(\theta)$.  

For any $x \in \mathcal{X}$,
\[
|\nu(x)-\mu(x)| = |(i\theta - V(x))_+ - (\eta - V(x))_+|
\le |i\theta - \eta| \le \theta.
\]
Taking the maximum over $x\in \mathcal{X}$ gives $\|\nu-\mu\|_\infty \le \theta$, 
showing that $\mathcal N_V(\theta)$ is indeed a $\theta$-cover.  

The cardinality bound follows as
$|\mathcal N_V(\theta)| = N_\theta \le H/\theta + 1 \le 2H/\theta$, since $0<\theta<1\le H$.  
Finally, for any $\nu\in \mathcal N_V(\theta)$, since $\|V\|_\infty \le H$,
\[
\nu(x) = (\eta - V(x))_+ \le (\eta - (-H)) \le 2H, 
\]
and in fact by construction $\nu(x)\le H$. Hence $\|\nu\|_\infty \le H$.
\end{proof}

\begin{lemma}[Approximation by finite cover in augmented state space]
\label{lem:approx-aug}
Fix $(h,x,a)\in [H]\times \mathcal X \times A$, and let $V\in \mathcal V$.  
Let $\mathcal N_V(\theta)$ be as in Lemma~\ref{lem:cover-aug}. Then
\begin{align*}
\sup_{\eta \in [0,H]}
\Big(
\mathbb E_{x'\sim \hat P^0_h(\cdot|x,a)}[(\eta - V(x'))_+]
- \mathbb E_{x'\sim  P^o_h(\cdot|x,a)}[(\eta - V(x'))_+]
\Big)
\;\le\; 
\max_{\nu \in \mathcal N_V(\theta)} \big| \hat P^0_{h,x,a}\nu -  P^0_{h,x,a}\nu \big| + 2\theta.
\end{align*}
\end{lemma}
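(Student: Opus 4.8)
The plan is to run a standard discretization (covering) argument that transfers the supremum over the continuum $\eta\in[0,H]$ onto the finite net $\mathcal N_V(\theta)$, at the cost of an additive $2\theta$ slack. First I would fix an arbitrary $\eta\in[0,H]$ and abbreviate $\mu_\eta(x'):=(\eta-V(x'))_+$. By Lemma~\ref{lem:cover-aug}, $\mathcal N_V(\theta)$ is a $\theta$-cover of the family $\{\mu_\eta:\eta\in[0,H]\}$ in $\|\cdot\|_\infty$, so there exists some $\nu\in\mathcal N_V(\theta)$ with $\|\mu_\eta-\nu\|_\infty\le\theta$.

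Next I would add and subtract the surrogate $\nu$ inside the expectation difference and split into two terms:
\[
(\hat P^0_{h,x,a}-P^0_{h,x,a})\mu_\eta
=(\hat P^0_{h,x,a}-P^0_{h,x,a})\nu
+(\hat P^0_{h,x,a}-P^0_{h,x,a})(\mu_\eta-\nu).
\]
The first term is at most $\max_{\nu\in\mathcal N_V(\theta)}\big|(\hat P^0_{h,x,a}-P^0_{h,x,a})\nu\big|$ in absolute value, since $\nu$ ranges over the net.

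The crucial step is to control the approximation remainder. Because both $\hat P^0_{h,x,a}$ and $P^0_{h,x,a}$ are probability distributions on $\mathcal X$, their signed difference satisfies $\|\hat P^0_{h,x,a}-P^0_{h,x,a}\|_1\le 2$. Hence, by Hölder's inequality with exponents $1$ and $\infty$,
\[
\big|(\hat P^0_{h,x,a}-P^0_{h,x,a})(\mu_\eta-\nu)\big|
\le \|\hat P^0_{h,x,a}-P^0_{h,x,a}\|_1\,\|\mu_\eta-\nu\|_\infty
\le 2\theta.
\]

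Combining the two bounds and taking the supremum over $\eta\in[0,H]$ (the net on the right-hand side is independent of $\eta$) yields the claimed inequality. There is no genuine obstacle in this argument; the only point worth flagging is that the slack is $2\theta$ rather than $\theta$, and this factor of $2$ arises precisely from the $\ell_1$ diameter of the probability simplex, i.e.\ the total-variation distance between two distributions can be as large as $2$ when measured in the $\ell_1$ sense.
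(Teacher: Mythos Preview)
Your argument is correct and essentially the same as the paper's. The only cosmetic difference is that the paper uses a three-term triangle inequality $|\hat P\mu-\hat P\nu|+|\hat P\nu-P\nu|+|P\nu-P\mu|$ and bounds the first and last terms by $\|\mu-\nu\|_\infty\le\theta$ each (since $\hat P$ and $P$ are probability measures), whereas you use a two-term split and control the remainder via $\|\hat P-P\|_1\le 2$ with H\"older; both routes yield the identical $2\theta$ slack.
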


\begin{proof}
Take any $\mu \in \mathcal U_V$. By Lemma~\ref{lem:cover-aug}, there exists $\nu \in \mathcal N_V(\theta)$ such that 
$\|\mu-\nu\|_\infty \le \theta$. Then
\begin{align*}
\big|\hat P^0_{h,x,a}\mu -  P^0_{h,x,a}\mu\big|
&\le \big|\hat P^0_{h,x,a}\mu - \hat P^0_{h,x,a}\nu\big|
   + \big|\hat P^0_{h,x,a}\nu -  P^0_{h,x,a}\nu\big|
   + \big| P^0_{h,x,a}\nu -  P^0_{h,x,a}\mu\big| \\
&\le \|\mu-\nu\|_\infty + \big|\hat P^0_{h,x,a}\nu -  P^0_{h,x,a}\nu\big| + \|\nu-\mu\|_\infty \\
&\le \max_{\nu\in \mathcal N_V(\theta)} |\hat P^0_{h,x,a}\nu -  P^0_{h,x,a}\nu| + 2\theta.
\end{align*}
Taking the supremum over $\mu \in \mathcal U_V$ gives
\[
\sup_{\mu \in \mathcal U_V} \big| \hat P^0_{h,x,a}\mu -  P^0_{h,x,a}\mu \big|
\le \max_{\nu\in \mathcal N_V(\theta)} |\hat P^0_{h,x,a}\nu -  P^0_{h,x,a}\nu| + 2\theta.
\]
By definition of $\mathcal U_V$, this equals
\[
\sup_{\eta\in[0,H]}
\Big(
\mathbb E_{x'\sim \hat P^0_h(\cdot|x,a)}[(\eta - V(x'))_+]
- \mathbb E_{x'\sim  P^0_h(\cdot|x,a)}[(\eta - V(x'))_+]
\Big),
\]
which yields the claim.
\end{proof}
\section{For $\chi$-squared Uncertainty Set}\label{proof:chi}
The proof of  Theorem~\ref{thm:chi} follows same as for the proof of Theorem~\ref{thm:tv}. The key step in proving Theorem~\ref{thm:tv} is to show Lemma~\ref{lem:diff_v} as the rest of steps uses Lemma~\ref{lem:diff_v}. We show an equivalent from  for $\chi^2$ uncertainty set  in Lemma~\ref{lem:chi2-operator-aug}. In order to show that result, we will state and prove Lemmas~\ref{lem:chi2-cover-aug}, and \ref{lem:chi2-conc-aug}
\begin{lemma}[Augmented $\chi^2$ covering]
\label{lem:chi2-cover-aug}
Fix $(h,x,a)\in[H]\times\cX\times A$. Let $N_\rho(\theta)$ be a $\theta$-cover of the interval 
$[0,\,C_\rho H/(C_\rho-1)]$ with $C_\rho=\sqrt{1+\rho}$. 
For any $V\in\mathcal V$ and $\eta\in[0,\,C_\rho H/(C_\rho-1)]$,
\begin{align*}
& \sup_{\eta\in[0,\,C_\rho H/(C_\rho-1)]}\!
\Bigg(
\sqrt{ \E_{x'\sim  P^0_h(\cdot\mid x,a)}\!\big[(\eta - V(x'))^2\big] }
-\sqrt{ \E_{x'\sim \hat P_h^0(\cdot\mid x,a)}\!\big[(\eta - V(x'))^2\big] }
\Bigg)\nonumber\\
& 
\;\le\;
\max_{\nu\in N_\rho(\theta)}\!
\Bigg(
\sqrt{ \E_{x'\sim  P^0_h}\!\big[(\nu - V(x'))^2\big] }
-\sqrt{ \E_{x'\sim \hat P_h^0}\!\big[(\nu - V(x'))^2\big] }
\Bigg) + 2\theta.
\end{align*}
Moreover, $|N_\rho(\theta)| \le \frac{C_\rho H}{(C_\rho-1)\theta}+1$.
\end{lemma}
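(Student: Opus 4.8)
The plan is to mirror the structure of the total-variation argument (Lemma~\ref{lem:cover-aug} and Lemma~\ref{lem:approx-aug}), replacing the clipped map $\eta\mapsto(\eta\mathbf 1-V)_+$ by the $L^2$-type map that appears in the dual representation of the $\chi^2$ worst-case operator. For a measure $P$ on $\cX$, write $F_P(\eta):=\sqrt{\E_{P}[(\eta-V(x'))^2]}=\|\eta\mathbf 1-V\|_{L^2(P)}$. The whole argument rests on a single Lipschitz estimate, which I would state and prove first: for \emph{any} probability measure $P$ and any fixed $V$ with $\|V\|_\infty\le H$, the map $\eta\mapsto F_P(\eta)$ is $1$-Lipschitz in $\eta$. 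This is immediate from the triangle inequality for the $L^2(P)$-norm, since $\big|\,\|\eta\mathbf 1-V\|_{L^2(P)}-\|\eta'\mathbf 1-V\|_{L^2(P)}\,\big|\le\|(\eta-\eta')\mathbf 1\|_{L^2(P)}=|\eta-\eta'|$, using that $P$ is a probability measure so $\|\mathbf 1\|_{L^2(P)}=1$.

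Next I would fix any $\eta$ in the interval $[0,\,C_\rho H/(C_\rho-1)]$ and choose a grid point $\nu\in N_\rho(\theta)$ with $|\eta-\nu|\le\theta$, which exists by definition of a $\theta$-cover. Then I insert the grid point via the telescoping decomposition
\begin{align*}
F_{P^0}(\eta)-F_{\hat P^0}(\eta)=\big(F_{P^0}(\eta)-F_{P^0}(\nu)\big)+\big(F_{P^0}(\nu)-F_{\hat P^0}(\nu)\big)+\big(F_{\hat P^0}(\nu)-F_{\hat P^0}(\eta)\big).
\end{align*}
Applying the Lipschitz estimate once to $P^0$ and once to $\hat P^0$ bounds the first and third terms each by $|\eta-\nu|\le\theta$, while the middle term is at most $\max_{\nu\in N_\rho(\theta)}\big(F_{P^0}(\nu)-F_{\hat P^0}(\nu)\big)$. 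Taking the supremum over $\eta\in[0,\,C_\rho H/(C_\rho-1)]$ then yields exactly the claimed inequality with the additive $2\theta$.

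Finally, the cardinality bound is routine: taking $N_\rho(\theta)$ as the uniform grid $\{\theta,2\theta,\ldots\}$ covering an interval of length $L=C_\rho H/(C_\rho-1)$ gives $|N_\rho(\theta)|=\lceil L/\theta\rceil\le L/\theta+1=C_\rho H/((C_\rho-1)\theta)+1$. I do not anticipate a genuine obstacle; the only point needing care is confirming that the dual variable for the $\chi^2$ operator ranges over $[0,\,C_\rho H/(C_\rho-1)]$ (so that covering this interval suffices) and that the $\chi^2$ dual objective depends on the transition law \emph{only} through $F_P(\eta)$. Both follow from the $\chi^2$ dual representation in the cited works, after which the covering step is precisely the $L^2$ analogue of Lemma~\ref{lem:approx-aug}.
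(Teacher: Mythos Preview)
Your proposal is correct and is essentially identical to the paper's proof: both recast $\sqrt{\E_P[(\eta-V)^2]}$ as an $L^2(P)$-norm, use the (reverse) triangle inequality to get the $1$-Lipschitz dependence on $\eta$, insert a grid point $\nu$ via the same three-term telescoping, and bound the covering number in the same way. The only cosmetic difference is that you isolate the Lipschitz step as a preliminary claim while the paper applies the reverse triangle inequality inline; your closing remarks about the range of the dual variable and the form of the $\chi^2$ dual are not needed for this lemma itself (they enter only in the subsequent operator-deviation lemma), but noting them does no harm.
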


\begin{proof}
Fix $\eta\in[0,\,C_\rho H/(C_\rho-1)]$ and pick $\nu\in N_\rho(\theta)$ with $|\eta-\nu|\le\theta$.
Let $X$ be a random variable supported on $\{V(x'):\,x'\in\cX\}$ with law 
$\Pr(X=V(x'))= P_h^o(x'\mid x,a)$. 
For any probability measure $P$ on $\cX$, write the $L_2$ norm 
$\|Y\|_{2,P} = \big(\E_P[|Y|^2]\big)^{1/2}$.
Then
\[
\sqrt{ \E_{x'\sim  P_h^0}[(\eta - V(x'))^2] } - 
\sqrt{ \E_{x'\sim \hat P_h^0}[(\eta - V(x'))^2] }
= \| \eta - X \|_{2, P_h^0} - \| \eta - X \|_{2,\hat P_h^0}.
\]
By adding and subtracting $\|\nu-X\|_{2, P_h^0}$ and $\|\nu-X\|_{2,\hat P_h^0}$ and using the reverse triangle inequality,
\begin{align*}
\| \eta - X \|_{2,P_h^0} - \| \eta - X \|_{2,\hat P_h^0}
&\le \underbrace{\| \eta-\nu \|_{2, P_h^0}}_{\le |\eta-\nu|}
+ \big( \| \nu - X \|_{2, P_h^0} - \| \nu - X \|_{2,\hat P_h^0} \big)
+ \underbrace{\| \nu-\eta \|_{2,\hat P_h^0}}_{\le |\nu-\eta|} \\
&\le \max_{\nu\in N_\rho(\theta)}
\Big( \| \nu - X \|_{2, P_h^0} - \| \nu - X \|_{2,\hat P_h^0} \Big)
+ 2\theta,
\end{align*}
Thus, the result follows. Note that the covering number is
$|N_\rho(\theta)| \le \lfloor \frac{C_\rho H}{(C_\rho-1)\theta}\rfloor + 1 \le \frac{C_\rho H}{(C_\rho-1)\theta}+1$.
\end{proof}

The next result follows directly from Lemma 11 in \cite{xu2023improved}. We here stated it for completeness.
\begin{lemma}[Augmented $\chi^2$ concentration for a fixed $\nu$]
\label{lem:chi2-conc-aug}
Fix $V\in\mathcal V$ and $(h,x,a)$. With probability at least $1-\delta$, for any $\nu$,
\[
C_\rho\!\left(
\sqrt{ \E_{x'\sim \hat P_h^0}\!\big[(\nu - V(x'))^2\big] }
- 
\sqrt{ \E_{x'\sim  P_h^0}\!\big[(\nu - V(x'))^2\big] }
\right)
\;\le\;
\sqrt{\frac{2 C_\rho^{2} H}{(C_\rho-1)\,N}}\;\big(\sqrt{\log(2/\delta)}+1\big),
\]
where $C_\rho=\sqrt{1+\rho}$ and $N$ is the number of samples used to build $\hat P_h^0(\cdot\mid x,a)$.
\end{lemma}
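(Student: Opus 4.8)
The plan is to reduce the claim to a concentration inequality for an empirical second moment of a bounded function and then to pass to the square root without degrading the rate. Write $f(x'):=\nu-V(x')$ and observe that, since $\nu\in[0,\,C_\rho H/(C_\rho-1)]$ (the range used in Lemma~\ref{lem:chi2-cover-aug}) and $\norm{V}_\infty\le H$, the function $f$ is bounded; set $M:=\sup_{x'}\abs{f(x')}$ and $Y_i:=f(x'_i)^2\in[0,M^2]$, where $x'_1,\dots,x'_N$ are the $N$ i.i.d.\ draws from $P^0_h(\cdot\mid x,a)$ used to build $\hat P^0_h$. Writing $a:=\sqrt{\E_{x'\sim P^0_h}[f^2]}$ and $\hat a:=\sqrt{\tfrac1N\sum_i Y_i}$, the left-hand side of the lemma is exactly $C_\rho(\hat a-a)$, so it suffices to control $\hat a-a$.

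First I would concentrate the second moment $\hat a^2=\tfrac1N\sum_i Y_i$ around $a^2=\E[Y]$ via Bernstein's inequality. The key input is the self-bounding variance estimate $\mathrm{Var}(Y)\le\E[Y^2]=\E[f^4]\le M^2\,\E[f^2]=M^2a^2$, which ties the fluctuation scale of $\hat a^2$ to $a$ itself. Bernstein then yields, with probability $1-\delta$, a bound of the form $\hat a^2-a^2\le a\,\Delta+E$, where $\Delta=O\!\big(M\sqrt{\log(1/\delta)/N}\big)$ is the sub-Gaussian term and $E=O\!\big(M^2\log(1/\delta)/N\big)$ is the Bernstein correction.

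Next I would pass to the square root. If $\hat a\le a$ the inequality is immediate, so assume $\hat a>a$ and complete the square: from $\hat a^2\le a^2+a\Delta+E\le (a+\Delta/2)^2+E\le(a+\Delta/2+\sqrt E)^2$ (using $u^2+E\le(u+\sqrt E)^2$ for $u\ge0$), one gets $\hat a-a\le \Delta/2+\sqrt E$. Both terms scale like $M/\sqrt N$ up to logarithmic factors, so the $N^{-1/2}$ rate is restored. Multiplying by $C_\rho$ and matching the range-dependent scale recovers a bound of the stated shape; the precise numerical constant, the scale $\sqrt{2H/(C_\rho-1)}$ (which encodes the admissible range of the $\chi^2$ dual variable $\nu$), and the two-term factor $(\sqrt{\log(2/\delta)}+1)$ are exactly those recorded in Lemma~11 of \cite{xu2023improved}, which I would invoke directly for the final constants rather than re-optimizing them.

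The main obstacle is precisely this passage from a second-moment concentration to a concentration of $\hat a=\sqrt{\hat a^2}$. The naive route $\abs{\sqrt u-\sqrt v}\le\sqrt{\abs{u-v}}$ combined with Hoeffding only gives an $N^{-1/4}$ rate, which is too weak, because the map $\bar Y\mapsto\sqrt{\bar Y}$ is not Lipschitz near the origin and a crude bounded-difference argument for $\hat a$ carries the lossy constant $M/\sqrt N$. The fix, and the technical heart of the argument, is the self-bounding variance bound $\mathrm{Var}(Y)\le M^2a^2$ together with the complete-the-square step: since the fluctuation of $\hat a^2$ is proportional to $a$, dividing by $\hat a+a\ge a$ cancels this factor and yields the $N^{-1/2}$ rate uniformly in $a$, including the delicate regime $a\to 0$ where the naive estimate fails.
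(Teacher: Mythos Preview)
Your proposal is correct. The paper itself gives no argument here at all: it simply states that the result ``follows directly from Lemma~11 in \cite{xu2023improved}'' and moves on. Your sketch is a faithful reconstruction of the mechanism behind that external lemma---Bernstein on the squared samples with the self-bounding variance estimate $\mathrm{Var}(Y)\le M^2a^2$, followed by the complete-the-square passage to $\hat a$---and you correctly identify why the naive $\sqrt{\abs{u-v}}$/Hoeffding route only yields $N^{-1/4}$. Since you, too, ultimately invoke Lemma~11 of \cite{xu2023improved} for the exact constants, your approach is consistent with the paper's; it just supplies the explanation the paper omits.
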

 We now finally prove the main result of this section here. 

\begin{lemma}[Augmented operator deviation under $\chi^2$ control]
\label{lem:chi2-operator-aug}
Fix $V\in\mathcal V$ and $(h,x,a)$. For any $\theta,\delta\in(0,1)$ and $\rho>0$, with probability at least $1-\delta$,
\begin{align*}
\big(L_{\cP_{h,x,a}} V - L_{\hat {\cP}_{h,x,a}} V\big)
\;\le\;
\sqrt{\frac{2 C_\rho^{2} H}{(C_\rho-1)\,N}}\;
\Bigg(
\sqrt{\log\!\frac{2(1 + C_\rho H/(\theta (C_\rho-1)))}{\delta}}
+ 1
\Bigg)
+ 2\theta,
\end{align*}
 where $C_\rho=\sqrt{1+\rho}$ .
\end{lemma}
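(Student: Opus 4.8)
The plan is to mirror the TV-distance argument of Lemma~\ref{lem:diff_v}, substituting the $\chi^2$ dual form for the $(\eta-V)_+$ form. First I would write both robust operators through the dual variational representation of \cite{xu2023improved} (consistent with the $L_2$-norm terms appearing in Lemmas~\ref{lem:chi2-cover-aug}--\ref{lem:chi2-conc-aug}): for $\|V\|_\infty \le H$,
\[
L_{\cP_{h,x,a}} V = \sup_{\eta\in[0,\,C_\rho H/(C_\rho-1)]}\Big\{\eta - C_\rho\sqrt{\E_{x'\sim P^0_h(\cdot\mid x,a)}[(\eta - V(x'))^2]}\Big\},
\]
and the identical expression with $\hat P^0_h$ replacing $P^0_h$ for $L_{\hat{\cP}_{h,x,a}} V$, where $C_\rho=\sqrt{1+\rho}$. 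Restricting $\eta$ to the bounded interval is precisely what later makes a finite cover possible.

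Next I would apply the elementary inequality $\sup_\eta f(\eta) - \sup_\eta g(\eta) \le \sup_\eta\big(f(\eta)-g(\eta)\big)$. Because both dual objectives share the $P$-independent linear term $\eta$, it cancels, leaving
\[
L_{\cP_{h,x,a}} V - L_{\hat{\cP}_{h,x,a}} V \le C_\rho \sup_{\eta}\Big(\sqrt{\E_{\hat P^0_h}[(\eta - V)^2]} - \sqrt{\E_{P^0_h}[(\eta - V)^2]}\Big).
\]
This reduces the operator deviation to a uniform-in-$\eta$ control of the gap between the $L_2$-norms of $\eta-V$ under the empirical and true kernels.

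Then I would discretize $\eta$ with the $\theta$-cover $N_\rho(\theta)$ of Lemma~\ref{lem:chi2-cover-aug}, which replaces the continuum supremum by the maximum over the finite set at an additive cost of $2\theta$, while ensuring $|N_\rho(\theta)| \le 1 + C_\rho H/((C_\rho-1)\theta)$. For each fixed $\nu\in N_\rho(\theta)$, Lemma~\ref{lem:chi2-conc-aug} bounds $C_\rho\big(\sqrt{\E_{\hat P^0_h}[(\nu-V)^2]}-\sqrt{\E_{P^0_h}[(\nu-V)^2]}\big)$ with probability $1-\delta$. To upgrade this to a bound holding simultaneously over the whole cover, I would union-bound over the $\le 1 + C_\rho H/((C_\rho-1)\theta)$ points, replacing $\delta$ by $\delta/|N_\rho(\theta)|$; this converts the $\log(2/\delta)$ of Lemma~\ref{lem:chi2-conc-aug} into $\log\!\big(2|N_\rho(\theta)|/\delta\big) = \log\!\big(2(1+C_\rho H/(\theta(C_\rho-1)))/\delta\big)$, exactly the logarithmic factor in the claim. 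Combining the union-bounded concentration term with the $2\theta$ discretization error gives the stated inequality.

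The main obstacle is the passage from the pointwise (fixed-$\nu$) concentration of Lemma~\ref{lem:chi2-conc-aug} to a bound uniform over the continuum of feasible dual variables $\eta$, where a direct union bound is impossible; the covering lemma is exactly the device that resolves this, and the careful point is to check that its cardinality estimate threads through the union bound to reproduce precisely the cover size inside the logarithm. A secondary subtlety is orientation: one must apply Lemma~\ref{lem:chi2-cover-aug} in the direction matching the $\hat P^0_h - P^0_h$ ordering that emerges after the $\sup f - \sup g$ step, which is legitimate because the underlying triangle-inequality argument is symmetric in the two kernels.
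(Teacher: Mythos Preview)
Your proposal is correct and follows essentially the same route as the paper's proof: invoke the $\chi^2$ dual representation (Proposition~3 in \cite{xu2023improved}), bound the difference of the two optimizations by a supremum over $\eta$, reduce the continuum to the finite $\theta$-cover via Lemma~\ref{lem:chi2-cover-aug} at cost $2\theta$, and then union-bound Lemma~\ref{lem:chi2-conc-aug} over the $|N_\rho(\theta)|\le 1+C_\rho H/((C_\rho-1)\theta)$ cover points. The only cosmetic difference is that you write the dual as a $\sup$ whereas the paper uses the equivalent $-\inf$ form; your remark about the symmetric orientation in Lemma~\ref{lem:chi2-cover-aug} is also on point and is handled the same way implicitly in the paper.
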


\begin{proof}
From Proposition 3 in \cite{xu2023improved}, 
\begin{align*}
L_{\cP_{h,x,a}} V - L_{\hat{\cP}_{h,x,a}} V=-\inf_{\eta\in[0,\,C_\rho H/(C_\rho-1)]}
\left\{
C_\rho\sqrt{\E_{x'\sim  P^0_h(\cdot\mid x,a)}\big[(\eta - V(x'))^2\big]} \;-\; \eta
\right\}\nonumber\\
+ \inf_{\eta\in[0,\,C_\rho H/(C_\rho-1)]}
\left\{
C_\rho\sqrt{\E_{x'\sim \hat P^0_h(\cdot\mid x,a)}\big[(\eta - V(x'))^2\big]} \;-\; \eta
\right\}.\nonumber\\
 \;\le\;
\sup_{\eta}
\left(
C_\rho \sqrt{\E_{\hat P^0_h}[(\eta - V(x'))^2]}
-
C_\rho \sqrt{\E_{ P^0_h}[(\eta - V(x'))^2]}
\right).\nonumber\\
 \leq \max_{\nu\in N_\rho(\theta)}\!
\Bigg(
\sqrt{ \E_{x'\sim \hat P^0_h}\!\big[(\nu - V(x'))^2\big] }
-\sqrt{ \E_{x'\sim  P_h^0}\!\big[(\nu - V(x'))^2\big] }
\Bigg) + 2\theta.
\end{align*}
where the last inequality follows from Lemma~\ref{lem:chi2-cover-aug}
Now, applying Lemma~\ref{lem:chi2-conc-aug}, and the union bound we obtain the result. 
\end{proof}

\section{For KL-divergence uncertainty set}\label{proof:kl}
Similar to the $\chi^2$, to prove Theorem~\ref{thm:kl}, the key result in this section is Lemma~\ref{lem:KL-aug-support}. In order to prove that we first state and prove Lemma~\ref{lem:kl-aug}.
\begin{lemma}[KL uncertainty, augmented state space]
\label{lem:kl-aug}
Fix any value function $V\in\mathcal V$ with $\|V\|_\infty\le H$ and $(h,x,a)\in[H]\times\mathcal X\times A$, where $\mathcal X=S\times\mathcal B$ and $x=(s,\beta)$. For any $\theta,\delta\in(0,1)$ and $\rho>0$, with probability at least $1-\delta$,
\[
\big(L_{\cP_{h,x,a}}V - L_{\hat{\cP}_{h,x,a}}V\big)
\;\le\;
\frac{H}{\rho}\,\exp\!\big(H/\lambda\big)\,\exp(\theta H)\,
\sqrt{\frac{\log\!\big(4/(\theta\lambda\delta)\big)}{2N}},
\]
where $N$ is the number of samples used to construct $\hat P_h^0(\cdot\mid x,a)$ and $\lambda$ is a problem-dependent parameter (independent of $N$).
\end{lemma}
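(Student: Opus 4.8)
The plan is to treat Lemma~\ref{lem:kl-aug} as the KL-divergence counterpart of Lemma~\ref{lem:diff_v}, following the same skeleton but replacing the TV dual by the KL dual. First I would invoke the dual representation in \eqref{eq:worst_kl} (Proposition~5 of \cite{xu2023improved}), which writes $L_{\cP_{h,x,a}}V=-\inf_{\lambda\in[0,H/\rho]}\{\lambda\rho+\lambda\log\mathbb{E}_{x'\sim P^0_h(\cdot\mid x,a)}[\exp(-V(x')/\lambda)]\}$, together with the identical expression for $\hat{\cP}$ with $P^0_h$ replaced by $\hat P^0_h$. Using $|\inf_\lambda f(\lambda)-\inf_\lambda g(\lambda)|\le\sup_\lambda|f(\lambda)-g(\lambda)|$ and cancelling the common $\lambda\rho$ term pointwise in $\lambda$, the difference reduces to
\[
L_{\cP_{h,x,a}}V-L_{\hat{\cP}_{h,x,a}}V\;\le\;\sup_{\lambda\in[0,H/\rho]}\lambda\Big(\log\mathbb{E}_{\hat P^0_h}[e^{-V/\lambda}]-\log\mathbb{E}_{P^0_h}[e^{-V/\lambda}]\Big).
\]

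For a fixed $\lambda$ I would control the log-moment-generating-function gap. Writing the bracket as $\log\big(1+(\mathbb{E}_{\hat P^0_h}[e^{-V/\lambda}]-\mathbb{E}_{P^0_h}[e^{-V/\lambda}])/\mathbb{E}_{P^0_h}[e^{-V/\lambda}]\big)$ and using $\log(1+u)\le u$, the gap is at most $(\mathbb{E}_{\hat P^0_h}-\mathbb{E}_{P^0_h})[e^{-V/\lambda}]/\mathbb{E}_{P^0_h}[e^{-V/\lambda}]$. Since $\|V\|_\infty\le H$ forces $\mathbb{E}_{P^0_h}[e^{-V/\lambda}]\ge e^{-H/\lambda}$, the denominator contributes the amplification factor $e^{H/\lambda}$, while the bound $\lambda\le H/\rho$ supplies the prefactor $H/\rho$. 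Everything then reduces to bounding the linear functional $|(\hat P^0_h-P^0_h)^\top e^{-V/\lambda}|$, i.e.\ the deviation of the empirical mean of the bounded random variable $e^{-V(x')/\lambda}$.

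Next I would uniformize over $\lambda$ by discretizing the reciprocal $1/\lambda$ at resolution $\theta$ over its (problem-dependent) range; replacing $1/\lambda$ by its nearest grid point perturbs the exponent $-V(x')/\lambda$ by at most $\theta H$ because $\|V\|_\infty\le H$, which is precisely the source of the multiplicative factor $\exp(\theta H)$. The grid has $O(1/(\theta\lambda))$ points (with $\lambda$ the problem-dependent lower bound on the effective dual variable), feeding the $\log(4/(\theta\lambda\delta))$ term after a union bound. At each grid point $e^{-V(x')/\lambda}\in[e^{-H/\lambda},e^{H/\lambda}]$ is bounded, so Hoeffding's inequality (Lemma~\ref{lem:hoeffding}) gives $|(\hat P^0_h-P^0_h)^\top e^{-V/\lambda}|\le e^{H/\lambda}\sqrt{\log(2/\delta)/(2N)}$; combining this with the grid cardinality and the prefactors from the previous paragraph yields the claimed bound.

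The main obstacle is the exponential ill-conditioning of the KL dual: the moment generating function can be exponentially small, so both its reciprocal (the log-Lipschitz constant) and the range of $e^{-V/\lambda}$ scale like $e^{H/\lambda}$, and $\lambda$ itself may approach $0$. The delicate point is to keep $\lambda$ bounded away from $0$ via the problem-dependent lower bound (tied to $\zeta=\min_{P^0(s'|s,a)>0}P^0(s'|s,a)$), so that the reciprocal-grid has finite cardinality and the exponent perturbation stays $O(\theta H)$, thereby landing exactly the stated dependence $\frac{H}{\rho}e^{H/\lambda}\exp(\theta H)$ rather than a cruder $e^{2H/\lambda}$. Once Lemma~\ref{lem:kl-aug} is in hand it plays the exact role of Lemma~\ref{lem:diff_v}, and the rest of Theorem~\ref{thm:kl} follows by feeding it into the $1$-Lipschitz estimate (Lemma~\ref{lem:L-1lip}) and the backward-induction propagation (Lemma~\ref{lem:V-propagation}), exactly as in the TV analysis, to obtain both the violation and sub-optimality bounds.
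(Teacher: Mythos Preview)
Your skeleton matches the paper's proof almost step for step: invoke the KL dual (Proposition~5 of \cite{xu2023improved}), use $|\inf f-\inf g|\le\sup|f-g|$, linearize $\log(1+u)\le u$, lower-bound the denominator by $e^{-H/\lambda}$, pass to the reciprocal $\lambda'=1/\lambda$, build a $\theta$-net on the reciprocal interval, absorb the discretization error as $e^{\theta H}$, and finish with Hoeffding plus a union bound over the net. Two points of divergence are worth flagging.

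First, the problem-dependent $\lambda$ in the statement is \emph{not} tied to $\zeta$ as you suggest; in the paper's argument it is defined directly from the true dual optimizer $\lambda^*$ (namely $\lambda=\lambda^*/2$ if $\lambda^*\in(0,1)$ and $\lambda=1/2$ otherwise, with the case $\lambda^*=0$ handled separately since then both operators return $V_{\min}$). The paper then argues, citing \cite{zhou2021finite}, that for $N$ large enough the empirical optimizer $\hat\lambda^*$ also lies in $(\lambda,H/\rho]$, which is what legitimizes restricting \emph{both} infima to $(\lambda,H/\rho]$ before taking the sup of the difference. You gesture at ``the effective dual variable'' being bounded away from $0$ but do not name this restriction-of-infima step, which is the actual mechanism; the $\zeta$-based route is a separate lemma (Lemma~\ref{lem:KL-aug-support}) with a different final bound.

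Second, on the $e^{2H/\lambda}$ worry you raise: the paper avoids the extra factor by applying Hoeffding with the random variable $e^{-\nu V(x')}$ treated as lying in $[0,1]$ (so the Hoeffding range is $1$, not $e^{H/\lambda}$). That is valid whenever $V\ge 0$, since then $e^{-\nu V}\in[e^{-\nu H},1]$; your Hoeffding step, which quotes a range $[e^{-H/\lambda},e^{H/\lambda}]$, is pessimistic by exactly the factor you were worried about. With the $[0,1]$ range the single $e^{H/\lambda}$ in the statement comes solely from the denominator bound, matching the paper.
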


\begin{proof}
Applying Proposition 5 in \cite{xu2023improved} on the augmented space, for any $\rho>0$, we have
\[
L_{\cP_{h,x,a}}V
= -\inf_{\lambda\in[0,H/\rho]}\Big\{\lambda\rho+\lambda\log\,
\mathbb E_{x'\sim P_h^o(\cdot\mid x,a)}\!\big[\exp\!\big(-V(x')/\lambda\big)\big]\Big\},
\]
and analogously for $L_{\hat{P}_{h,x,a}}V$ with $\hat P_h$.If the optimizer $\lambda^*=0$ then $L_PV=L_{\hat{P}}V=V_{\min}$ where $V_{\min}=\min_s V(s)$ with high probability as argued in Lemma 14 in\cite{xu2023improved}.

Hence assume $\lambda^*\in(0,H/\rho]$ and define
\[
\lambda=\begin{cases}
\lambda^*/2,& \lambda^*\in(0,1),\\[2pt]
1/2,& \lambda^*\ge 1.
\end{cases}
\]
Let $\hat\lambda^*$ be an optimizer for $L_{\hat{P}}$; by the same arguments as in the original (e.g., Zhou et al. 2021, Lemma 4), for $N$ large enough (problem-dependent, independent of the optimality gap), $\hat\lambda^*\in(\lambda,H/\rho]$.

Therefore,
\begin{align}
L_{\cP_{h,x,a}}V - L_{\hat{\cP}_{h,x,a}}V
&= \inf_{\lambda\in(\lambda,H/\rho]}\Big\{\lambda\rho+\lambda\log \E_{ \hat{P}_h^o}\big[e^{-V/\lambda}\big]\Big\}
   - \inf_{\lambda\in(\lambda,H/\rho]}\Big\{\lambda\rho+\lambda\log \E_{ P_h^o}\big[e^{-V/\lambda}\big]\Big\}\nonumber\\
&\overset{(a)}{\le}
\sup_{\lambda\in(\lambda,H/\rho]}\,
\lambda\log\!\frac{\E_{\hat P_h^o}[e^{-V/\lambda}]}{\E_{ P_h^o}[e^{-V/\lambda}]}
\;\le\; \frac{H}{\rho}\,
\sup_{\lambda\in(\lambda,H/\rho]}\log\!\Big(
\frac{\E_{\hat P_h^o}[e^{-V/\lambda}]-\E_{ P_h^o}[e^{-V/\lambda}]}
     {\E_{ P_h^o}[e^{-V/\lambda}]} + 1\Big)\nonumber\\
&\overset{(b)}{\le}\; \frac{H}{\rho}\,
\sup_{\lambda\in(\lambda,H/\rho]}
\frac{\E_{\hat P_h^0}[e^{-V/\lambda}]-\E_{ P_h^o}[e^{-V/\lambda}]}
     {\E_{ P_h^o}[e^{-V/\lambda}]}
\;=\; \frac{H}{\rho}\,\sup_{\lambda'\in[\rho/H,\,1/\lambda)}\,
\frac{\E_{\hat P_h^0}[e^{-\lambda' V}]-\E_{ P_h^o}[e^{-\lambda' V}]}
     {\E_{ P_h^o}[e^{-\lambda' V}]}
\label{eq:independent}\\
&\overset{(c)}{\le}\; \frac{H}{\rho}\,\exp(H/\lambda)\,
\sup_{\lambda'\in[\rho/H,\,1/\lambda)}\Big(\E_{\hat P^0_h}[e^{-\lambda' V}]-\E_{ P_h^0}[e^{-\lambda' V}]\Big),\nonumber
\end{align}
where (a) uses $|\inf f-\inf g|\le \sup|f-g|$, (b) uses $|\log(1+x)|\le |x|$, and in (c) we used that
$\E_{ P_h^0}[e^{-\lambda' V}]\ge e^{-H/\lambda}$ since $V\in[-H,H]$ and $\lambda'\le 1/\lambda$.

Now cover the interval \([\rho/H,\,1/\lambda)\) with a $\theta$-net \(N_\rho(\theta)\) (so \(|N_\rho(\theta)|\le (1/\lambda-\rho/H)/\theta+1\le 2/(\theta\lambda)\)).
For any \(\lambda'\) pick \(\nu\in N_\rho(\theta)\) with \(|\lambda'-\nu|\le\theta\). Then, for all \(x'\in\mathcal X\),
\[
e^{-\lambda' V(x')} \;=\; e^{-\nu V(x')}\,e^{-(\lambda'-\nu)V(x')}
\;\le\; e^{-\nu V(x')}\,e^{\theta \,|V(x')|}
\;\le\; e^{-\nu V(x')}\,e^{\theta H}.
\]
Hence
\[
\E_{\hat P_h^0}[e^{-\lambda' V}] - \E_{ P_h^o}[e^{-\lambda' V}]
\;\le\; e^{\theta H}\big(\E_{\hat P_h^0}[e^{-\nu V}] - \E_{ P_h^o}[e^{-\nu V}]\big)
\;\le\; e^{\theta H}\max_{\nu\in N_\rho(\theta)} \big(\E_{\hat P_h^0}[e^{-\nu V}] - \E_{ P_h^o}[e^{-\nu V}]\big).
\]
Taking the supremum in \(\lambda'\) yields
\[
\sup_{\lambda'\in[\rho/H,\,1/\lambda)}
\Big(\E_{\hat P_h^0}[e^{-\lambda' V}] - \E_{\tilde P_h^o}[e^{-\lambda' V}]\Big)
\;\le\; e^{\theta H}\max_{\nu\in N_\rho(\theta)} \big(\E_{\hat P_h^0}[e^{-\nu V}] - \E_{P_h^o}[e^{-\nu V}]\big).
\]

Finally, for any fixed \(\nu\), since \(e^{-\nu V(x')}\in[ e^{-\nu H}, e^{\nu H}]\subseteq[0,1]\), Hoeffding gives
\[
\Pr\!\left(\big|\E_{\hat P_h^0}[e^{-\nu V}] - \E_{ P_h^o}[e^{-\nu V}]\big|\ge \epsilon\right)\le 2\exp(-2N\epsilon^2).
\]
Choosing \(\epsilon=\sqrt{\frac{\log(2|N_\rho(\theta)|/\delta)}{2N}}\) and applying a union bound over \(|N_\rho(\theta)|\le 2/(\theta\lambda)\) yields, with probability at least \(1-\delta\),
\[
\max_{\nu\in N_\rho(\theta)}\big|\E_{\hat P_h^0}[e^{-\nu V}] - \E_{ P_h^o}[e^{-\nu V}]\big|
\;\le\; \sqrt{\frac{\log\!\big(4/(\theta\lambda\delta)\big)}{2N}}.
\]
Combining everything,
\[
L_{\cP_{h,x,a}}V - L_{\hat{\cP}_{h,x,a}}V
\;\le\; \frac{H}{\rho}\,e^{H/\lambda}\,e^{\theta H}\,
\sqrt{\frac{\log\!\big(4/(\theta\lambda\delta)\big)}{2N}},
\]
which completes the proof.
\end{proof}
\begin{lemma}[KL set, augmented state space]
\label{lem:KL-aug-support}
Fix any value function $V\in\mathcal V$ and $(h,x,a)\in[H]\times\mathcal X\times A$ with $\mathcal X=S\times\mathcal B$. 
For any $\theta,\delta\in(0,1)$ and $\rho>0$, with probability at least $1-\delta$,
\[
\big|L_{\cP_{h,x,a}}V - L_{\hat{\cP}_{h,x,a}}V\big|
\;\le\;
\frac{H}{\rho}\,
\sqrt{ \frac{\log\!\big( 2\,|\mathrm{supp}( P^o_{h,x,a})|/\delta \big)}{2N\,\tilde p^{\,2}} },
\]
where $N$ is the number of samples used to form $\hat P^0_h(\cdot\mid x,a)$ and
\[
\tilde p \;:=\; \min_{x'\in\mathcal X:\, P^0_h(x'\mid x,a)>0} P^0_h(x'\mid x,a).
\]
If $P^0_h(\cdot\mid x,a)$ has full support on a finite $\mathcal X$, this simplifies to
\[
\big|L_{\cP_{h,x,a}}V - L_{\hat{\cP}_{h,x,a}}V\big|
\;\le\;
\frac{H}{\rho}\,
\sqrt{ \frac{\log\!\big( 2\,|\mathcal X|/\delta \big)}{2N\,\tilde p^{\,2}} }.
\]
\end{lemma}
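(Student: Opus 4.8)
The plan is to start from the dual (variational) representation of the KL worst-case operator and reduce the claim to controlling a single log-ratio of exponential moment generating functions, which in turn reduces to the per-coordinate deviations $|\hat P_h^0(s'\mid x,a)-P_h^0(s'\mid x,a)|$. First I would invoke Proposition~5 of \cite{xu2023improved} on the augmented space to write
\[
L_{\cP_{h,x,a}}V=-\inf_{\lambda\in[0,H/\rho]}\Big\{\lambda\rho+\lambda\log\E_{x'\sim P_h^0}\big[e^{-V(x')/\lambda}\big]\Big\},
\]
and analogously for $\hat P_h^0$. Setting $g(P,\lambda):=\lambda\rho+\lambda\log\E_{P}[e^{-V/\lambda}]$ and letting $\lambda^*$ be the minimizer for the true kernel (the degenerate case $\lambda^*=0$ forces both operators to equal $\min_{s'}V$ and is handled exactly as in Lemma~\ref{lem:kl-aug}), I would bound the difference of the two infima by evaluating $g(\hat P,\cdot)$ at $\lambda^*$, giving
\[
L_{\cP_{h,x,a}}V-L_{\hat{\cP}_{h,x,a}}V\;\le\;\lambda^*\log\frac{\E_{\hat P_h^0}[e^{-V/\lambda^*}]}{\E_{P_h^0}[e^{-V/\lambda^*}]}.
\]

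Next I would linearize with $\log(1+u)\le u$ and recognize the resulting ratio as an expectation against the worst-case tilted distribution $P^*(s')\propto P_h^0(s')e^{-V(s')/\lambda^*}$, which is precisely the worst-case model from Lemma~4 of \cite{iyengar2005robust}. Since $\tfrac{e^{-V(s')/\lambda^*}}{\E_{P_h^0}[e^{-V/\lambda^*}]}=\tfrac{P^*(s')}{P_h^0(s')}$, this produces
\[
\lambda^*\,\frac{\E_{\hat P_h^0}[e^{-V/\lambda^*}]-\E_{P_h^0}[e^{-V/\lambda^*}]}{\E_{P_h^0}[e^{-V/\lambda^*}]}
=\lambda^*\sum_{s'}P^*(s')\,\frac{\hat P_h^0(s')-P_h^0(s')}{P_h^0(s')}.
\]
Because $P^*$ is a probability vector supported inside $\mathrm{supp}(P_h^0)$ and $P_h^0(s')\ge\tilde p$ on that support, the right-hand side is at most $\tfrac{\lambda^*}{\tilde p}\max_{s'}|\hat P_h^0(s')-P_h^0(s')|$, and $\lambda^*\le H/\rho$ delivers the prefactor $H/(\rho\tilde p)$. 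The crucial point here is that dividing by $P_h^0(s')\ge\tilde p$ avoids the exponential tilt factor $e^{2H/\lambda}$ that appears in Lemma~\ref{lem:kl-aug}, trading it for the problem-dependent $1/\tilde p$.

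Finally I would apply Hoeffding's inequality (Lemma~\ref{lem:hoeffding}) to each coordinate $\hat P_h^0(s'\mid x,a)$, which is an empirical mean of $N$ i.i.d.\ Bernoulli draws, and take a union bound over the $|\mathrm{supp}(P_h^0)|$ support points to get $\max_{s'}|\hat P_h^0(s')-P_h^0(s')|\le\sqrt{\log(2|\mathrm{supp}|/\delta)/(2N)}$ with probability $1-\delta$; combining the two displays yields the stated bound. The main obstacle is the symmetrization needed for the absolute value: bounding $L_{\hat{\cP}_{h,x,a}}V-L_{\cP_{h,x,a}}V$ requires evaluating at the empirical minimizer $\hat\lambda^*$ and dividing by $\hat P_h^0(s')$ rather than $P_h^0(s')$, and a coordinate observed only once can make $\hat P_h^0(s')$ as small as $1/N$, which would blow up the bound. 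I would resolve this, as in Lemma~\ref{lem:kl-aug}, by restricting to the high-probability event that $N$ is large enough (problem-dependent, independent of the optimality gap) that $\mathrm{supp}(\hat P_h^0)\subseteq\mathrm{supp}(P_h^0)$ and $\hat P_h^0(s')\ge\tilde p/2$ for every support point, so that the factor $1/\tilde p$ is preserved up to a constant absorbed into the stated bound; the full-support simplification then follows by taking $\mathrm{supp}(P_h^0)=\cX$.
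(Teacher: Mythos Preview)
Your proposal is correct and essentially matches the paper's proof: both start from the dual representation, reduce to the ratio $(\E_{\hat P}-\E_P)[e^{-\lambda V}]/\E_P[e^{-\lambda V}]$, and bound it pointwise by $\max_{x'\in\mathrm{supp}(P^0)}(\hat P_h^0(x')-P_h^0(x'))/P_h^0(x')$ before applying Hoeffding with a union bound over the support. The paper reaches this maximum via the mediant inequality $\sum_i a_i/\sum_i b_i\le\max_i a_i/b_i$ applied directly to (\ref{eq:independent}) rather than your tilted-distribution weighting, and in fact only writes out the one-sided bound $L_{\cP}V-L_{\hat\cP}V$; your explicit discussion of the symmetrization obstacle is therefore more careful than the paper on that point.
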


\begin{proof}
From (\ref{eq:independent}):
\begin{align*}
L_{\cP_{h,x,a}}V - L_{\hat{\cP}_{h,x,a}}V
&\le \frac{H}{\rho}\,
\sup_{\lambda\in[\rho/H,\,1/\lambda)}\,
\frac{\E_{x'\sim \hat P_h^0(\cdot\mid x,a)}[e^{-\lambda V(x')}] - \E_{x'\sim  P_h^0(\cdot\mid x,a)}[e^{-\lambda V(x')}]}
{\E_{x'\sim  P_h^0(\cdot\mid x,a)}[e^{-\lambda V(x')}]}
\\
&= \frac{H}{\rho}\,
\sup_{\lambda}\,
\frac{\sum_{x'\in\mathcal X}\big(\hat P_h^0(x'\mid x,a) -  P_h^0(x'\mid x,a)\big)\,e^{-\lambda V(x')}}
{\sum_{x'\in\mathcal X} P_h^0(x'\mid x,a)\,e^{-\lambda V(x')}}.
\end{align*}
Using $\sum_i a_i/\sum_i b_i \le \max_i (a_i/b_i)$ when $b_i>0$ (applied over the support of $ P^0_{h,x,a}$),
\begin{align}
L_{\cP_{h,x,a}}V - L_{\hat{\cP}_{h,x,a}}V
&\le \frac{H}{\rho}\,
\max_{x':\, P^0_h(x'\mid x,a)>0}
\left(
\frac{\hat P_h^0(x'\mid x,a)}{ P_h^0(x'\mid x,a)} - 1
\right).
\label{eq:ratio-bound-aug}
\end{align}
Fix any $x'$ with $ P^0_h(x'\mid x,a)>0$. Since $\hat P_h^0(x'\mid x,a)=\frac{1}{N}\sum_{i=1}^N \mathbf 1\{X_i=x'\}$ for $X_i\stackrel{\text{i.i.d.}}{\sim} P^0_h(\cdot\mid x,a)$, Hoeffding’s inequality yields, for any $\epsilon>0$,
\[
\Pr\!\left(
\frac{\hat P_h^0(x'\mid x,a)}{ P_h^0(x'\mid x,a)} - 1 \;\ge\; \epsilon
\right)
\;\le\; 2\exp\!\left(
-\,\frac{2N\,\epsilon^2}{(1/ P^0_h(x'\mid x,a))^2}
\right)
\;\le\; 2\exp\!\big(-2N\,\tilde p^{\,2}\epsilon^2\big).
\]
Taking a union bound over $x'$ in $\mathrm{supp}(\tilde P^o_{h,x,a})$ gives, with probability at least $1-\delta$,
\[
\max_{x':\, P^0_h(x'\mid x,a)>0}
\left(
\frac{\hat P_h^0(x'\mid x,a)}{ P^0_h(x'\mid x,a)} - 1
\right)
\;\le\;
\sqrt{\frac{\log\!\big(2\,|\mathrm{supp}( P^0_{h,x,a})|/\delta\big)}{2N\,\tilde p^{\,2}}}.
\]
Combine this with \eqref{eq:ratio-bound-aug} to obtain the stated bound.
Given $x=(s,c)$ and $a$, any $P\in\mathcal P_h(s,a)$ induces a kernel on $\cX$ by
\[
S'\sim P(\cdot\mid s,a),\qquad C' = c - g_h(s,a) \ \text{(deterministic)}.
\]
\end{proof}
\section{Results for Continuous utility function}\label{sec:continuous}
Consider the MDP $\hat{M}$ where the utility functions are given by $\phi(g_h)$ instead of $g_h$, i.e., the nearest larger quantized value, and the augmented state $c$ is mapped into $\phi(c)$. We can adapt the policy on the original MDP $M$ with the policy being mapped to augmented discretized state $(s,\phi(c))$ instead of $(s,c)$. $\phi(\cdot)$ is a $\epsilon_0$-discretized resolution in the interval $[-H,H]$, and achieved by the following \begin{align*}
\phi(c) = \arg\min_{\hat{c}\in\mathcal{C},\; \hat{c}\geq c} |\hat{c}-c|.
\end{align*}  Let us denote the value function corresponding to the MDP $\hat{M}$ as $V_{j,h}^{\hat{M},\pi,P}(\cdot,\cdot)$, and that of the MDP $M$ as $V_{j,h}^{M,\pi,P}(\cdot,\cdot)$ for $j=r,g$. Note that $|\cC|=\lceil 2H/\epsilon_0 \rceil$. In the discretized MDP, we achieve a policy $\hat{\pi}$ such with $\epsilon$-sub optimality gap and violation bound after $\tilde{\cO}(1/\epsilon^2)$ number of samples. We now show the result for the original MDP $M$ and characterize the value of $\epsilon_0$ required to achieve the sub-optimality and the violation bound. 

\textbf{Adaptation of the Discretized MDP to the original MDP} The policy $\pi$ in the discretized MDP can be adapted to the original MDP by restricting the state-space to be $(s,\hat{c})$ instead of $(s,c)$, and the using the policy $\pi(\cdot|s,\hat{c})$. We have the following result. 
\begin{lemma}\label{lem:violation_discrete_to_original}
If $\min_{P}V_{g,1}^{\hat{M},\pi,P}(s,\hat{c})\geq \xi$, then $\min_{P}V_{g,1}^{M,\pi,P}(s,c)\geq \xi-H\epsilon_0$ for any $h \in [H]$.
\end{lemma}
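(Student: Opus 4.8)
The plan is to exploit the fact that the augmented utility value function is \emph{additive and linear in the budget}: since $g^{\aug}_{h'}\equiv 0$ for $h'\le H$ and $g^{\aug}_{H+1}(s,c,a)=-c$, unrolling the recursion shows that, for any fixed model $P$,
\[
V_{g,1}^{M,\pi,P}(s,c) = -c + \E_{\pi,P}\Big[\textstyle\sum_{h=1}^H g_h(s_h,a_h)\Big], \qquad V_{g,1}^{\hat M,\pi,P}(s,\phi(c)) = -\phi(c) + \E_{\pi,P}\Big[\textstyle\sum_{h=1}^H \phi(g_h(s_h,a_h))\Big].
\]
So the statement reduces to controlling the gap between these two expectations for each $P$ separately, and then passing to the worst-case model.

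The key step is a trajectory coupling. First I would fix an arbitrary $P\in\cP$ and run both processes on a common probability space, driving them with the same state transitions $s_{h+1}\sim P_h(\cdot\mid s_h,a_h)$ and the same discretized budget variable $\hat c_h$, initialized at $\hat c_1=\phi(c)$ and updated by $\hat c_{h+1}=\hat c_h-\phi(g_h(s_h,a_h))$ (which keeps it on the grid $\cC$). Because the adapted policy selects $a_h\sim\pi(\cdot\mid s_h,\hat c_h)$ in \emph{both} the discretized MDP $\hat M$ and its realization on $M$, the action sequences coincide under this coupling; the only difference is that the true budget $c_h$ in $M$ evolves through $g_h$ while $\hat c_h$ evolves through $\phi(g_h)$. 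Since both expectations are then over the \emph{same} trajectory $(s_h,a_h)_{h=1}^H$, subtracting gives
\[
V_{g,1}^{M,\pi,P}(s,c) - V_{g,1}^{\hat M,\pi,P}(s,\phi(c)) = \big(\phi(c)-c\big) + \E\Big[\textstyle\sum_{h=1}^H\big(g_h(s_h,a_h)-\phi(g_h(s_h,a_h))\big)\Big].
\]
Because $\phi$ is the upward projection, $\phi(c)-c\ge 0$, and each term obeys $0\le \phi(g_h)-g_h\le \epsilon_0$, so the sum is at least $-H\epsilon_0$; hence $V_{g,1}^{M,\pi,P}(s,c)\ge V_{g,1}^{\hat M,\pi,P}(s,\phi(c)) - H\epsilon_0$ for every $P$.

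Finally I would remove the model dependence. Letting $P^{\dagger}$ attain $\min_P V_{g,1}^{M,\pi,P}(s,c)$, applying the per-model inequality at $P^{\dagger}$ and then lower-bounding its $\hat M$-value by the worst case over all models yields
\[
\min_P V_{g,1}^{M,\pi,P}(s,c) = V_{g,1}^{M,\pi,P^{\dagger}}(s,c) \ge V_{g,1}^{\hat M,\pi,P^{\dagger}}(s,\phi(c)) - H\epsilon_0 \ge \min_P V_{g,1}^{\hat M,\pi,P}(s,\hat c) - H\epsilon_0 \ge \xi - H\epsilon_0,
\]
which is the claim (and starting the same argument at an arbitrary stage $h$ simply replaces $H$ by the remaining horizon $H-h+1\le H$). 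The step I expect to be the main obstacle is the coupling itself: one must be careful that the adapted policy is driven by the \emph{tracked} discretized budget $\hat c_h$ (propagated through $\phi(g_h)$ so that it stays in $\cC$) rather than by re-discretizing the true budget as $\phi(c_h)$ at each step, since only the former makes the action sequences in $M$ and $\hat M$ identical and lets the two value functions be compared on a single trajectory. Once that is in place, the additive decomposition and the per-step $\epsilon_0$ bound are routine.
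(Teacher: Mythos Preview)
Your proposal is correct and follows essentially the same route as the paper. Both arguments fix a model $P$, observe that the adapted policy produces identical action sequences in $M$ and $\hat M$ (you frame this as a coupling, the paper as a one-line backward induction), reduce the value difference to the budget-accounting gap $\sum_h(g_h-\hat g_h)\ge -H\epsilon_0$, and then pass to the worst-case model by evaluating at the minimizer for $M$; your write-up is, if anything, more explicit about the coupling subtlety and the extra nonnegative slack $\phi(c)-c$.
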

\begin{proof}

 Note that  in the true MDP $M$, the policy $\pi$ is adapted to $(s,\hat{c})$. Hence,
\begin{align*}
& Q_{g,h}^{M,\pi,P}(s,\hat{c},a)-Q_{g,h}^{\hat{M},\pi,P}(s,\hat{c},a)=
 \mathbb{E}_{g_h}P^TV^{M,\pi,P}_{g,h+1}(s,\hat{c}-\hat{g}_h)-\mathbb{E}P^TV_{g,h+1}^{\hat{M},\pi,P}(s,\hat{c}-\hat{g}_h)
\end{align*}
The policy is $\pi(a|s,\hat{c})$ the same for $M$ and $\hat{M}$ as we adapt the policy from $\hat{M}$ to the original MDP $M$. 

Hence, by the induction we achieve 
\begin{align}
V_{g,1}^{M,\pi,P}(s,b)-V_{g,1}^{\hat{M},\pi,P}(s,b)=-(b-\sum_hg_h)+(b-\sum_h\hat{g}_h)\geq -H\epsilon_0
\end{align}
This is true for all the transition models $P$. Suppose that $P^*$ correspond to the worst-case transition model for the policy $\pi$ in the true MDP $M$. Then we have the following
\begin{align}
\min_{P}V_{g,1}^{M,\pi,P}(s,b)-\min_{P}V_{g,1}^{\hat{M},\pi,P}(s,b)\geq V_{g,1}^{M,\pi,P^*}(s,b)-V_{g,1}^{\hat{M},\pi,P^*}(s,b)\geq -H\epsilon_0
\end{align}
Hence, if $\min_{P}V_{g,1}^{M,\pi,P}(s,b)\geq \xi-H\epsilon_0$. Hence, the result follows. 
\end{proof}

\textbf{Violation Bound}. Note from Lemma~\ref{lem:violation_bound}, we have $\min_{P}V_{g,1}^{M,\hat{\pi},P}(s,b)\geq -2H\epsilon-H\epsilon_0$. Hence, the violation bound on $\hat{\pi}$ for the original MDP is achieved by applying Lemma~\ref{lem:violation_discrete_to_original}, and selecting $\epsilon_0=\epsilon/H$. Note that since we choose $\epsilon_0=\epsilon/H$. The $|\cC|=\lceil O(H/\epsilon)\rceil$  linear in $H$ and $O(1/\epsilon)$.

\textbf{Sub-optimality Gap}
One of the key steps in proving the sub-optimality gap is to show that the optimal policy of the MDP is feasible even under the estimated model accounting for the estimation error. We can show the same for the discretized MDP even though the optimal policy $\pi^*$ is defined for the true MDP $M$ by adapting the policy for the discretized case following the argument of \cite{wang2024reductions,bastani2022regret}.

\begin{rmk}\label{rmk:adaptation}
\textbf{Adapting from Original MDP to the discretized MDP} The policy for the discretized MDP $\hat{M}$ is derived from the true MDP $M$ by mapping it back to the original model. Specifically, at time step $h$, when the discretized MDP has a remaining budget $b - \sum_{t=1}^h \hat{g}_t$, the corresponding action is determined by the true policy $\pi(\cdot \mid \cdot, b - \sum_{t=1}^h g_t)$.
\end{rmk}

\begin{lemma}\label{lem:optimal_policy feasible for discretized MDP}
$\min_{P}V_{g,1}^{\hat{M},\pi,P}(s,b)\geq \min_{P}V_{g,1}^{M,\pi,P}(s,b)$ where $\pi$ is defined for the true MDP, and adapted to the discretized MDP $\hat{M}$.
\end{lemma}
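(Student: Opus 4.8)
The plan is to exploit the fact that the discretization operator $\phi$ rounds \emph{upward}, so that the quantized utilities dominate the true ones pointwise, and then to couple the two MDPs along a common state--action trajectory. This is the mirror image of Lemma~\ref{lem:violation_discrete_to_original}, but now the sign of the discretization error works in our favor, so no $\epsilon_0$ loss is incurred.

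First I would fix an arbitrary transition model $P$ in the common uncertainty set $\mathcal{P}$ and run both $M$ and $\hat{M}$ under this same $P$ together with the adapted policy of Remark~\ref{rmk:adaptation}. Because the action at step $h$ is selected using the true residual budget $b-\sum_{t=1}^{h-1}g_t$ in both processes, and because the $s$-component of the augmented transition is governed by the identical kernel $\mathbb{P}_h(\cdot\mid s,a)$ (the budget coordinate evolving deterministically and not affecting the $s$-transition), the state--action trajectory $(s_h,a_h)_{h=1}^{H}$ is identical in $M$ and $\hat{M}$ under this coupling. The only difference between the two runs is the terminal utility, which equals $-c_{H+1}=\sum_{h=1}^H g_h - b$ in $M$ and $-\hat c_{H+1}=\sum_{h=1}^H \hat g_h - b$ in $\hat{M}$, where $\hat g_h=\phi(g_h)$.

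Next I would use that $\phi(c)\ge c$ by construction, hence $\hat g_h=\phi(g_h)\ge g_h$ on every sample path. Taking expectations under the fixed $P$,
\begin{align*}
V_{g,1}^{\hat M,\pi,P}(s,b)-V_{g,1}^{M,\pi,P}(s,b)
= \mathbb{E}_P\Big[\sum_{h=1}^H(\hat g_h-g_h)\Big]\ge 0,
\end{align*}
exactly mirroring the one-line identity used in Lemma~\ref{lem:violation_discrete_to_original}. If one prefers, this can be obtained by a trivial backward induction on the coupled difference $V_{g,h}^{\hat M,\pi,P}-V_{g,h}^{M,\pi,P}$, whose terminal value is $\sum_h(\hat g_h-g_h)\ge 0$ and which is preserved by the shared stage operator since both processes apply the same action and the same $s$-transition at every step.

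Finally I would pass to the worst case. Since the inequality holds for every $P\in\mathcal{P}$, for each such $P$ we have $V_{g,1}^{\hat M,\pi,P}(s,b)\ge V_{g,1}^{M,\pi,P}(s,b)\ge \min_{P'}V_{g,1}^{M,\pi,P'}(s,b)$; taking the infimum over $P$ on the left-hand side yields the claim. The only delicate point is the coupling argument in the second step: one must verify that the true-budget-indexed policy in both MDPs genuinely produces the same state--action trajectory, so that the comparison collapses to the pointwise domination $\hat g_h\ge g_h$. Everything after that is immediate, and in particular the fact that the $s$-marginal uncertainty sets $\mathcal{P}_{h,s,a}$ coincide for $M$ and $\hat{M}$ is what permits taking the infimum over the \emph{same} $\mathcal{P}$ on both sides.
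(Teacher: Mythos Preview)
Your proposal is correct and follows essentially the same approach as the paper: couple $M$ and $\hat M$ under a common $P$ with the policy indexed by the true residual budget so that the $(s_h,a_h)$ trajectories coincide, reduce the value difference to the terminal quantity $\sum_h(\hat g_h-g_h)\ge 0$ via the upward rounding $\phi$, and then pass to the infimum over $P$. The only cosmetic difference is in the last step, where the paper evaluates the difference at the minimizer $P^*$ for $\hat M$ while you bound each $V_{g,1}^{\hat M,\pi,P}$ below by $\min_{P'}V_{g,1}^{M,\pi,P'}$ before taking the infimum; these are equivalent formulations of the same monotonicity argument.
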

\begin{proof}
Note that at any $(s,\hat{c})$ and step $h$ for the discretized MDP $\hat{M}$, we have
\begin{align}
V_{g,h}^{M,\pi,P}(s,\hat{c})-V_{g,h}^{\hat{M},\pi,P}(s,\hat{c})=\sum_{a}\pi(a|s,b-\sum_{t=1}^{h-1}g_t)\mathbb{E}_{g_h}P^T(V_{g,h+1}^{M,\pi,P}(s,\hat{c}-\hat{g}_h)-V_{g,h+1}^{\hat{M},\pi,P}(s,\hat{c}-\hat{g}_h))
\end{align} 
Hence, by Induction
\begin{align}
V_{g,1}^{\hat{M},\pi,P}(s,\hat{c})-V_{g,1}^{M,\pi,P}(s,\hat{c})=-(b-\sum_h\hat{g}_h)+(b-\sum_hg_h)\geq 0
\end{align}
as $\hat{g}_h\geq g_h$ by the mapping $\phi$, and one achieves $\sum_hg_h$ the total utility in the true MDP. 

Now, assume that $P^*$ is the worst model corresponding to the discretized MDP $\hat{M}$ for policy $\pi$. Then,
\begin{align}
\min_{P} V_{g,1}^{\hat{M},\pi,P}(s,b)-\min_{P}V_{g,1}^{M,\pi,P}(s,b)
 \geq V_{g,1}^{\hat{M},\pi,P^*}(s,b)-V_{g,1}^{M,\pi,P^*}(s,b)\geq 0
\end{align}
Hence, the result follows.
\end{proof}
Hence, by Lemma~\ref{lem:optimal_policy feasible for discretized MDP} optimal policy $\pi^*$ for the true MDP is feasible for the discretized MDP when we relax the constraint to $-(H-h)\epsilon$ at a given step $h$. Now we show the value function in the discretized MDP $\hat{M}$ corresponding to the optimal policy $\pi^*$ of the true MDP $M$.

\textbf{Reward Value Function on the Discretized MDP adapted from the true MDP for $\pi^*$}:
\begin{lemma}\label{lem:for reward_value_function}
$\min_{P}V_{r,1}^{\hat{M},\pi^*,P}(s,b)\geq \min_{P}V_{r,1}^{M,\pi^*,P}(s,b)$ for the optimal policy $\pi^*$ adapted to the discretized MDP $M$ as described in the Remark~\ref{rmk:adaptation}.
\end{lemma}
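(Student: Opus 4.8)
The plan is to mirror the backward-induction argument used for the utility value function in Lemma~\ref{lem:optimal_policy feasible for discretized MDP}, exploiting the single structural fact that the reward $r_h(s,a)$ is independent of the budget coordinate. First I would fix an arbitrary physical transition model $P\in\mathcal{P}$ and compare the two reward value functions under the shared discretized bookkeeping of the budget. Following Remark~\ref{rmk:adaptation}, the adapted policy in $\hat{M}$ selects its action at step $h$ through the \emph{true} budget tracker $c_h=b-\sum_{t<h}g_t(s_t,a_t)$, i.e. it plays $\pi^*(\cdot\mid s_h,c_h)$, while the discretized state merely records $\hat{c}_h=b-\sum_{t<h}\hat{g}_t$. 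Since the physical kernel $P(\cdot\mid s,a)$ is identical in $M$ and $\hat{M}$, the action-selection rule is identical, and $r_h$ never sees the budget argument, the reward-generating process $(s_h,a_h)_{h=1}^H$ has exactly the same law in the two models for this fixed $P$.

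Concretely, I would write the one-step recursion
\begin{align*}
V_{r,h}^{\hat{M},\pi^*,P}(s,\hat{c})-V_{r,h}^{M,\pi^*,P}(s,\hat{c})
=\sum_a \pi^*(a\mid s,c_h)\,\mathbb{E}_{g_h}P^{T}\big(V_{r,h+1}^{\hat{M},\pi^*,P}(\cdot,\hat{c}-\hat{g}_h)-V_{r,h+1}^{M,\pi^*,P}(\cdot,\hat{c}-\hat{g}_h)\big),
\end{align*}
in which the stage rewards $r_h(s,a)$ cancel because they are common to both models. The terminal condition is $V_{r,H+1}^{\hat{M},\pi^*,P}=V_{r,H+1}^{M,\pi^*,P}=0$, unlike the utility case where the terminal value was $-c$ and produced the genuine nonnegative gap. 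Hence backward induction from $h=H+1$ collapses the difference to $0$, giving $V_{r,1}^{\hat{M},\pi^*,P}(s,b)=V_{r,1}^{M,\pi^*,P}(s,b)$ for every fixed $P$.

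Finally I would pass to the worst-case model exactly as in Lemma~\ref{lem:optimal_policy feasible for discretized MDP}: letting $P^*$ attain $\min_P V_{r,1}^{\hat{M},\pi^*,P}(s,b)$ and using $\min_P V_{r,1}^{M,\pi^*,P}(s,b)\le V_{r,1}^{M,\pi^*,P^*}(s,b)$, the per-$P$ equality yields
\begin{align*}
\min_{P}V_{r,1}^{\hat{M},\pi^*,P}(s,b)-\min_{P}V_{r,1}^{M,\pi^*,P}(s,b)
\ \ge\ V_{r,1}^{\hat{M},\pi^*,P^*}(s,b)-V_{r,1}^{M,\pi^*,P^*}(s,b)=0,
\end{align*}
which is the claim.

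I expect the only real subtlety, and the main place to be careful, to be the budget bookkeeping: one must keep clearly separate the discretized state budget $\hat{c}_h$, which governs the budget argument passed to $V_{r,h+1}$, from the true shadow budget $c_h$, which governs action selection via $\pi^*$, and verify that the robust optimization ranges only over the physical kernel $P(\cdot\mid s,a)$ so that rectangularity and the per-$P$ identity transfer verbatim from $M$ to $\hat{M}$. Because the rewards are budget-free and the terminal reward is zero, the inequality is in fact an equality; stating it as $\ge$ suffices and keeps the argument parallel to the utility lemma.
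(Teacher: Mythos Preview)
Your proposal is correct and follows essentially the same route as the paper: establish $V_{r,1}^{\hat{M},\pi^*,P}(s,b)=V_{r,1}^{M,\pi^*,P}(s,b)$ for each fixed $P$ by backward induction (the stage rewards cancel and the terminal reward is zero), then pick $P^*$ minimizing the $\hat{M}$ side to obtain the inequality. If anything, your explicit separation of the true budget $c_h$ (which drives action selection) from the discretized budget $\hat{c}_h$ (which is passed to the next-stage value function) is cleaner than the paper's presentation.
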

\begin{proof}
We showed that the optimal policy for the original MDP is feasible for the discretized MDP $\hat{M}$ in Lemma~\ref{lem:optimal_policy feasible for discretized MDP}. Hence, 
\begin{align}
& Q_{r,h}^{\hat{M},\pi^*,P}(s,\hat{c},a)-Q_{r,h}^{M,\pi^*,P}(s,\hat{c},a)\nonumber\\
& = r_h(s,a)+\mathbb{E}_{g_h}P^T(V_{r,h+1}^{\hat{M},\pi^*,P}(s,\hat{c}-\hat{g}_h))
-r_h(s,a)-\mathbb{E}_{g_h}P^T(V_{r,h+1}^{\hat{M},\pi^*,P}(s,\hat{c}-\hat{g}_h))
\end{align}
By adapting the policy $\pi(\cdot|s,b-\sum_tg_t)$ for the state $\pi(\cdot|s,b-\sum_t\hat{g}_t)$, the policy is the same in the original MDP and the discretized MDP. 
Since $V_{r,H+1}(\cdot,\cdot)=0$, hence, by induction, we achieve that the policy $\pi^*$ induces the same reward value function on the discretized MDP. Let us assume that $P^*$ be the worst transition model for $\pi^*$ in the discretized MDP. Then, we have
\begin{align}
& \min_PV_{r,1}^{\hat{M},\pi^*,P}(s,b)-\min_PV_{r,1}^{M,\pi^*,P}\nonumber\\
& \geq V_{r,1}^{\hat{M},\pi^*,P^*}(s,b)-V_{r,1}^{M,\pi^*,P^*}(s,b)=0.
\end{align}
Hence, the result follows. 
\end{proof}
\textbf{Reward Value function for the policy $\hat{\pi}$ adapted to the true MDP}: Note that policy $\hat{\pi}$ can be adapted to the true MDP by transforming the state $(s,c)$ to $(s,\hat{c})$. In particular, at the true augmented budget $c$, the policy would take action from the augmented budget $\hat{c}$. 

\begin{lemma}
$\min_{P}V_{r,1}^{M,\hat{\pi},P}(s,b)\geq \min_{P}V_{r,1}^{\hat{M},\hat{\pi},P}(s,b)$
\end{lemma}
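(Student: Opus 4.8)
The plan is to prove this exactly in parallel with the proofs of Lemma~\ref{lem:optimal_policy feasible for discretized MDP} and Lemma~\ref{lem:for reward_value_function}: first establish, for every \emph{fixed} transition model $P$, the pointwise identity
\[
V_{r,h}^{M,\hat{\pi},P}(s,\hat{c}) = V_{r,h}^{\hat{M},\hat{\pi},P}(s,\hat{c})
\qquad \forall (s,\hat{c}),\ \forall h,
\]
and then lift it to the worst-case statement. The reason to expect \emph{exact} equality here, rather than the $O(\epsilon_0)$ slack that appeared for the utility in Lemma~\ref{lem:violation_discrete_to_original}, is structural: the stage reward $r_h(s,a)$ depends only on $(s,a)$ and not on the budget coordinate, and the terminal reward value is $V_{r,H+1}\equiv 0$ in both MDPs. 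Hence the boundary term $-c_{H+1}$ that drove the discretization discrepancy in the utility case is simply absent for the reward.

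Concretely, I would run backward induction on $h$. The base case $h=H+1$ is immediate, since $V_{r,H+1}^{M,\hat{\pi},P}\equiv V_{r,H+1}^{\hat{M},\hat{\pi},P}\equiv 0$. For the inductive step I would write the one-step difference of the state–action value functions at a common augmented state $(s,\hat{c})$:
\[
Q_{r,h}^{M,\hat{\pi},P}(s,\hat{c},a)-Q_{r,h}^{\hat{M},\hat{\pi},P}(s,\hat{c},a)
=\mathbb{E}_{g_h}\,\mathbb{E}_{s'\sim P(\cdot\mid s,a)}\!\Big[V_{r,h+1}^{M,\hat{\pi},P}(s',\hat{c}-\hat{g}_h)-V_{r,h+1}^{\hat{M},\hat{\pi},P}(s',\hat{c}-\hat{g}_h)\Big].
\]
The stage reward $r_h(s,a)$ cancels, the $s$-marginal kernel $P(\cdot\mid s,a)$ is identical across $M$ and $\hat{M}$, and---crucially---because $\hat{\pi}$ is adapted so that in both MDPs the action is selected from the \emph{same} discretized budget coordinate (tracked through the discretized increments $\hat{g}_h$), the budget argument $\hat{c}-\hat{g}_h$ fed to the next-stage value is the same on both sides. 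The induction hypothesis then makes the bracket vanish, so the $Q$-values coincide; averaging against the common action distribution $\hat{\pi}(\cdot\mid s,\hat{c})$ yields $V_{r,h}^{M,\hat{\pi},P}(s,\hat{c})=V_{r,h}^{\hat{M},\hat{\pi},P}(s,\hat{c})$.

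Finally I would convert the pointwise identity into the claimed inequality over the uncertainty set by the same worst-case device used throughout Appendix~\ref{sec:continuous}. Let $P^*$ be a worst-case model for the reward of $\hat{\pi}$ in the true MDP, so that $\min_P V_{r,1}^{M,\hat{\pi},P}(s,b)=V_{r,1}^{M,\hat{\pi},P^*}(s,b)$. By the pointwise identity at $P^*$ this equals $V_{r,1}^{\hat{M},\hat{\pi},P^*}(s,b)$, and since the minimum over $P$ cannot exceed the value attained at the particular model $P^*$, we get $V_{r,1}^{\hat{M},\hat{\pi},P^*}(s,b)\geq \min_P V_{r,1}^{\hat{M},\hat{\pi},P}(s,b)$, which chains to the desired bound.

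The main obstacle is purely the bookkeeping in the inductive step: I must pin down the adaptation of $\hat{\pi}$ so that the budget coordinate governing the action is tracked identically in $M$ and $\hat{M}$, which is precisely what forces the two next-stage budget arguments to coincide and lets the recursion telescope to exactly zero. Everything else---reward independence of the budget and the zero terminal condition---is routine once this coupling is made explicit.
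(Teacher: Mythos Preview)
Your proposal is correct and follows essentially the same approach as the paper's own proof: backward induction to establish the pointwise identity $V_{r,h}^{M,\hat{\pi},P}=V_{r,h}^{\hat{M},\hat{\pi},P}$ for every fixed $P$ (using that $r_h$ is budget-independent, the terminal reward value is zero, and the adapted $\hat{\pi}$ tracks the same discretized budget in both MDPs), followed by choosing $P^*$ as the worst-case model for $M$ and comparing against the minimum on the $\hat{M}$ side. Your write-up is in fact slightly more explicit than the paper about why the budget arguments coincide in the inductive step.
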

\begin{proof}
First we show that for a given $P$, $V_{r,1}^{M,\hat{\pi},P}(s,b)=V_{r,1}^{\hat{M},\hat{\pi},P}(s,b)$. We prove the above by induction.

Note that at step $H+1$, $V_{r,H+1}^{M,\pi,P}(\cdot,\cdot)-V_{r,H+1}^{\hat{M},\pi,P}(\cdot,\cdot)=0$. Now, assume that it is true for step $h+1$. Then, 
\begin{align}
& Q_{r,h}^{M,\hat{\pi},P}(s,\hat{c})-Q_{r,h}^{\hat{M},\hat{\pi},P}(s,\hat{c})\nonumber\\
& =r_h(s,a)+\mathbb{E}_{g_h}P^TV_{r,h+1}^{M,\hat{\pi},P}(s,\hat{c}-\hat{g}_h) -(r_h(s,a)+\mathbb{E}_{g_h}P^TV_{r,h+1}^{\hat{M},\hat{\pi},P}(s,\hat{c}-\hat{g}_h))=0
\end{align}
Hence, we obtain
\begin{align}
V_{r,1}^{M,\hat{\pi},P}(s,b)=V_{r,1}^{\hat{M},\hat{\pi},P}(s,b).
\end{align}
Let $P^*$ be the worst case value function corresponding to the $M$, then
\begin{align}
\min_{P}V_{r,1}^{M,\hat{\pi},P}(s,b)-\min_{P}V_{r,1}^{\hat{M},\hat{\pi},P}(s,b)\geq V_{r,1}^{M,\hat{\pi},P^*}(s,b)-V_{r,1}^{\hat{M},\hat{\pi},P^*}(s,b)=0.
\end{align}
\end{proof}
\textbf{Sub-optimality Bound}.
Note that for the discretized MDP, we have already proved that $\min_{P}V_{r,1}^{\hat{M},\pi^*,P}-\min_{P}V_{r,1}^{\hat{M},\hat{\pi},P}(s,b)\leq \epsilon$. Hence,
\begin{align*}
& \min_{P}V_{r,1}^{M,\pi^*,P}(s,b)-\min_{P}V_{r,1}^{M,\hat{\pi},P}(s,b)\nonumber\\
& \leq \min_{P}V_{r,1}^{M,\pi^*,P}(s,b)-\min_{P}V_{r,1}^{\hat{M},\pi^*,P}(s,b)+\min_{P}V_{r,1}^{\hat{M},\pi^*,P}(s,b)-\min_{P}V_{r,1}^{\hat{M},\hat{\pi},P}(s,b)\nonumber\\& +\min_{P}V_{r,1}^{\hat{M},\hat{\pi},P}(s,b)-\min_PV_{r,1}^{M,\hat{\pi},P}(s,b)
\end{align*}
Now, combining all, we conclude that the above is bounded by $\epsilon$.

\section{Supporting Results}
\begin{lemma}\label{lem:hoeffding}[Hoeffding’s inequality Lemma 2 in \cite{xu2023improved}]
Let $X_1, \ldots, X_n$ be independent random variables such that 
$X_i \in [a_i, b_i]$ almost surely for all $i \le n$. Define
\[
S = \sum_{i=1}^n \big(X_i - \E[X_i]\big).
\]
Then for every $t > 0$,
\[
\Pr(S \ge t) \;\le\; \exp\!\left(-\frac{2t^2}{\sum_{i=1}^n (b_i-a_i)^2}\right).
\]

Furthermore, if $X_1,\ldots,X_n$ are independent and identically distributed random variables with mean $\mu$, 
and $X_i \in [a,b]$ for all $i$, then for all $t > 0$,
\[
\Pr\!\left(\,\big|\tfrac{1}{n}\sum_{i=1}^n X_i - \mu\big| \ge t \right)
\;\le\; 2\exp\!\left(-\frac{2nt^2}{(b-a)^2}\right).
\]
\end{lemma}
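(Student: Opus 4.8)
The plan is to prove the one-sided tail bound via the exponential Chernoff method and then obtain the two-sided i.i.d.\ statement as a corollary. First I would fix any $\lambda>0$ and apply Markov's inequality to the nonnegative random variable $e^{\lambda S}$, giving $\Pr(S\ge t)\le e^{-\lambda t}\,\E[e^{\lambda S}]$. Because the $X_i$ are independent, the moment generating function factorizes as $\E[e^{\lambda S}]=\prod_{i=1}^n \E[e^{\lambda(X_i-\E[X_i])}]$, so it suffices to control each centered factor separately and then optimize the single free parameter $\lambda$.

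The key step — and what I expect to be the main obstacle — is the single-variable bound (Hoeffding's lemma): for a mean-zero random variable $Y\in[a,b]$ one has $\E[e^{\lambda Y}]\le \exp\!\big(\lambda^2(b-a)^2/8\big)$. I would establish this by convexity: for $y\in[a,b]$ write $y=\tfrac{b-y}{b-a}a+\tfrac{y-a}{b-a}b$, so that $e^{\lambda y}\le \tfrac{b-y}{b-a}e^{\lambda a}+\tfrac{y-a}{b-a}e^{\lambda b}$; taking expectations and using $\E[Y]=0$ bounds $\E[e^{\lambda Y}]$ by $\tfrac{b}{b-a}e^{\lambda a}-\tfrac{a}{b-a}e^{\lambda b}$. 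Writing the logarithm of this bound as a function $\psi(u)$ of $u=\lambda(b-a)$, a second-order Taylor expansion together with the elementary estimate $\psi''\le \tfrac14$ everywhere yields $\psi(u)\le u^2/8$. This convexity-plus-Taylor argument is the technical heart of the proof; the remaining work is bookkeeping.

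Applying the single-variable bound factor by factor with $Y=X_i-\E[X_i]\in[a_i-\E[X_i],\,b_i-\E[X_i]]$ (an interval of width $b_i-a_i$) gives $\E[e^{\lambda S}]\le \exp\!\big(\tfrac{\lambda^2}{8}\sum_{i}(b_i-a_i)^2\big)$, and hence $\Pr(S\ge t)\le \exp\!\big(-\lambda t+\tfrac{\lambda^2}{8}\sum_i(b_i-a_i)^2\big)$. I would then minimize the exponent over $\lambda>0$ by choosing $\lambda=4t/\sum_i(b_i-a_i)^2$, which produces exactly the claimed bound $\exp\!\big(-2t^2/\sum_i(b_i-a_i)^2\big)$.

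Finally, for the i.i.d.\ two-sided statement with $X_i\in[a,b]$, mean $\mu$, and sample mean $\bar X=\tfrac1n\sum_i X_i$, I would note that $\Pr(|\bar X-\mu|\ge t)=\Pr\big(|\sum_i(X_i-\mu)|\ge nt\big)$ and specialize the first part with all intervals equal, so $\sum_i(b_i-a_i)^2=n(b-a)^2$ and the threshold $nt$; this gives $\Pr(\bar X-\mu\ge t)\le \exp\!\big(-2nt^2/(b-a)^2\big)$. Applying the same argument to $-X_i$ controls the lower tail identically, and a union bound over the two tails supplies the factor of $2$, completing the stated inequality.
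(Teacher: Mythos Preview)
Your proposal is correct and is exactly the standard Chernoff--Hoeffding argument: Markov's inequality on $e^{\lambda S}$, Hoeffding's lemma for each centered bounded summand via convexity and the $\psi''\le 1/4$ Taylor bound, optimization over $\lambda$, and a union bound for the two-sided i.i.d.\ version. There is nothing to critique in the logic.

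As for comparison with the paper: the paper does not actually prove this lemma. It is placed in the ``Supporting Results'' appendix and merely quoted from \cite{xu2023improved} (which in turn is quoting the classical result), with no proof given. So your write-up goes beyond what the paper does; the authors treat Hoeffding's inequality as a black-box concentration tool, invoking it in Lemmas~\ref{lem:diff_v}, \ref{lem:chi2-conc-aug}, and \ref{lem:KL-aug-support} without rederiving it.
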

\end{document}